\documentclass{article}

\usepackage{microtype}
\usepackage{graphicx}
\usepackage{subcaption}
\usepackage{booktabs} 
\usepackage{enumitem}

\usepackage{hyperref}

\usepackage[accepted]{icml2026}

\usepackage{amsmath}
\usepackage{amssymb}
\usepackage{mathtools}
\usepackage{amsthm}
\usepackage{algorithm,algorithmic}

\usepackage[capitalize,noabbrev]{cleveref}
\usepackage{tikz}
\usetikzlibrary{positioning,calc,fit,backgrounds,arrows.meta,decorations.pathreplacing}
\theoremstyle{plain}
\newtheorem{theorem}{Theorem}[section]
\newtheorem{proposition}[theorem]{Proposition}
\newtheorem{lemma}[theorem]{Lemma}

\theoremstyle{definition}

\theoremstyle{remark}

\newcommand{\RR}{\mathbb{R}}
\newcommand{\KL}{\mathrm{KL}}

\newcommand{\Schrodinger}{Schr\"odinger}
\usepackage[disable,textsize=tiny]{todonotes}

\usepackage{xurl}
\usepackage{multirow}
\usepackage[table,dvipsnames]{xcolor}
\usepackage{colortbl}
\usepackage{makecell}
\definecolor{texBest}{HTML}{2E8B57}    
\definecolor{texBestBg}{HTML}{D4EDDA}  
\definecolor{texHi}{HTML}{1F4E79}      
\definecolor{texHiBg}{HTML}{DEEBF7}    
\definecolor{texDelta}{HTML}{C0392B}   
\definecolor{texPos}{HTML}{27AE60}     
\definecolor{texRowAlt}{HTML}{F7F7F7}  
\definecolor{texHeader}{HTML}{34495E}  
\definecolor{texBoxBg}{HTML}{F4F8FB}   
\definecolor{texBoxRule}{HTML}{1F4E79} 

\usepackage[most]{tcolorbox}
\newtcolorbox{takeaway}[1][]{
  enhanced, breakable, boxrule=0pt, frame hidden,
  colback=texBoxBg, sharp corners,
  borderline west={2.2pt}{0pt}{texBoxRule},
  left=6pt, right=5pt, top=4pt, bottom=4pt,
  fontupper=\small,
  #1
}
\newcommand{\takeawaylead}[1]{\textcolor{texBoxRule}{\textbf{#1}}\ }

\icmltitlerunning{Text Has Curvature}

\definecolor{texBlue}{RGB}{0,114,178}
\definecolor{texOrange}{RGB}{213,94,0}
\definecolor{texGray}{RGB}{90,90,90}
\definecolor{texLight}{RGB}{240,240,240}

\tikzset{
  >=Latex,
  panelbox/.style={rounded corners, line width=0.45pt},
  tkn/.style={draw, rounded corners=1pt, inner sep=1.5pt, minimum height=10pt, font=\scriptsize},
  lbl/.style={font=\scriptsize, text=texGray},
  tiny/.style={font=\tiny, text=texGray},
  arrow/.style={->, line width=0.55pt},
  axis/.style={line width=0.4pt, draw=texGray},
  barR/.style={draw=none, fill=texBlue},
  barC/.style={draw=none, fill=texOrange},
  muted/.style={draw=texGray, fill=texLight},
}

\begin{document}

\twocolumn[
  \icmltitle{Text Has Curvature}



  \icmlsetsymbol{equal}{*}

  \begin{icmlauthorlist}
    \icmlauthor{Karish Grover}{Carnegie Mellon University,Meta}
    \icmlauthor{Hanqing Zeng}{Meta}
    \icmlauthor{Yinglong Xia}{Meta}
    \icmlauthor{Christos Faloutsos}{Carnegie Mellon University}
    \icmlauthor{Geoffrey J. Gordon}{Carnegie Mellon University}
  \end{icmlauthorlist}

  \icmlaffiliation{Carnegie Mellon University}{Carnegie Mellon University}
  \icmlaffiliation{Meta}{Meta}

  \icmlcorrespondingauthor{Karish Grover}{karishg@cs.cmu.edu}

  \icmlkeywords{Machine Learning, ICML, Text Curvature, Geometric Deep Learning}

  \vskip 0.3in
]



\printAffiliationsAndNotice{}  

\begin{abstract}
    Does natural language text have an intrinsic curvature? Language is increasingly modeled in curved geometries—hyperbolic spaces for hierarchy, mixed-curvature manifolds for compositional structure—yet a basic scientific question remains unresolved: \emph{what does curvature mean for text itself}, in a way that is native to language rather than an artifact of the embedding space we choose? We argue that text does indeed have curvature, and show how to \textit{detect} it, \textit{define} it, and \textit{use} it. 
    To this end, we propose \textbf{Texture}, a text-native, word-level discrete curvature signal, and make three contributions.
    \textbf{(a) Existence:} We provide empirical and theoretical certificates that semantic inference in natural corpora is non-flat.
    \textbf{(b) Definition:} We define Texture as a signed two-axis curvature of the word-in-context belief field---the \emph{differential of reconciliation} between prefix and suffix---measuring, via a debiased Schr\"odinger transport divergence, whether adding context from one side \emph{contracts} the semantic effect of context from the other side (\emph{focus}, positive) or \emph{expands} it into competing continuations (\emph{fan-out}, negative).
    \textbf{(c) Utility:} Texture is actionable: it serves as a general-purpose measurement and control primitive enabling \emph{geometry without geometric training}; we instantiate it on two representative tasks, improving long-context inference through curvature-guided compression and retrieval-augmented generation through curvature-guided routing.
    Together, our results establish a text-native curvature paradigm, making Texture practically useful.
  \end{abstract}
\section{Introduction}
\label{sec:introduction}

Language is not merely a bag of words: it expresses hierarchies, multi-way relations, and nonlocal constraints (agreement, coreference, long-range dependencies).
A growing body of work has therefore advocated \emph{non-Euclidean} geometric foundations for language representations, including hyperbolic and mixed-curvature modeling \citep{nickel2017poincare,gu2018learning}.
Recent proposals even train large language models directly in hyperbolic spaces or with curvature-mixture architectures, motivated by the claim that text has inherent curved structure \citep{he2025position,chen2024hyperbolic}.
However, these efforts largely treat curvature as an \emph{architectural or embedding choice} rather than a \emph{text-native quantity}.
This leaves a foundational gap: \emph{before we train curvature-aware models, how do we know what the curvature of natural language text is?}

\paragraph{The gap: no text-native curvature formalism.}
For graphs, intrinsic notions of discrete curvature are well developed.
Ollivier--Ricci curvature defines curvature locally at each edge by comparing optimal-transport distances to combinatorial distances \citep{Ollivier2007RicciCO}; it has been applied to detect communities, identify bottlenecks, and improve message-passing in GNNs \citep{sia2019ollivier}.
\textbf{No analogous formalism exists for text.}
Current geometric approaches to language embed tokens or sequences onto a curved manifold and validate the choice by downstream performance \citep{ganea2018hyperbolic,gu2018learning,grover2025spectro}.
The assumption that natural language text ``has curvature'' is never formalized, tested, or defined---curvature remains an architectural prior rather than a measurable property.

\begin{figure*}[!t]
\centering

\definecolor{focusBlue}{RGB}{59, 130, 246}
\definecolor{fanoutOrange}{RGB}{249, 115, 22}
\definecolor{realGreen}{RGB}{34, 197, 94}
\definecolor{controlGray}{RGB}{156, 163, 175}
\definecolor{bgLight}{RGB}{248, 250, 252}
\definecolor{textDark}{RGB}{30, 41, 59}

\begin{subfigure}[t]{0.48\textwidth}
\centering
\begin{tikzpicture}[
    scale=0.72, transform shape,
    slot/.style={rectangle, draw=focusBlue, line width=1.5pt, minimum width=1.2cm, minimum height=0.7cm, fill=focusBlue!10},
    context/.style={rectangle, rounded corners=2pt, fill=bgLight, draw=controlGray!50, minimum height=0.6cm, font=\small},
    belief/.style={rectangle, rounded corners=3pt, fill=white, draw=controlGray, minimum width=1.8cm, minimum height=1.1cm},
    arrow/.style={->, >=stealth, line width=1pt, color=textDark}
]
    \node[context, minimum width=2.4cm] (left) at (-2.0, 2.2) {The experiment};
    \node[slot] (slot) at (0.35, 2.2) {\textbf{?}};
    \node[context, minimum width=4.0cm] (right) at (3.3, 2.2) {that the model generalized};
    
    \draw[arrow, color=focusBlue] (left.east) -- (slot.west);
    \draw[arrow, color=fanoutOrange] (right.west) -- (slot.east);
    \node[font=\scriptsize, color=textDark] at (-0.8, 2.75) {prefix};
    \node[font=\scriptsize, color=textDark] at (1.8, 2.75) {suffix};
    
    \node[belief] (muL) at (-1.5, 0.4) {};
    \node[belief] (muR) at (2.9, 0.4) {};
    
    \node[font=\scriptsize\bfseries, color=focusBlue] at (-1.5, 1.15) {$\mu^L$ (left belief)};
    \node[font=\scriptsize\bfseries, color=fanoutOrange] at (2.9, 1.15) {$\mu^R$ (right belief)};
    
    \foreach \i/\h in {0/0.25, 1/0.55, 2/0.85, 3/0.45, 4/0.2} {
        \fill[focusBlue!70] (-2.325+\i*0.35, -0.05) rectangle (-2.075+\i*0.35, -0.05+\h*0.7);
    }
    \foreach \i/\h in {0/0.3, 1/0.4, 2/0.5, 3/0.7, 4/0.9} {
        \fill[fanoutOrange!70] (2.075+\i*0.35, -0.05) rectangle (2.325+\i*0.35, -0.05+\h*0.7);
    }
    
    \draw[->, >=stealth, line width=0.8pt, dashed, color=focusBlue!60] (muL.north) to[bend left=20] (slot.south west);
    \draw[->, >=stealth, line width=0.8pt, dashed, color=fanoutOrange!60] (muR.north) to[bend right=20] (slot.south east);
    
    \node[font=\small\itshape, color=textDark, align=center] at (0.7, -0.7) {How do $\mu^L$ and $\mu^R$ compose?};

\end{tikzpicture}
\caption{\textbf{The word-in-context slot (local object).}
A prefix and a suffix each induce a belief distribution over plausible slot fillers ($\mu^L,\mu^R$).
Text curvature, in our view, is about \emph{how two-sided evidence composes} at this local slot.}
\label{fig:intro-a}
\end{subfigure}
\hfill
\begin{subfigure}[t]{0.48\textwidth}
\centering
\begin{tikzpicture}[
    minibox/.style={rectangle, rounded corners=2pt, minimum width=0.7cm, minimum height=0.5cm, draw=controlGray!60, fill=white},
    patharrow/.style={->, >=stealth, line width=0.8pt},
    distbox/.style={rectangle, rounded corners=2pt, minimum width=0.55cm, minimum height=0.45cm, draw=controlGray!50, fill=white}
]
    \node[font=\scriptsize\bfseries, color=focusBlue] at (1.5, 2.25) {Order matters};
    
    \node[minibox, font=\tiny] (start1) at (0.15, 1.6) {$\emptyset$};
    \node[minibox, font=\tiny] (mid1) at (1.45, 1.6) {+L};
    \node[minibox, font=\tiny, draw=focusBlue, fill=focusBlue!10] (end1) at (2.75, 1.6) {+R};
    \draw[patharrow, color=focusBlue] (start1) -- (mid1) node[midway, above, font=\tiny] {prefix};
    \draw[patharrow, color=focusBlue] (mid1) -- (end1) node[midway, above, font=\tiny] {suffix};
    
    \node[minibox, font=\tiny] (start2) at (0.15, 0.85) {$\emptyset$};
    \node[minibox, font=\tiny] (mid2) at (1.45, 0.85) {+R};
    \node[minibox, font=\tiny, draw=fanoutOrange, fill=fanoutOrange!10] (end2) at (2.75, 0.85) {+L};
    \draw[patharrow, color=fanoutOrange] (start2) -- (mid2) node[midway, above, font=\tiny] {suffix};
    \draw[patharrow, color=fanoutOrange] (mid2) -- (end2) node[midway, above, font=\tiny] {prefix};
    
    \node[font=\normalsize\bfseries, color=red!70!black] at (3.3, 1.22) {$\neq$};
    
    \node[distbox] (d1) at (2.75, 1.6) {};
    \foreach \i/\h in {0/0.2, 1/0.7, 2/0.3} {
        \fill[focusBlue!70] (2.53+\i*0.15, 1.42) rectangle (2.63+\i*0.15, 1.42+\h*0.28);
    }
    \node[distbox] (d2) at (2.75, 0.85) {};
    \foreach \i/\h in {0/0.5, 1/0.4, 2/0.6} {
        \fill[fanoutOrange!70] (2.53+\i*0.15, 0.67) rectangle (2.63+\i*0.15, 0.67+\h*0.28);
    }
    
    \node[font=\scriptsize\bfseries, color=fanoutOrange] at (5.5, 2.25) {Evidence interacts};
    
    \node[distbox, minimum width=0.5cm] (muL) at (4.3, 1.6) {};
    \foreach \i/\h in {0/0.3, 1/0.7, 2/0.4} {
        \fill[focusBlue!60] (4.1+\i*0.13, 1.45) rectangle (4.2+\i*0.13, 1.45+\h*0.25);
    }
    \node[font=\tiny, color=focusBlue] at (4.3, 1.25) {$\mu^L$};
    
    \node[font=\tiny\bfseries, color=textDark] at (4.65, 1.6) {$\times$};
    
    \node[distbox, minimum width=0.5cm] (muR) at (5.0, 1.6) {};
    \foreach \i/\h in {0/0.5, 1/0.6, 2/0.35} {
        \fill[fanoutOrange!60] (4.8+\i*0.13, 1.45) rectangle (4.9+\i*0.13, 1.45+\h*0.25);
    }
    \node[font=\tiny, color=fanoutOrange] at (5.0, 1.25) {$\mu^R$};
    
    \node[font=\tiny\bfseries, color=textDark] at (5.35, 1.6) {$=$};
    \node[distbox, minimum width=0.5cm] (pred) at (5.7, 1.6) {};
    \foreach \i/\h in {0/0.35, 1/0.55, 2/0.4} {
        \fill[controlGray!60] (5.5+\i*0.13, 1.45) rectangle (5.6+\i*0.13, 1.45+\h*0.25);
    }
    
    \node[font=\normalsize\bfseries, color=red!70!black] at (6.15, 1.22) {$\neq$};
    
    \node[distbox, minimum width=0.6cm, draw=realGreen, fill=realGreen!5] (obs) at (5.7, 0.85) {};
    \foreach \i/\h in {0/0.15, 1/0.85, 2/0.18} {
        \fill[realGreen!80] (5.48+\i*0.15, 0.68) rectangle (5.6+\i*0.15, 0.68+\h*0.28);
    }
    \node[font=\tiny, color=realGreen!80!black] at (5.7, 0.5) {observed $\mu^{L,R}$};
    
    \draw[dashed, color=controlGray!50, line width=0.5pt] (3.8, 0.4) -- (3.8, 2.4);
    
    \draw[line width=0.5pt, color=controlGray] (0.0, 0.35) -- (7.0, 0.35);
    \node[font=\tiny\itshape, color=controlGray!80!black, align=center] at (3.5, 0.1) {Controls (shuffle/swap) collapse both effects $\rightarrow$ flatness restored};

\end{tikzpicture}
\caption{\textbf{Flatness fails in two-sided inference.}
\emph{Left:} final belief depends on context order. \emph{Right:} two-sided belief $\neq$ multiplicative (PoE) combination.
These failures motivate curvature and underpin our existence tests (Section~\ref{sec:existence}); matched controls collapse both effects.}
\label{fig:intro-b}
\end{subfigure}

\vspace{0.6em}

\begin{subfigure}[t]{0.48\textwidth}
\centering
\begin{tikzpicture}[
    bel/.style={circle, inner sep=0pt, minimum size=5pt},
    ev/.style={->, >=stealth, line width=1pt, >={stealth[scale=0.8]}},
    gap/.style={<->, >=stealth, line width=0.5pt, >={stealth[scale=0.5]}}
]
    \node[font=\scriptsize\bfseries, color=focusBlue] at (-2.0, 2.55) {Focus ($\kappa^{\mathrm{Tex}}{>}0$)};
    \node[bel, fill=controlGray!70] (fb) at (-2.0, 0.45) {};
    \node[font=\tiny, color=controlGray] at (-2.0, 0.14) {slot};
    \node[bel, fill=focusBlue] (fa) at (-2.25, 1.85) {};
    \node[bel, fill=focusBlue] (fc) at (-1.75, 1.85) {};
    \draw[ev, color=focusBlue!85] (fb) to[bend left=40] node[pos=0.5,left,font=\tiny,color=focusBlue,inner sep=2pt]{left} (fa);
    \draw[ev, color=focusBlue!85] (fb) to[bend right=40] node[pos=0.5,right,font=\tiny,color=focusBlue,inner sep=2pt]{right} (fc);
    \draw[gap, color=focusBlue!55] (fa) -- (fc);
    \node[font=\tiny\itshape, color=focusBlue] at (-2.0, 2.12) {close};
    \node[font=\tiny\itshape, color=focusBlue, align=center] at (-2.0, -0.18) {directions converge\\(slot stabilizes)};

    \node[font=\scriptsize\bfseries, color=fanoutOrange] at (2.0, 2.55) {Fan-out ($\kappa^{\mathrm{Tex}}{<}0$)};
    \node[bel, fill=controlGray!70] (ob) at (2.0, 0.45) {};
    \node[font=\tiny, color=controlGray] at (2.0, 0.14) {slot};
    \node[bel, fill=fanoutOrange] (oa) at (1.05, 1.85) {};
    \node[bel, fill=fanoutOrange] (oc) at (2.95, 1.85) {};
    \draw[ev, color=fanoutOrange!85] (ob) to[bend right=12] node[pos=0.5,left,font=\tiny,color=fanoutOrange,inner sep=1.2pt]{left} (oa);
    \draw[ev, color=fanoutOrange!85] (ob) to[bend left=12] node[pos=0.5,right,font=\tiny,color=fanoutOrange,inner sep=1.2pt]{right} (oc);
    \draw[gap, color=fanoutOrange!55] (oa) -- (oc);
    \node[font=\tiny\itshape, color=fanoutOrange] at (2.0, 2.12) {far apart};
    \node[font=\tiny\itshape, color=fanoutOrange, align=center] at (2.0, -0.18) {directions diverge\\(slot branches)};

    \draw[dashed, color=controlGray, line width=0.5pt] (0, -0.45) -- (0, 2.35);
\end{tikzpicture}
\caption{\textbf{Texture: convergence vs.\ divergence.}
From a slot, extending the \emph{left} vs.\ \emph{right} context leads to readings that stay \emph{close} (\emph{focus}, $\kappa^{\mathrm{Tex}}{>}0$) or spread \emph{apart} (\emph{fan-out}, $\kappa^{\mathrm{Tex}}{<}0$)---a two-axis Ricci-curvature analogue (Section~\ref{sec:texture}, Fig.~\ref{fig:texture-operator}d).}
\label{fig:intro-c}
\end{subfigure}
\hfill
\begin{subfigure}[t]{0.48\textwidth}
\centering
\begin{tikzpicture}[
    token/.style={rectangle, rounded corners=1pt, minimum width=0.52cm, minimum height=0.4cm, font=\tiny, inner sep=1pt},
    kept/.style={token, fill=realGreen!20, draw=realGreen!50},
    pruned/.style={token, fill=controlGray!15, draw=controlGray!30, text=controlGray},
    curvbar/.style={minimum width=0.38cm}
]
    \node[kept] (t0) at (0.5, 1.7) {The};
    \node[kept] (t1) at (1.4, 1.7) {experiment};
    \node[kept, fill=fanoutOrange!25, draw=fanoutOrange!60] (t2) at (2.55, 1.7) {confirmed};
    \node[pruned] (t3) at (3.5, 1.7) {that};
    \node[pruned] (t4) at (4.1, 1.7) {the};
    \node[kept] (t5) at (4.75, 1.7) {model};
    \node[kept] (t6) at (5.7, 1.7) {generalized};
    
    \node[font=\tiny, color=textDark, anchor=east] at (0.0, 0.7) {$\kappa$};
    \draw[line width=0.5pt, color=controlGray] (0.2, 0.4) -- (6.2, 0.4);
    \draw[line width=0.5pt, color=controlGray] (0.2, 0.4) -- (0.2, 1.15);
    
    \fill[realGreen!50] (0.35, 0.4) rectangle (0.65, 0.6);
    \fill[realGreen!50] (1.2, 0.4) rectangle (1.6, 0.75);
    \fill[fanoutOrange] (2.35, 0.4) rectangle (2.75, 1.05);
    \fill[controlGray!30] (3.35, 0.4) rectangle (3.65, 0.48);
    \fill[controlGray!30] (3.95, 0.4) rectangle (4.25, 0.45);
    \fill[realGreen!50] (4.55, 0.4) rectangle (4.95, 0.7);
    \fill[realGreen!50] (5.45, 0.4) rectangle (5.95, 0.8);
    
    \draw[->, >=stealth, line width=0.6pt, color=fanoutOrange] (2.55, 1.15) -- (2.55, 1.4);
    
    \node[font=\tiny, color=realGreen!80!black, align=center] at (1.0, 2.15) {retain high-$|\kappa|$};
    \node[font=\tiny, color=fanoutOrange, align=center] at (4.5, 2.15) {retrieve at fan-out};
    
    \fill[realGreen!20] (0.8, -0.15) rectangle (1.1, 0.05);
    \draw[realGreen!50] (0.8, -0.15) rectangle (1.1, 0.05);
    \node[font=\tiny, color=textDark, anchor=west] at (1.2, -0.05) {kept};
    
    \fill[controlGray!15] (2.0, -0.15) rectangle (2.3, 0.05);
    \draw[controlGray!30] (2.0, -0.15) rectangle (2.3, 0.05);
    \node[font=\tiny, color=textDark, anchor=west] at (2.4, -0.05) {pruned};
    
    \fill[fanoutOrange!25] (3.4, -0.15) rectangle (3.7, 0.05);
    \draw[fanoutOrange!60] (3.4, -0.15) rectangle (3.7, 0.05);
    \node[font=\tiny, color=textDark, anchor=west] at (3.8, -0.05) {fan-out (retrieve)};

\end{tikzpicture}
\caption{\textbf{Curvature as control.}
\textsc{CurvPrune} retains high-curvature spans under a token budget; \textsc{CurvFlag} triggers retrieval at fan-out pivots.
Both enable geometry-aware context management without training a curved generator (Section~\ref{sec:utility}).}
\label{fig:intro-d}
\end{subfigure}

\caption{\textbf{Texture overview.}
\textbf{(a)} Our primitive object is a two-sided slot with prefix/suffix beliefs.
\textbf{(b)} Natural text exhibits non-flat two-sided belief composition (order-sensitivity and evidence interaction), motivating curvature and enabling definition-independent existence tests (Section~\ref{sec:existence}).
\textbf{(c)} Texture defines a text-native signed curvature from how the two-sided belief field contracts (\emph{focus}) or expands (\emph{fan-out}) under a neutral semantic transport geometry (Section~\ref{sec:texture}).
\textbf{(d)} The resulting curvature field is a general-purpose control signal: we instantiate it on pruning and retrieval, but it applies broadly to language tasks requiring context-aware resource allocation (Section~\ref{sec:utility}).}
\label{fig:intro-crown}
\end{figure*}
  
\paragraph{Our approach: curvature as a property of two-sided inference.}
We propose that if curvature is intrinsic to language, it should be detectable \emph{before} committing to a curved architecture.
The natural locus is the \emph{word-in-context slot}: a position $i$ where both prefix $x_{<i}$ and suffix $x_{>i}$ constrain what can appear.
A prefix suggests plausible continuations; a suffix retroactively constrains what could have occurred.
The scientific object is \emph{two-sided inference}---how these two sources of evidence interact when they inform the same latent slot meaning.
Rather than embedding tokens in a chosen manifold, we represent each context side as a \emph{belief distribution} over a finite set of plausible slot states, extracted from a frozen language model.
Curvature then describes how these two evidence axes \emph{interact}: does adding context from one side \emph{contract} the semantic effect of context from the other (reconciliation into a single basin---\emph{focus}), or \emph{expand} it (sustained competing alternatives---\emph{fan-out})?
This two-direction framing---curvature as the \emph{change} in two-sided reconciliation, not a single chord between two beliefs---yields a measurable, signed curvature field over positions in a sequence.
We develop this idea in three stages:

\textbf{(a) Existence} (Section~\ref{sec:existence}).
Before defining curvature, we ask: \emph{is two-sided inference over natural text effectively flat?}
Flatness makes falsifiable predictions about how left and right evidence should compose as context expands.
We instantiate two classical primitives---holonomy, a path-order independence condition from differential geometry \citep{Ni2024HolonomyAT}, and product-of-experts combination \citep{Hinton1999ProductsOE}---as \emph{flatness nulls} (null hypotheses that hold if text were geometrically flat) for two-sided text inference, and test them on natural corpora against matched coherence-destroying controls.
Natural text systematically violates both nulls (bootstrap-significant separation on WikiText-2 and OpenWebText); coherence-destroying controls substantially reduce both certificates toward the flatness null.
This establishes that non-flat structure is a robust property of the two-sided posterior field, not an artifact of any specific curvature operator.

\textbf{(b) Definition} (Section~\ref{sec:texture}).
Having established non-flat behavior, we introduce \emph{Texture}: a text-native curvature primitive that assigns each slot a signed scalar $\kappa_i$.
Using a neutral semantic transport geometry (a debiased Schr\"odinger/Sinkhorn divergence on slot beliefs), we measure how the two-sided belief field $\mu_i(L,R)$ deforms over a small context rectangle: $\kappa_i>0$ (\emph{focus}) when extending one side contracts the transport effect of extending the other, and $\kappa_i<0$ (\emph{fan-out}) when it expands that effect \citep{Lonard2013ASO,Cuturi2013SinkhornDL,feydy2019interpolating}.
This is a genuine two-axis (Ricci-type) readout, bounded in $[-1,1]$ and well-posed on finite supports, making it practical.

\textbf{(c) Utility} (Section~\ref{sec:utility}).
Finally, we show that the curvature field is actionable: because Texture is computed through a frozen belief oracle, it serves as a generator-agnostic control signal for inference-time resource allocation.
We introduce two plug-in utilities: \textsc{CurvPrune}, which allocates a fixed prompt budget toward high-curvature spans, and \textsc{CurvFlag}, which triggers retrieval at fan-out pivots where local context is insufficient \citep{Jiang2023LLMLinguaCP,Lewis2020RetrievalAugmentedGF}.
Both utilities require no geometric training of the downstream generator: curvature is estimated by a small frozen masked language model, while the generator that consumes the controlled context can be any black-box LLM (we verify portability across Llama-3.1, Mistral-7B, and Qwen-2.5 in Appendix~\ref{app:utility-additional}).
We position Texture as a complement to training curved LLMs: we exploit curved structure by \emph{measuring} it through a frozen oracle rather than \emph{learning} it inside the generator's representation space.

\paragraph{Contributions.}
\begin{enumerate}[leftmargin=1.35em, itemsep=0.1em, topsep=0.2em]
\item To our knowledge, the first text-native (extractor-only, no curved manifold) curvature primitive that is signed, computable on any frozen masked language model, and usable as a generator-agnostic control signal.
\item \textbf{Existence:} $\kappa$-independent certificates---holonomy and product-of-experts additivity (CEI), with matched controls---that two-sided inference over natural text is non-flat (Tables~\ref{tab:existence-holonomy}--\ref{tab:existence-cei}).
\item \textbf{Definition:} \emph{Texture}, a signed two-axis curvature $\kappa_i^{\mathrm{Tex}}\in[-1,1]$ of the belief field $\mu_i(L,R)$ under a neutral transport geometry: \emph{focus} ($>0$) where adding one side contracts the other's effect, \emph{fan-out} ($<0$) where it expands.
\item \textbf{Utility:} training-free curvature-guided compression and retrieval routing that improve long-context inference under fixed token budgets.
\end{enumerate}

\paragraph{Conflict of Interest Disclosure.}
K.G.\ is affiliated with both Carnegie Mellon University and Meta.
H.Z.\ and Y.X.\ are employed by Meta, which develops Llama-3.1-8B-Instruct, one of the downstream generators evaluated in this paper.
The Texture curvature operator itself is computed by a separate frozen belief-oracle pipeline (DistilRoBERTa and the additional masked language models studied in Appendix~\ref{app:existence-multi-extractor}) that is unaffiliated with the generator under control; all comparisons are reported using fixed public model checkpoints and the reproducible scripts released alongside the paper.

\section{Related Work}
\label{sec:related_work}

\paragraph{Curved representations for language and foundation models.}
A growing line of work argues that language exhibits hierarchical and scale-free structure that can be modeled more naturally in non-Euclidean spaces.
This view is synthesized in the recent position paper of \citet{he2025position}.
On the modeling side, hyperbolic geometry has been explored from embedding-level approaches to end-to-end hyperbolic networks \citep{nickel2017poincare,ganea2018hyperbolic}, and \citet{chen2024hyperbolic} train a hyperbolic pre-trained language model with broad gains over Euclidean baselines.
These efforts validate curvature as an \emph{architectural prior}; our work targets a different primitive: a \emph{text-native} curvature defined and measured locally from two-sided inference without committing the generator to a curved manifold.

\paragraph{Discrete curvature on graphs.}
Outside NLP, a substantial literature has developed discrete curvature notions for graphs and networks.
Ollivier-Ricci curvature (ORC) defines curvature via optimal-transport contraction of neighborhood measures under a random walk \citep{Ollivier2007RicciCO}, and Forman introduced a combinatorial Ricci curvature for cell complexes \citep{Forman2003BochnersMF}, later adapted to weighted networks as an efficient edge-based curvature statistic \citep{Sreejith2016FormanCF}.
These notions quantify curvature of an \emph{explicit discrete space} (graph/Markov structure) chosen by the practitioner.
In language one can build auxiliary graphs (co-occurrence, syntactic, knowledge graphs), but this does not define curvature of \emph{text itself} at the word-in-context level.
Our contribution fills this gap by defining curvature directly on the \emph{two-sided belief geometry} at a slot, rather than on a user-chosen graph.

\paragraph{Entropic transport and Schr\"odinger bridges as a reconciliation primitive.}
Entropic optimal transport provides stable, scalable computation via Sinkhorn-type scaling \citep{Cuturi2013SinkhornDL}, and Schr\"odinger bridges define entropic interpolations by KL projection onto reference Markov paths \citep{Lonard2013ASO}.
These tools have also been used to study entropic curvature on discrete spaces via behavior of entropy along Schr\"odinger bridges \citep{Samson_2022}.
We repurpose this machinery for language: the debiased Schr\"odinger/Sinkhorn transport between prefix and suffix beliefs becomes a neutral semantic geometry, and Texture reads curvature from whether adding context \emph{focuses} meaning or \emph{fans out}.

\paragraph{Long-context efficiency: compression, retrieval, and adaptive routing.}
Long-context deployment motivates methods that allocate limited compute and context budget selectively.
Prompt compression methods reduce cost under a token budget \citep{Jiang2023LLMLinguaCP}, while retrieval-augmented generation injects external evidence for knowledge-intensive tasks \citep{Lewis2020RetrievalAugmentedGF}.
These pipelines propose heuristics or learned controllers, but do not provide a text-native geometric diagnostic of when two-sided evidence is redundant versus underdetermined.
Our utilities use curvature as a signal for pruning and retrieval, leaving the generator unchanged.
\section{Preliminaries}
\label{sec:preliminaries}

We collect notation and standard probabilistic/transport primitives used throughout.
All paper-specific operators (e.g., \emph{Texture} curvature) are defined later; here we only define the shared objects they are built from.

\paragraph{Text, slots, and context radii.}
We write a token sequence as $x_{1:n}=(x_1,\dots,x_n)$.
For index $i$, denote prefix $x_{<i}:=x_{1:i-1}$ and suffix $x_{>i}:=x_{i+1:n}$.
For radii $L,R\in\mathbb{N}$, we use truncated contexts $x^L_{<i}:=x_{i-L:i-1}$ and $x^R_{>i}:=x_{i+1:i+R}$ (clipped at boundaries), and the contextual slot $(x^L_{<i},\square,x^R_{>i})$ where $\square$ marks a missing filler.

\paragraph{Finite slot state spaces, tail bucket, and belief extractors.}
All core objects live on a \emph{finite} slot-local state space $\mathcal{S}_i$ and its simplex
$\Delta(\mathcal{S}_i):=\{\mu\in\RR^{\mathcal{S}_i}_{\ge 0}:\sum_{s\in\mathcal{S}_i}\mu(s)=1\}$.
We use one-sided boundary beliefs $\mu_i^L,\mu_i^R\in\Delta(\mathcal{S}_i)$ (prefix-only vs.\ suffix-only) and a two-sided belief extractor $\mathcal{B}_{\leftrightarrow}$
(e.g., a masked/infilling LM \citep{Devlin2019BERTPO}).
On a context-radius grid, the two-sided posterior field is written
$\mu_i(L,R):=\mathcal{B}_{\leftrightarrow}\!\bigl(\cdot\mid x^L_{<i},\square,x^R_{>i};\mathcal{S}_i\bigr)\in\Delta(\mathcal{S}_i)$.
Throughout Section~\ref{sec:texture} we instantiate the one-sided boundary beliefs as $\mu_i^L:=\mu_i(L_{\max},0)$ and $\mu_i^R:=\mu_i(0,R_{\max})$, i.e.\ the maximal-radius slices of this two-sided field; with this convention Section~\ref{sec:existence}'s $\mu_i^{\leftarrow}(L)=\mu_i(L,0)$ and $\mu_i^{\rightarrow}(R)=\mu_i(0,R)$ are the same family of objects.
Because practical extractors return truncated supports (e.g., top-$k$ candidates), we use a \emph{canonical per-slot support} with an explicit tail bucket to avoid support-change artifacts:
for any belief $\mu$ on a larger ambient domain, let $\mathrm{TopK}(\mu)$ be its $k$ highest-probability states (ties broken deterministically), define
$\mathcal{C}_i:=\mathrm{TopK}(\mu_i^L)\cup\mathrm{TopK}(\mu_i^R)$ and $\mathcal{S}_i:=\mathcal{C}_i\cup\{\mathrm{tail}\}$, and push residual mass to $\mathrm{tail}$ via $\mu(\mathrm{tail}) := 1-\sum_{s\in\mathcal{C}_i}\mu(s)$, leaving $\mu(s)$ unchanged for $s\in\mathcal{C}_i$.
We apply this map to each boundary belief and each $\mu_i(L,R)$ so that all objects share a fixed domain.

\paragraph{KL divergence.}
For $\mu,\nu\in\Delta(\mathcal{S})$ with $\mu\ll\nu$, the Kullback--Leibler divergence is $\KL(\mu\|\nu) := \sum_{s\in\mathcal{S}} \mu(s)\log\frac{\mu(s)}{\nu(s)}$.
KL is used both as a discrepancy between beliefs (e.g., CEI in Section~\ref{sec:existence}) and as the objective defining \Schrodinger\ bridges (Section~\ref{sec:texture}).

\paragraph{Markov kernels, stationarity, and reversibility.}
A Markov kernel on a finite $\mathcal{S}$ is a row-stochastic matrix $K$; it pushes forward beliefs by $(\mu K)(s')=\sum_s \mu(s)K(s,s')$.
A distribution $\pi$ is stationary if $\pi=\pi K$.
We call $K$ reversible w.r.t.\ $\pi$ if it satisfies detailed balance: $\pi(s)K(s,s')=\pi(s')K(s',s)$ for all $s,s'$.
Reversibility ensures the neutral reference dynamics in Texture are left--right invariant; Appendix~\ref{app:texture-kernel} shows the symmetric ground cost $c_i$ induces such a reversible kernel.

\paragraph{Entropic optimal transport and Sinkhorn scaling on finite supports.}
For beliefs $\rho,\sigma\in\Delta(\mathcal{S})$ and a symmetric ground cost $c$, the entropic OT cost $C_\varepsilon(\rho,\sigma)$ is the regularized objective in~\eqref{eq:texture-otcost}; on finite supports its optimal coupling exists, is unique, and admits a multiplicative scaling form solved efficiently by log-domain Sinkhorn / matrix-scaling iterations \citep{Cuturi2013SinkhornDL}. Section~\ref{sec:texture} uses its debiased (self-bias-removed) form---the symmetric Sinkhorn divergence \citep{feydy2019interpolating}---as a neutral semantic dissimilarity between beliefs.
As optional foundations, the same Gibbs kernel induces a reversible reference chain (stationary $\pi$, kernel $K$) and a two-step Schr\"odinger bridge; we develop that construction, used only for the appendix diagnostics, in Appendix~\ref{app:texture-bridge}.

\section{Existence: Curvature Fingerprints in Two-Sided Inference Geometry}
\label{sec:existence}

Can we certify that natural text exhibits \emph{non-flat} two-sided inference geometry \emph{without} committing to any particular curvature definition?
Part~I answers this by proposing two \emph{definition-independent}, falsifiable flatness nulls on the two-sided posterior field.
Violations of either null are curvature certificates: they witness genuine left--right interaction in contextual belief composition.
We present the nulls and their informal equivalences in the main text, with full statements and proofs in Appendix~\ref{app:existence},
and we empirically falsify both nulls with coherence-destroying controls in \S\ref{subsec:existence-empirical}.

\subsection{Two-Sided Posterior Field}
\label{subsec:existence-field}

Fix a token sequence $x_{1:n}$ and an index $i$.
For radii $L,R\in\mathbb{N}$, let $x^L_{<i}:=x_{i-L:i-1}$ and $x^R_{>i}:=x_{i+1:i+R}$ denote truncated left and right contexts (clipped at boundaries),
and consider the slot $(x^L_{<i},\square,x^R_{>i})$.
Let $\mathcal{S}_i$ be a finite slot-local state space (e.g., top-$k$ candidates plus a tail bucket as in \S\ref{sec:preliminaries}).
Given a two-sided belief extractor $\mathcal{B}_{\leftrightarrow}$ (e.g., an infilling model), define the posterior field
\begin{equation}
\label{eq:existence-mu}
\mu_i(L,R)
:=\mathcal{B}_{\leftrightarrow}\!\bigl(
\,\cdot\mid x^{L}_{<i},\square,x^{R}_{>i}\ ;\ \mathcal{S}_i
\bigr)\in \Delta(\mathcal{S}_i).
\end{equation}
The theory in Appendix~\ref{app:existence} assumes full support, which in practice is ensured by the tail bucket and/or a small $\varepsilon$-smoothing.

\subsection{Falsifiable Flatness Nulls and Curvature Certificates}
\label{subsec:existence-certificates}

Curvature is a second-order phenomenon: in a flat geometry, incremental evidence updates should compose in a path-independent way.
We formalize this intuition as two complementary, falsifiable flatness nulls on $(L,R)\mapsto \mu_i(L,R)$.

\paragraph{Certificate I: Holonomy (order-sensitive evidence updates).}
Fix a reference state $s_{\mathrm{ref}}\in\mathcal{S}_i$ and define log-odds coordinates
\begin{equation}
\label{eq:existence-logodds}
u_{i,s}(L,R):=\log\frac{\mu_i(L,R)(s)}{\mu_i(L,R)(s_{\mathrm{ref}})}\,,\qquad s\in\mathcal{S}_i.
\end{equation}
Define the unit-square holonomy
\begin{equation}
\label{eq:existence-omega}
\begin{split}
\Omega_{i,s}(L,R)
:={}&\, u_{i,s}(L{+}1,R{+}1)-u_{i,s}(L{+}1,R)\\
&-u_{i,s}(L,R{+}1)+u_{i,s}(L,R),
\end{split}
\end{equation}
which measures whether the one-step gain from extending $R$ depends on whether $L$ has already been extended (and vice versa).
In a flat (path-independent) regime, such incremental updates commute and $\Omega_{i,s}$ vanishes.

\begin{theorem}[Holonomy flatness null; see Theorem~\ref{thm:holonomy-flatness}]
\label{thm:existence-holonomy-informal}
Assume $\mu_i(L,R)\in\Delta^\circ(\mathcal{S}_i)$ on a rectangular grid domain.
Then $\Omega_{i,s}(L,R)\equiv 0$ for all $s$ and all unit squares if and only if there exist functions
$\alpha_{i,s}(\cdot)$ and $\beta_{i,s}(\cdot)$ such that
$
u_{i,s}(L,R)=u_{i,s}(0,0)+\alpha_{i,s}(L)+\beta_{i,s}(R).
$
Equivalently, left and right evidence contribute additively in log-odds.
\end{theorem}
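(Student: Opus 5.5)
The plan is to treat this as a statement about a discrete function on a rectangular grid, one coordinate $s$ at a time. Full support, $\mu_i(L,R)\in\Delta^\circ(\mathcal{S}_i)$, guarantees that every log-odds coordinate $u_{i,s}(L,R)$ in~\eqref{eq:existence-logodds} is a well-defined finite real number. Fix $s$ and write $u(L,R):=u_{i,s}(L,R)$ on the grid $\{0,\dots,L_{\max}\}\times\{0,\dots,R_{\max}\}$. The holonomy~\eqref{eq:existence-omega} is then exactly the mixed second finite difference $\Delta_L\Delta_R u$. The theorem reduces to the discrete analogue of the fact that $\partial_L\partial_R u\equiv 0$ iff $u$ is a sum of a function of $L$ and a function of $R$.

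\textbf{($\Leftarrow$)} Substitute $u(L,R)=u(0,0)+\alpha(L)+\beta(R)$ into~\eqref{eq:existence-omega}. The constant cancels, because it appears with signs $+,-,-,+$. The four $\alpha$ terms are $\alpha(L{+}1)-\alpha(L{+}1)-\alpha(L)+\alpha(L)=0$, and the $\beta$ terms cancel the same way. Hence $\Omega\equiv 0$ on every unit square.

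\textbf{($\Rightarrow$)} Define the candidates $\alpha(L):=u(L,0)-u(0,0)$ and $\beta(R):=u(0,R)-u(0,0)$. It then suffices to show that $D(L,R):=u(L,R)-u(L,0)-u(0,R)+u(0,0)$ vanishes on the whole grid. Summing $\Omega(\ell,r)$ over $0\le \ell<L$ and $0\le r<R$ telescopes, first in $\ell$ and then in $r$, to exactly $D(L,R)$. An equivalent route is to observe that $\Omega\equiv0$ means $r\mapsto u(\ell{+}1,r)-u(\ell,r)$ is constant in $r$, and then induct on $L$. Either way $D\equiv 0$, which gives the decomposition with these $\alpha,\beta$. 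Normalizing by $\alpha(0)=\beta(0)=0$ also makes the decomposition unique. Since everything is done for each $s$ separately, the statement holds for all $s$ simultaneously. For $s=s_{\mathrm{ref}}$ it is trivial, because $u\equiv 0$ there.

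The only point needing care is the domain. A ``rectangular grid domain'' is needed so that the telescoping path from $(0,0)$ to $(L,R)$ stays inside the grid and all intermediate unit squares are available. On a non-rectangular or non-simply-connected index set, the telescoping argument could fail. I expect this to be the only real obstacle, and it is handled by the hypothesis. The final sentence of the statement ("additive in log-odds") is then just a restatement. Exponentiating gives $\mu_i(L,R)(s)\propto \mu_i(0,0)(s)\,e^{\alpha_{i,s}(L)}e^{\beta_{i,s}(R)}$, a product-of-experts form, which is the connection to Certificate II.
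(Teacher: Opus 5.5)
Your proposal is correct and follows essentially the same route as the paper. The paper also first proves the telescoping (discrete Stokes) identity expressing a rectangle holonomy as a sum of unit-square holonomies, and obtains ($\Leftarrow$) by direct cancellation. For ($\Rightarrow$) it applies that identity to the rectangle with corners $(0,0),(L,0),(0,R),(L,R)$ and sets $\alpha_{i,s}(L)=u_{i,s}(L,0)-u_{i,s}(0,0)$ and $\beta_{i,s}(R)=u_{i,s}(0,R)-u_{i,s}(0,0)$, exactly as you do.
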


\noindent\textbf{Curvature certificate (Holonomy).}
If $\Omega_{i,s}(L,R)\neq 0$ for some $s$ and adjacent cell $(L,R)$, then the additive-separability flatness null fails.
We summarize holonomy per slot by an RMS magnitude over a small grid $\mathcal{G}$ (and optionally probability weights; see Appendix~\ref{app:holonomy}):
\begin{equation}
\label{eq:existence-holonomy-agg}
h_i
:=\Biggl(
\frac{1}{|\mathcal{G}|}
\sum_{(L,R)\in\mathcal{G}}
\sum_{s\in\mathcal{S}_i}
w_{i,s}(L,R)\,\Omega_{i,s}(L,R)^2
\Biggr)^{\!1/2},
\end{equation}
where $w_{i,s}(L,R)\propto \mu_i(L{+}1,R{+}1)(s)$ (normalized over $s$) emphasizes probable states (Appendix~\ref{app:holonomy}).

\paragraph{Certificate II: Evidence additivity (Product-of-Experts) and CEI.}
A second flatness prediction is that two-sided evidence should combine additively in log space.
For fixed radii $(L,R)$, define one-sided boundary beliefs
$\mu_i^{\leftarrow}(L):=\mu_i(L,0)$, $\mu_i^{\rightarrow}(R):=\mu_i(0,R)$,
and base belief $\mu_i^{(0)}:=\mu_i(0,0)$.
The Product-of-Experts reconstruction is
\begin{equation}
\label{eq:existence-poe}
\mu_i^{\mathrm{PoE}}(L,R)(s)\;\propto\;
\frac{\mu_i^{\leftarrow}(L)(s)\;\mu_i^{\rightarrow}(R)(s)}{\mu_i^{(0)}(s)}\,.
\end{equation}
We measure deviation from this null by the Contextual Evidence Interaction statistic
\begin{equation}
\label{eq:existence-cei}
\mathrm{CEI}_i(L,R):=\KL\!\bigl(\mu_i(L,R)\,\big\|\,\mu_i^{\mathrm{PoE}}(L,R)\bigr)\ge 0.
\end{equation}

\begin{theorem}[PoE flatness null; see Theorem~\ref{thm:poe-certificate}]
\label{thm:existence-poe-informal}
Assume $\mu_i(L,R)\in\Delta^\circ(\mathcal{S}_i)$.
Then $\mathrm{CEI}_i(L,R)=0$ if and only if $\mu_i(L,R)=\mu_i^{\mathrm{PoE}}(L,R)$.
In particular, under an additive-evidence/conditional-independence null where left and right context contribute independent log Bayes factors given the slot value, PoE holds and $\mathrm{CEI}_i(L,R)=0$.
\end{theorem}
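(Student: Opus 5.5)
The statement has two parts, and I would prove them separately: an "iff" about KL, and a sufficient condition under conditional independence.

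\textbf{First part.} By assumption $\mu_i(L,R)$, $\mu_i^{\leftarrow}(L)$, $\mu_i^{\rightarrow}(R)$ and $\mu_i^{(0)}$ all lie in $\Delta^\circ(\mathcal{S}_i)$. Hence the unnormalized PoE weights $\mu_i^{\leftarrow}(L)(s)\mu_i^{\rightarrow}(R)(s)/\mu_i^{(0)}(s)$ are strictly positive and finite, so the normalizer $Z_i(L,R)$ is positive and finite. It follows that $\mu_i^{\mathrm{PoE}}(L,R)\in\Delta^\circ(\mathcal{S}_i)$ and $\mu_i(L,R)\ll\mu_i^{\mathrm{PoE}}(L,R)$, so $\mathrm{CEI}_i(L,R)$ is well defined and finite.

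I would then invoke Gibbs' inequality. Concavity of $\log$ and Jensen applied to $\sum_s\mu(s)\log(\nu(s)/\mu(s))\le\log\sum_s\nu(s)=0$ gives $\KL(\mu\|\nu)\ge 0$. Because $\log$ is strictly concave, equality holds iff $\nu(s)/\mu(s)$ is constant on the support of $\mu$. That support is all of $\mathcal{S}_i$, and both measures sum to one, so the constant is $1$ and $\mu=\nu$. The converse direction is trivial: if the two measures agree, every log term vanishes.

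\textbf{Second part.} I would formalize the null as a generative model. Let $S$ denote the slot value, with prior $p(s)=\mu_i^{(0)}(s)$. Let the left context $\ell=x^L_{<i}$ and right context $r=x^R_{>i}$ be conditionally independent given $S$, and take the extractor to be the exact Bayesian posterior. Bayes' rule then gives:
\begin{itemize}
\item $\mu_i(L,0)(s)\propto p(s)\,p(\ell\mid s)$;
\item $\mu_i(0,R)(s)\propto p(s)\,p(r\mid s)$;
\item $\mu_i(L,R)(s)\propto p(s)\,p(\ell\mid s)\,p(r\mid s)$.
\end{itemize}
Dividing the product of the first two by $\mu_i^{(0)}(s)=p(s)$ reproduces the third, up to constants that do not depend on $s$. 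After normalization, $\mu_i(L,R)=\mu_i^{\mathrm{PoE}}(L,R)$, and the first part gives $\mathrm{CEI}_i(L,R)=0$.

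Equivalently, I could phrase the null via log Bayes factors: $\log\mu_i(L,R)(s)=\log p(s)+\lambda_L(s)+\rho_R(s)+\text{const}$. The one-sided beliefs isolate $\lambda_L$ and $\rho_R$ up to constants, which yields the same identity. This connects the result to the additive log-odds form of Theorem~\ref{thm:existence-holonomy-informal}.

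\textbf{Main obstacle.} The only delicate point is stating the "additive-evidence null" precisely enough that the one-sided and zero-context beliefs share a common prior and likelihoods. In particular, $\mu_i(0,0)$ must equal the prior, and the same likelihood terms must appear in both the one- and two-sided posteriors.

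The tail bucket needs a separate check. The canonical-support map merges states, and merged posteriors do not in general factorize. I would therefore state the result for the model's posterior on $\mathcal{S}_i$ taken as the primitive state space, so that the tail is treated as one atomic state. The alternative is to note that the result holds exactly before coarse-graining.
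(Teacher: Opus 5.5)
Your proposal is correct and follows the paper's proof essentially step for step. For the iff, the paper simply cites $\KL(P\|Q)=0\Leftrightarrow P=Q$, while you first check that $\mu_i^{\mathrm{PoE}}$ has full support and then derive that fact via Gibbs' inequality; for the conditional-independence part, you use the same Bayes-rule factorization with $\mu_i^{(0)}$ as the prior, as in Claim~3 of Theorem~\ref{thm:poe-certificate}. Your tail-bucket caveat is also sound, and the paper sidesteps it in the same way you propose, by taking the slot value $Z$ to live directly on $\mathcal{S}_i$.
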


\noindent\textbf{Curvature certificate (PoE/CEI).}
If $\mathrm{CEI}_i(L,R)>0$, then the additive-evidence flatness null fails, witnessing non-additive left--right interaction in two-sided inference. Holonomy probes \emph{order-sensitivity} around loops in $(L,R)$, while CEI probes \emph{non-additivity} of evidence composition.
Both certificates are definition-independent: they remain meaningful regardless of how one later defines Texture.

\subsection{Empirical Falsification on Natural Corpora}
\label{subsec:existence-empirical}

We test whether natural text violates these flatness nulls using a frozen infilling model $\mathcal{B}_{\leftrightarrow}$ instantiated by \texttt{distilroberta-base}
(a distilled RoBERTa model; \citealp{Sanh2019DistilBERTAD, Liu2019RoBERTaAR}).
We evaluate $1000$ randomly sampled slots per corpus on the radius grid $L,R\in\{0,1,2,4,8\}$ using a fixed per-slot support $\mathcal{S}_i$ (with a tail bucket; details in Appendix~\ref{app:existence-setup}).
We compare \emph{natural text} to two coherence-destroying controls:
(i) \textbf{suffix-swap}, which exchanges the right contexts of matched slots, and
(ii) \textbf{local-shuffle}, which permutes tokens within the right window while preserving its multiset.
Both preserve surface statistics while disrupting two-sided coherence.

Figure~\ref{fig:existence-main} summarizes both certificates on WikiText-2 \citep{Merity2016PointerSM} and OpenWebText \citep{Gokaslan2019OpenWeb}.
Natural text exhibits substantially larger holonomy magnitude $h_i$ and larger CEI than either control, refuting both commuting-update and additive-evidence nulls.
Additional diagnostics---a joint holonomy/CEI density, a slot-paired real-vs-swap holonomy scatter, and full numeric summaries with bootstrap CIs---appear in Appendix~\ref{app:existence-empirical}. 
Together, these two falsifiable certificates establish an existence fact independent of our later definitions:
two-sided inference geometry over natural text is generically non-flat.
Section~\ref{sec:texture} (Definition) defines Texture to measure this interaction; Section~\ref{sec:utility} (Utility) applies the resulting curvature as a control signal.

\begin{figure}[t]
  \centering
  \includegraphics[width=\columnwidth]{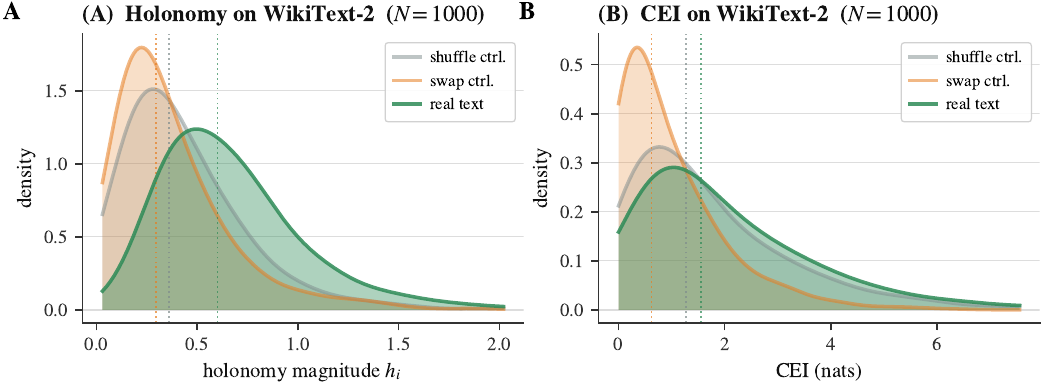}
  \caption{\textbf{Empirical falsification of flatness nulls (Panels A--B).}
  Distributions are over $1000$ slots per corpus (WikiText-2 and OpenWebText) using \texttt{distilroberta-base}.
  \textbf{(A)} Holonomy magnitude $h_i$ is consistently higher on natural text than on suffix-swap and local-shuffle controls, indicating order-sensitive evidence updates in two-sided inference.
  \textbf{(B)} CEI shows analogous separation, indicating non-additive evidence composition relative to the PoE null.
  Full diagnostics, medians/effect sizes, and bootstrap CIs appear in Appendix~\ref{app:existence-empirical}.}
  \label{fig:existence-main}
\end{figure}

\section{Texture: Contextual Ricci Curvature}
\label{sec:texture}

Section~\ref{sec:existence} established---via definition-independent, falsifiable certificates---that two-sided inference at a word-in-context slot exhibits intrinsic second-order interaction.
We now define \emph{Texture}, a text-native curvature primitive.
We define curvature at the smallest unit with two independent evidence axes---the slot $(x_{<i},\square,x_{>i})$, which admits a left belief (prefix) and a right belief (suffix)---precisely where ``flat'' additive composition can fail.
Texture assigns each slot $i$ a signed scalar $\kappa_i\in[-1,1]$ with an interpretable sign:
\emph{focus} ($\kappa_i>0$), where adding context from one side \emph{contracts} the semantic effect of adding context from the other side, concentrating meaning into a basin,
and \emph{fan-out} ($\kappa_i<0$), where the two sides \emph{expand} each other's marginal effect, inducing branching among alternatives.

The definition is constrained by two design principles.
First, curvature is the \emph{differential of reconciliation}. Prefix and suffix give two contextual opinions about the slot; a single bridge between $\mu_i^L$ and $\mu_i^R$ captures only their reconciliation \emph{tension}, whereas a signed curvature---like Ricci curvature---is detected by how motion along one axis changes motion along the other, not from a single chord.
We therefore read Texture from the two-sided posterior \emph{field} $F_i:(L,R)\mapsto\mu_i(L,R)$ over a small context rectangle---the same object whose non-flatness Part~I certifies---measuring how the reconciliation \emph{changes} as each side of context is extended, rather than scoring a single boundary pair $(\mu_i^L,\mu_i^R)$.
Second, the geometry of that field must be measured by a \emph{neutral} notion of semantic motion.
We measure semantic motion with a finite-state entropic-transport \emph{divergence}: a debiased Sinkhorn divergence under the slot cost $c_i$, built from the same reversible Schr\"odinger-bridge kernel (\citealp{Lonard2013ASO,feydy2019interpolating}; Appendix~\ref{app:texture-bridge}).
Texture is then the signed \emph{transport contraction} of the field across context rectangles: the Schr\"odinger bridge supplies the neutral geometry of reconciliation, while the \emph{sign} is read from how that reconciliation contracts or expands.

\begin{figure*}[!t]
\centering

\definecolor{stageBlue}{RGB}{59, 130, 246}
\definecolor{stageOrange}{RGB}{249, 115, 22}
\definecolor{stageTeal}{RGB}{20, 184, 166}
\definecolor{stagePurple}{RGB}{139, 92, 246}
\definecolor{controlGray}{RGB}{156, 163, 175}
\definecolor{bgLight}{RGB}{248, 250, 252}
\definecolor{textDark}{RGB}{30, 41, 59}

\begin{subfigure}[t]{0.48\textwidth}
\centering
\begin{tikzpicture}[
    scale=0.9, transform shape,
    slot/.style={rectangle, draw=stagePurple, line width=1.5pt, minimum width=1.2cm, minimum height=0.6cm, fill=stagePurple!8},
    context/.style={rectangle, rounded corners=2pt, fill=bgLight, draw=controlGray!50, minimum height=0.55cm, font=\small},
    beliefbox/.style={rectangle, rounded corners=3pt, fill=white, draw=controlGray!60, minimum width=1.8cm, minimum height=1.6cm},
    candbox/.style={rectangle, rounded corners=1pt, fill=white, draw=controlGray!40, text width=0.85cm, minimum height=0.35cm, font=\tiny, align=center, inner sep=2pt},
    eqbox/.style={rectangle, rounded corners=2pt, fill=white, draw=controlGray!40, inner sep=4pt, font=\scriptsize}
]
    \node[font=\normalsize\bfseries, color=stageBlue] at (0, 3.0) {\tikz[baseline=-0.5ex]{\node[circle, fill=stageBlue, text=white, inner sep=1.5pt, font=\scriptsize\bfseries] {1};} Extract Beliefs};
    
    \node[context, minimum width=2.0cm] (left) at (-2.5, 2.2) {\small The result};
    \node[slot] (slot) at (0, 2.2) {\textbf{?}};
    \node[context, minimum width=2.6cm] (right) at (2.6, 2.2) {\small the hypothesis};
    
    \draw[->, >=stealth, line width=1pt, color=stageBlue] (left.east) -- (slot.west);
    \draw[->, >=stealth, line width=1pt, color=stageOrange] (right.west) -- (slot.east);
    
    \node[font=\tiny, color=stageBlue] at (-1.25, 2.65) {prefix};
    \node[font=\tiny, color=stageOrange] at (1.3, 2.65) {suffix};
    
    \node[beliefbox, draw=stageBlue!60] (muL) at (-2.0, 0.55) {};
    \node[font=\scriptsize\bfseries, color=stageBlue] at (-2.8, 1.5) {$\mu_i^L$};
    
    \node[candbox, anchor=west] at (-2.7, 1.1) {\textsf{proved}};
    \fill[stageBlue!60] (-2.7, 0.88) rectangle (-1.9, 0.98);
    \node[candbox, anchor=west] at (-2.7, 0.55) {\textsf{shows}};
    \fill[stageBlue!45] (-2.7, 0.33) rectangle (-2.2, 0.43);
    \node[candbox, anchor=west] at (-2.7, 0.0) {\textsf{supports}};
    \fill[stageBlue!30] (-2.7, -0.22) rectangle (-2.4, -0.12);
    
    \node[beliefbox, draw=stageOrange!60] (muR) at (2.0, 0.55) {};
    \node[font=\scriptsize\bfseries, color=stageOrange] at (2.8, 1.5) {$\mu_i^R$};
    
    \node[candbox, anchor=west] at (1.3, 1.1) {\textsf{confirms}};
    \fill[stageOrange!60] (1.3, 0.88) rectangle (2.1, 0.98);
    \node[candbox, anchor=west] at (1.3, 0.55) {\textsf{supports}};
    \fill[stageOrange!50] (1.3, 0.33) rectangle (1.95, 0.43);
    \node[candbox, anchor=west] at (1.3, 0.0) {\textsf{refutes}};
    \fill[stageOrange!25] (1.3, -0.22) rectangle (1.55, -0.12);
    
    \draw[->, >=stealth, line width=0.7pt, dashed, color=stageBlue!60] (slot.south west) to[bend right=25] (muL.north);
    \draw[->, >=stealth, line width=0.7pt, dashed, color=stageOrange!60] (slot.south east) to[bend left=25] (muR.north);
    
    \node[font=\tiny\itshape, color=controlGray!80] at (0, -0.55) {$\mathcal{S}_i$: union of top-$k$ candidates + tail};

\end{tikzpicture}
\caption{\textbf{Boundary beliefs (slot input).}
For a slot $(x_{<i},\square,x_{>i})$, a frozen belief extractor yields prefix and suffix distributions
$\mu_i^L$ and $\mu_i^R$ over a finite candidate set $\mathcal{S}_i$ (Top-$k$ union + \texttt{tail}).}
\label{fig:pipeline-a}
\end{subfigure}
\hfill
\begin{subfigure}[t]{0.48\textwidth}
\centering
\begin{tikzpicture}[
    scale=0.9, transform shape,
    candidate/.style={rectangle, rounded corners=2pt, draw=stageTeal!60, fill=stageTeal!8, minimum width=1.0cm, minimum height=0.45cm, font=\tiny, inner sep=2pt},
    tailbox/.style={rectangle, rounded corners=2pt, draw=controlGray!50, fill=controlGray!8, minimum width=0.7cm, minimum height=0.35cm, font=\tiny, densely dashed},
    eqbox/.style={rectangle, rounded corners=2pt, fill=white, draw=controlGray!40, inner sep=4pt, font=\small, align=center}
]
    \node[font=\normalsize\bfseries, color=stageTeal] at (0, 3.0) {\tikz[baseline=-0.5ex]{\node[circle, fill=stageTeal, text=white, inner sep=1.5pt, font=\scriptsize\bfseries] {2};} Build Kernel};
    
    \node[font=\tiny, color=textDark] at (0, 2.5) {semantic similarity over $\mathcal{S}_i$};
    
    \node[candidate] (c1) at (-2.0, 1.8) {\textsf{proved}};
    \node[candidate] (c2) at (0, 1.8) {\textsf{confirms}};
    \node[candidate] (c3) at (2.0, 1.8) {\textsf{supports}};
    \node[candidate] (c4) at (-1.0, 0.9) {\textsf{shows}};
    \node[candidate] (c5) at (1.0, 0.9) {\textsf{refutes}};
    \node[tailbox] (tail) at (0, 0.3) {\textsf{tail}};
    
    \draw[{Stealth[scale=0.5]}-{Stealth[scale=0.5]}, line width=1.2pt, color=stageTeal!70] (c1) -- (c2);
    \draw[{Stealth[scale=0.5]}-{Stealth[scale=0.5]}, line width=0.9pt, color=stageTeal!50] (c2) -- (c3);
    \draw[{Stealth[scale=0.5]}-{Stealth[scale=0.5]}, line width=1.0pt, color=stageTeal!60] (c1) -- (c4);
    \draw[{Stealth[scale=0.5]}-{Stealth[scale=0.5]}, line width=0.8pt, color=stageTeal!45] (c3) -- (c5);
    \draw[{Stealth[scale=0.5]}-{Stealth[scale=0.5]}, line width=0.7pt, color=stageTeal!40] (c4) -- (c5);
    \draw[{Stealth[scale=0.5]}-{Stealth[scale=0.5]}, line width=1.1pt, color=stageTeal!65] (c2) -- (c4);
    \draw[{Stealth[scale=0.5]}-{Stealth[scale=0.5]}, line width=0.6pt, color=stageTeal!35] (c2) -- (c5);
    
    \draw[{Stealth[scale=0.4]}-{Stealth[scale=0.4]}, line width=0.4pt, color=controlGray!40, densely dashed] (tail) -- (c4);
    \draw[{Stealth[scale=0.4]}-{Stealth[scale=0.4]}, line width=0.4pt, color=controlGray!40, densely dashed] (tail) -- (c5);
    
    \node[font=\tiny, color=stageTeal!80] at (2.5, 1.35) {$w_i \propto e^{-c/\varepsilon}$};
    
    \node[eqbox] at (0, -0.45) {
        $K_i(s,s') = \dfrac{e^{-c_i(s,s')/\varepsilon}}{\sum_u e^{-c_i(s,u)/\varepsilon}}$
        \quad $\pi_i(s) \propto \sum_u w_i(s,u)$
    };

\end{tikzpicture}
\caption{\textbf{Neutral semantic motion kernel.}
A symmetric slot-local semantic cost $c_i$ defines affinities among candidates in $\mathcal{S}_i$; row-normalization yields a Markov kernel $K_i$ (with stationary $\pi_i$) that supplies the neutral notion of ``semantic drift'' underlying the transport divergence.}
\label{fig:pipeline-b}
\end{subfigure}

\vspace{0.6em}

\begin{subfigure}[t]{0.48\textwidth}
\centering
\begin{tikzpicture}[
    scale=0.9, transform shape,
    beliefbox/.style={rectangle, rounded corners=3pt, fill=white, draw=controlGray!60, minimum width=1.5cm, minimum height=1.1cm},
    eqbox/.style={rectangle, rounded corners=2pt, fill=white, draw=controlGray!40, inner sep=4pt, font=\scriptsize, align=center}
]
    \node[font=\normalsize\bfseries, color=stageOrange] at (0, 3.0) {\tikz[baseline=-0.5ex]{\node[circle, fill=stageOrange, text=white, inner sep=1.5pt, font=\scriptsize\bfseries] {3};} Transport Distance};
    
    \node[beliefbox, draw=stageBlue!60] (A) at (-1.95, 1.6) {};
    \node[beliefbox, draw=stageBlue!75] (B) at (1.95, 1.6) {};
    
    \node[font=\tiny, color=controlGray] at (-1.95, 2.02) {\textsf{proved}};
    \fill[stageBlue!70] (-2.3, 1.15) rectangle (-2.1, 1.72);
    \fill[stageBlue!40] (-2.05, 1.15) rectangle (-1.85, 1.46);
    \fill[stageBlue!20] (-1.8, 1.15) rectangle (-1.6, 1.30);
    
    \node[font=\tiny, color=controlGray] at (1.95, 2.02) {\textsf{confirms}};
    \fill[stageBlue!35] (1.6, 1.15) rectangle (1.8, 1.42);
    \fill[stageBlue!70] (1.85, 1.15) rectangle (2.05, 1.72);
    \fill[stageBlue!45] (2.1, 1.15) rectangle (2.3, 1.50);
    
    \node[font=\scriptsize\bfseries, color=stageBlue] at (-1.95, 2.55) {$\mu_{00}$};
    \node[font=\scriptsize\bfseries, color=stageBlue] at (1.95, 2.55) {$\mu_{10}$};
    
    \draw[{Stealth[scale=0.7]}-{Stealth[scale=0.7]}, line width=1.7pt, color=stageBlue!75] (A.east) -- (B.west)
        node[midway, above=2pt, font=\scriptsize, color=textDark] {$\ell_L^{0}=\mathsf d_i$};
    \node[font=\tiny\itshape, color=controlGray] at (0, 0.72) {one side of the field rectangle};
    
    \node[eqbox] at (0, 0.05) {
        $\mathsf d_i(\rho,\sigma)^2 = \big[C_\varepsilon(\rho,\sigma) - \tfrac12 C_\varepsilon(\rho,\rho) - \tfrac12 C_\varepsilon(\sigma,\sigma)\big]_+$
    };

\end{tikzpicture}
\caption{\textbf{Neutral semantic transport.}
The cost $c_i$/kernel $K_i$ induce an entropic transport cost $C_\varepsilon$; its debiased Sinkhorn divergence $\mathsf d_i$ is a symmetric, semantically aware dissimilarity between beliefs (with $\mathsf d_i(\rho,\rho){=}0$). Each $\mathsf d_i$ measures \emph{one} side of the posterior-field rectangle (here the left-increment side $\ell_L^{0}{=}\mathsf d_i(\mu_{00},\mu_{10})$), not a single endpoint chord.}
\label{fig:pipeline-c}
\end{subfigure}
\hfill
\begin{subfigure}[t]{0.48\textwidth}
\centering
\begin{tikzpicture}[
    scale=0.9, transform shape,
    corner/.style={circle, inner sep=1.6pt},
    eqbox/.style={rectangle, rounded corners=2pt, fill=white, draw=controlGray!40, inner sep=4pt, font=\scriptsize, align=center}
]
    \node[font=\normalsize\bfseries, color=stagePurple] at (0, 3.0) {\tikz[baseline=-0.5ex]{\node[circle, fill=stagePurple, text=white, inner sep=1.5pt, font=\scriptsize\bfseries] {4};} Curvature (field rectangle)};

    \coordinate (f00) at (-2.55, 1.5);
    \coordinate (f10) at (-1.05, 1.5);
    \coordinate (f01) at (-2.25, 2.25);
    \coordinate (f11) at (-1.35, 2.25);
    \draw[line width=1.7pt, color=stageBlue!80] (f00) -- (f10) node[midway, below=1pt, font=\tiny, color=stageBlue]{$\ell_L^{0}$};
    \draw[line width=0.8pt, color=stageBlue!45] (f01) -- (f11) node[midway, above=1pt, font=\tiny, color=stageBlue!70]{$\ell_L^{R}$};
    \draw[line width=1.0pt, color=stageOrange!60] (f00) -- (f01) node[midway, left=0pt, font=\tiny, color=stageOrange]{$+R$};
    \draw[line width=1.0pt, color=stageOrange!60] (f10) -- (f11);
    \fill[stageBlue] (f00) circle(1.7pt); \fill[stageBlue] (f10) circle(1.7pt);
    \fill[stageBlue!55] (f01) circle(1.7pt); \fill[stageBlue!55] (f11) circle(1.7pt);
    \node[font=\tiny\bfseries, color=stagePurple] at (-1.8, 0.85) {focus ($\kappa^{\mathrm{Tex}}{>}0$)};

    \coordinate (g00) at (1.15, 1.5);
    \coordinate (g10) at (2.05, 1.5);
    \coordinate (g01) at (0.85, 2.25);
    \coordinate (g11) at (2.75, 2.25);
    \draw[line width=0.8pt, color=stageBlue!45] (g00) -- (g10) node[midway, below=1pt, font=\tiny, color=stageBlue!70]{$\ell_L^{0}$};
    \draw[line width=1.7pt, color=stageBlue!80] (g01) -- (g11) node[midway, above=1pt, font=\tiny, color=stageBlue]{$\ell_L^{R}$};
    \draw[line width=1.0pt, color=stageOrange!60] (g00) -- (g01) node[midway, left=0pt, font=\tiny, color=stageOrange]{$+R$};
    \draw[line width=1.0pt, color=stageOrange!60] (g10) -- (g11);
    \fill[stageBlue!55] (g00) circle(1.7pt); \fill[stageBlue!55] (g10) circle(1.7pt);
    \fill[stageBlue] (g01) circle(1.7pt); \fill[stageBlue] (g11) circle(1.7pt);
    \node[font=\tiny\bfseries, color=stageOrange] at (1.8, 0.85) {fan-out ($\kappa^{\mathrm{Tex}}{<}0$)};

    \node[eqbox, font=\small] at (0, 0.0) {
        $\kappa_i^{\mathrm{Tex}} = \dfrac{\sum_g a_i(g)\,s_i(g)}{\sum_g a_i(g)+\epsilon_0}\in[-1,1]$
    };

\end{tikzpicture}
\caption{\textbf{Signed curvature from field contraction.}
Each cell is a $2{\times}2$ patch of $\mu_i(L,R)$: the horizontal sides are the left-update $\mathsf d_i$ ($\ell_L^{0}$ before, $\ell_L^{R}$ after adding right context, $+R$). When $+R$ \emph{shrinks} the left-update ($\ell_L^{R}{<}\ell_L^{0}$) the cell is \emph{focus} ($\kappa^{\mathrm{Tex}}{>}0$); when it \emph{grows} it, \emph{fan-out} ($\kappa^{\mathrm{Tex}}{<}0$). $\kappa_i^{\mathrm{Tex}}$ is the activity-weighted signed contraction over cells.}
\label{fig:pipeline-d}
\end{subfigure}

\caption{\textbf{Texture operator pipeline.}
Given a word-in-context slot, we (a) extract boundary beliefs $\mu_i^L,\mu_i^R$ on $\mathcal{S}_i$, (b) build a neutral semantic cost / kernel from embeddings, (c) turn it into a debiased Sinkhorn transport distance $\mathsf d_i$ between beliefs, and (d) read off signed curvature from how the two-sided posterior field $\mu_i(L,R)$ contracts or expands across context rectangles (focus vs fan-out).}
\label{fig:texture-operator}
\end{figure*}

\subsection{Slot State Space and Boundary Beliefs}
\label{subsec:texture-setup}

Fix a token sequence $x_{1:n}$ and a slot index $i$.
We model the contextual blank $(x_{<i},\square,x_{>i})$ as two observations of an unobserved slot state $Z_i\in\mathcal{S}_i$,
where $\mathcal{S}_i$ is a \emph{finite} state space local to slot $i$.
Finiteness is a computational design choice: Texture is meant to be computed from truncated (top-$k$) supports produced by frozen models.

A \emph{left boundary belief} is $\mu_i^L\in\Delta(\mathcal{S}_i)$ intended to approximate $P(Z_i=\cdot\mid x_{<i})$,
and a \emph{right boundary belief} is $\mu_i^R\in\Delta(\mathcal{S}_i)$ intended to approximate $P(Z_i=\cdot\mid x_{>i})$.
Per \S\ref{sec:preliminaries}, the primary object is the two-sided posterior \emph{field} $\mu_i(L,R)$ evaluated on a radius grid $\mathcal{G}=\{0,1,2,4,8,16\}^2$ (the same field Part~I certifies as non-flat); the extreme anchors $\mu_i^L:=\mu_i(L_{\max},0)$ and $\mu_i^R:=\mu_i(0,R_{\max})$ fix the canonical support below.
Texture is \emph{extractor-dependent} in magnitude but \emph{generator-agnostic} in deployment: the curvature signal feeds any downstream LLM without changing its representation geometry.
Texture can be instantiated longitudinally (token-filler state spaces) or transversally (relational state spaces over AMR/OpenIE-style graphs); we present the relational instantiation and its validation in Appendix~\ref{app:texture-transversal} and the transversal experiments in Appendix~\ref{app:transversal-experiments}.

\paragraph{Canonical support with a tail bucket.}
To avoid support-mismatch artifacts from truncation, we define a canonical per-slot support by augmenting the union of top-$k$ candidates
with a tail bucket state.
Let $k\in\mathbb{N}$ and let $\mathrm{TopK}(\mu)$ return the $k$ highest-probability states under $\mu$ (ties broken deterministically).
Define $\mathcal{C}_i := \mathrm{TopK}(\mu_i^L)\cup\mathrm{TopK}(\mu_i^R)$ and $\mathcal{S}_i := \mathcal{C}_i \cup \{\mathrm{tail}\}$.
We push forward beliefs to $\mathcal{S}_i$ by assigning leftover mass to $\mathrm{tail}$: $\mu_i^L(\mathrm{tail}) := 1-\sum_{s\in\mathcal{C}_i}\mu_i^L(s)$ and $\mu_i^R(\mathrm{tail}) := 1-\sum_{s\in\mathcal{C}_i}\mu_i^R(s)$, while keeping $\mu_i^L(s),\mu_i^R(s)$ unchanged for $s\in\mathcal{C}_i$.
This avoids renormalization and ensures downstream objects are well-posed on a fixed domain.

\subsection{Neutral Semantic Transport Geometry}
\label{subsec:texture-bridge}

Texture requires a neutral reference notion of semantic motion on $\mathcal{S}_i$, specified by a symmetric ground cost $c_i$ with $c_i(s,s)=0$; in longitudinal mode we induce it from frozen embeddings, $c_i(s,s') := \|\hat e(s)-\hat e(s')\|_2^2$ with $\hat e(s):=e(s)/\|e(s)\|_2$ (transversal $c_i$ is defined on relational nodes, Appendix~\ref{app:texture-transversal}).
For a temperature $\varepsilon>0$, the entropic optimal-transport cost between beliefs $\rho,\sigma\in\Delta(\mathcal{S}_i)$ under $c_i$ is
\begin{equation}
\label{eq:texture-otcost}
C_\varepsilon(\rho,\sigma) := \min_{\gamma\in\Pi(\rho,\sigma)} \langle \gamma, c_i\rangle + \varepsilon\,\KL\!\big(\gamma\,\|\,\rho\otimes\sigma\big),
\end{equation}
the standard regularized OT objective, solved by log-domain Sinkhorn iterations on the Gibbs kernel $e^{-c_i/\varepsilon}$ (\citealp{Cuturi2013SinkhornDL}; the reversible kernel $K_i$ and its stationary $\pi_i$ used elsewhere are detailed in Appendix~\ref{app:texture-kernel}). The \emph{debiased Sinkhorn divergence}
\begin{equation}
\label{eq:texture-sinkdiv}
\mathsf d_i(\rho,\sigma) := \sqrt{\Big[\,C_\varepsilon(\rho,\sigma)-\tfrac12 C_\varepsilon(\rho,\rho)-\tfrac12 C_\varepsilon(\sigma,\sigma)\,\Big]_+}
\end{equation}
removes the entropic self-bias and gives a nonnegative, symmetric, semantically aware \emph{divergence} with $\mathsf d_i(\rho,\rho)=0$ \citep{feydy2019interpolating}---the transport dissimilarity Texture uses to measure the geometry of the posterior field. (We say ``divergence'' rather than ``metric'': we do not assume a triangle inequality.)

\subsection{From Field Geometry to Curvature: Texture}
\label{subsec:texture-curvature}

Texture reads signed curvature from how the two-sided posterior field $F_i:(L,R)\mapsto\mu_i(L,R)$ contracts or expands across a small context rectangle, measured by the transport divergence $\mathsf d_i$.
On an adjacent cell of the radius grid with corners $\mu_{00}=\mu_i(L,R)$, $\mu_{10}=\mu_i(L{+}\delta_L,R)$, $\mu_{01}=\mu_i(L,R{+}\delta_R)$, $\mu_{11}=\mu_i(L{+}\delta_L,R{+}\delta_R)$, the \emph{left} increment before/after extending the right context is $\ell_L^{0}=\mathsf d_i(\mu_{00},\mu_{10})$, $\ell_L^{R}=\mathsf d_i(\mu_{01},\mu_{11})$, and symmetrically $\ell_R^{0}=\mathsf d_i(\mu_{00},\mu_{01})$, $\ell_R^{L}=\mathsf d_i(\mu_{10},\mu_{11})$.
The signed local contraction score is
\begin{equation}
\label{eq:texture-cellscore}
s_i(L,R) := \tfrac12\!\left[\frac{\ell_L^{0}-\ell_L^{R}}{\ell_L^{0}+\ell_L^{R}+\eta} + \frac{\ell_R^{0}-\ell_R^{L}}{\ell_R^{0}+\ell_R^{L}+\eta}\right]\in[-1,1].
\end{equation}
Intuitively, $s_i>0$ when adding the opposite context \emph{shrinks} the marginal effect of more evidence (the two sides stabilize the slot, \emph{focus}); $s_i<0$ when it \emph{enlarges} that effect (the two sides branch, \emph{fan-out}); and $s_i\approx 0$ on a locally flat rectangle. Each term is a \emph{relative} length change, so the bound $s_i\in[-1,1]$---and hence $\kappa_i\in[-1,1]$---is intrinsic, not an imposed clip.
Texture aggregates $s_i$ over the adjacent cells $g\in\mathcal{G}$ of the grid, weighting each cell by an activity gate $a_i(g)=\tanh\!\big((\lVert\Omega_i(g)\rVert + \lambda\,\mathrm{CEI}_i(g))/\tau_a\big)$ built from the \emph{same} Part~I non-flatness certificates (holonomy $\Omega_i$ and CEI):
\begin{equation}
\label{eq:texture-kappa}
\boxed{\;\kappa_i^{\mathrm{Tex}} := \frac{\sum_{g\in\mathcal{G}} a_i(g)\,s_i(g)}{\sum_{g\in\mathcal{G}} a_i(g) + \epsilon_0}\;\in[-1,1].\;}
\end{equation}
We write $\kappa_i:=\kappa_i^{\mathrm{Tex}}$ throughout. The guard $\epsilon_0>0$ keeps the denominator strictly positive, so $\kappa_i$ is well-defined even with no active cells ($\sum_{g}a_i(g)=0$, reading $0$; likewise $0$ when every cell is below numerical thresholds). The activity-weighted form is canonical because its \emph{sign} comes from transport contraction while its \emph{confidence} comes from the same certificates that establish curvature exists (boundedness and the flat-rectangle null: Appendix~\ref{app:texture-properties}). This is a two-axis (Ricci-type) readout, not an endpoint-disagreement statistic---left/right disagreement alone does not fix the sign---and not a salience score for the realized token, but a signed second-order interaction of the belief field around it.

\paragraph{Empirical sign distribution.}
On natural English text both regimes occur at slot resolution; their balance is corpus-dependent rather than imposed by the definition, shifting markedly across corpora (Appendix~\ref{app:kappa-distribution}, Figures~\ref{fig:kappa-datasets} and~\ref{fig:sentence-atlas}).

\subsection{Computation}
\label{subsec:texture-algorithm}

For each slot, the field $\{\mu_i(L,R)\}$ is evaluated on the radius grid $\mathcal{G}=\{0,1,2,4,8,16\}^2$ (one masked-LM forward per corner, batched); each cell score~\eqref{eq:texture-cellscore} needs the debiased Sinkhorn divergences~\eqref{eq:texture-sinkdiv} between adjacent corners---small log-domain matrix-scaling problems on $c_i$, memoized across shared corners. A tiny smoothing of beliefs and kernel ensures full support, and the resulting scores are stable on finite supports (Theorem~\ref{thm:texture-stability}; algorithm in Appendix~\ref{app:texture-alg}).

\paragraph{Transversal instantiation.}
The same construction extends beyond token fillers to \emph{relational} state spaces---nodes from dependency parses and OpenIE-style triples---using identical transport machinery, a complementary relational view we develop in Appendix~\ref{app:texture-transversal} (rank agreement and cost in Appendix~\ref{app:transversal-experiments}).

\section{Utility: Curvature as an Inference-Time Control Primitive}
\label{sec:utility}

Section~\ref{sec:existence} shows two-sided inference over natural text is non-flat, and Section~\ref{sec:texture} defines $\kappa_i$ as a signed scalar summarizing that non-flatness.
We use the curvature field as a \emph{generator-agnostic} inference-time control signal: computed by a frozen masked-LM oracle, it allocates budget or triggers retrieval for an unrelated downstream generator, addressing a common question---\emph{where is local context sufficient, and where is more evidence needed?}

\paragraph{\textsc{CurvPrune}: budgeted prompt construction.}
Given a query $q$, a long context $x$, and a token budget $B$, \textsc{CurvPrune} (Algorithm~\ref{alg:curvprune}; span scoring and guard bands in Appendix~\ref{app:utility-curvprune}) computes the curvature field along $q\Vert x$, scores spans by aggregated magnitude, and greedily selects spans (with an optional guard band) until the budget is exhausted.
The span score is the mean per-slot magnitude with optional sign weighting,
\begin{equation}
\label{eq:utility-span-score-main}
s(I) \;=\; \frac{1}{|I|}\sum_{i\in I}\Big(w_{-}\,[-\kappa_i^{\mathrm{Tex}}]_+ + w_{+}\,[\kappa_i^{\mathrm{Tex}}]_+\Big),
\end{equation}
where $[\cdot]_+:=\max(\cdot,0)$ and $\kappa_i^{\mathrm{Tex}}\in[-1,1]$ is the (already bounded) Contextual Ricci Texture of Section~\ref{sec:texture}.
Defaults $(w_{-},w_{+}) = (1.0,\,0.25)$ favor fan-out regions, since underdetermined positions are the ones most likely to benefit from preserved context.

\paragraph{\textsc{CurvFlag}: curvature-guided retrieval routing.}
For retrieval-augmented pipelines, \textsc{CurvFlag} (Algorithm~\ref{alg:curvflag}) maps the slot-local fan-out mass into a retrieval budget.
Concretely, evaluating curvature on a sequence $z \in \{q,\,q\Vert\mathrm{draft}(q)\}$, define the \emph{fan-out mass}
\begin{equation}
\label{eq:utility-fanout-mass-main}
M_{-}(z) \;:=\; \sum_{i \in \mathcal{I}(z)}\big[-\kappa_i^{\mathrm{Tex}}(z)\big]_+
\end{equation}
and set the retrieval budget by an affine clip
\begin{equation}
\label{eq:utility-routing-main}
k(M_{-}) \;:=\; \mathrm{clip}\bigl(k_{\min} + \alpha\,M_{-},\; k_{\min},\; k_{\max}\bigr),
\end{equation}
with the clip~\eqref{eq:utility-routing-main} optionally falling back to a full-context pathway when $M_{-}$ saturates above $k_{\max}$.
Anchor spans for query augmentation are extracted from the top-$m$ fan-out slots in $z$ (Appendix~\ref{app:utility-curvflag}).

\paragraph{Complexity.}
Per slot, the dominant cost is the batched frozen-oracle evaluation of the context-radius grid for $\mu_i(L,R)$ (one forward per grid node). The Sinkhorn steps are small matrix-scaling problems on the top-$k$ support ($|\mathcal{S}_i|\le 51$), memoized across corner-sharing queries, so they are negligible relative to the oracle forwards; sparse slot sampling and cache reuse amortize cost across nearby slots and budget sweeps (Appendices~\ref{app:utility-algorithms},~\ref{app:compute},~\ref{app:utility-sparse}).
\section{Experiments}
\label{sec:experiments}

We evaluate the \emph{utility} of Texture as an inference-time control signal for long-context workloads: whether curvature is actionable for budgeted context selection and retrieval routing (existence and definition are evaluated in their own sections).
All curvature quantities are computed using a frozen masked-language oracle (\texttt{distilroberta-base}), while downstream generation uses \texttt{Llama-3.1-8B-Instruct}~\citep{grattafiori2024llama} without fine-tuning (compute environment in Appendix~\ref{app:compute}); we additionally validate generator portability across Mistral-7B~\citep{Jiang2023Mistral7} and Qwen-2.5-7B~\citep{qwen2024qwen25} in Appendix~\ref{app:utility-additional}.
We use five representative LongBench tasks spanning multi-hop QA, long-document QA, and summarization \citep{Bai2023LongBenchAB}: HotpotQA \citep{Yang2018HotpotQAAD}, 2WikiMultiHopQA \citep{xanh2020_2wikimultihop}, Qasper~\citep{dasigi2021qasper}, GovReport~\citep{huang2021govreport}, and QMSum \citep{zhong2021qmsumnewbenchmarkquerybased}, reporting mean F1 / ROUGE-L with bootstrap intervals over per-example deltas (statistical protocol in Appendix~\ref{app:stats}).

\paragraph{Baselines.}
For pruning we compare against heuristics (random, recency, head+tail), query-aware BM25 \citep{Robertson2009ThePR}, and learned compression (Selective Context \citep{li2023compressing}, LLMLingua \citep{Jiang2023LLMLinguaCP}); for routing, against fixed-$k$ retrieval, FLARE \citep{jiang2023activeretrievalaugmentedgeneration}, and Self-Route \citep{Li2024RetrievalAG}. Baseline configurations are in Appendix~\ref{app:utility-extended}.

\subsection{Longitudinal Texture: Budgeted Compression + Retrieval Routing}
\label{sec:exp-utility}

We instantiate two curvature-controlled utilities.
\textsc{CurvPrune} selects spans to satisfy a context budget $B$ using the curvature-derived span score~\eqref{eq:utility-span-score-main}, and \textsc{CurvFlag} routes between a retrieval and a long-context pathway via a fan-out statistic~\eqref{eq:utility-fanout-mass-main}.
We also evaluate paired ``+Texture'' variants where Texture modulates a baseline's primary signal (e.g.\ BM25$\rightarrow$BM25+Texture), isolating two questions: does standalone curvature select good spans, and does it \emph{re-allocate} a relevance-selected budget productively? Tables~\ref{tab:exp-prune-main}--\ref{tab:exp-route-main} report pruning at $B{=}2048$ and routing at a fixed cost target $T$; their rightmost \textbf{$\Delta\%$ (Tex.)} column gives the paired \emph{Texture effect}---the relative F1 gain of each curvature method over its matched baseline.
Full budget sweeps ($B\in\{512,1024,2048,4096\}$; Figure~\ref{fig:budget-curves}), per-pair deltas, cost--quality frontiers, and mechanism checks are in Appendix~\ref{app:utility-additional}.

\begin{table}[t]
  \centering
  \caption{\textbf{Pruning on LongBench ($B{=}2048$).}
  F1 for QA, ROUGE-L for summarization (means over examples); token budgets are matched across methods as in Appendix~\ref{app:token-accounting}. The rightmost \textbf{$\Delta\%$ (Tex.)} column is the \emph{Texture effect}---mean relative F1 gain on multi-hop QA over BM25.}
  \label{tab:exp-prune-main}
  \IfFileExists{artifacts/experiments/main/tables/main_prune_B2048.tex}{
    \centering
\scriptsize
\setlength{\tabcolsep}{3pt}
\renewcommand{\arraystretch}{1.2}
\begin{tabular}{@{}l ccccc @{\hspace{4pt}}c@{}}
\toprule
\rowcolor{texHeader!12}
\textbf{Method} & \textbf{2Wiki} & \textbf{GovRep} & \textbf{HpQA} & \textbf{Qasper} & \textbf{QMSum} & \makecell{\textbf{$\Delta\%$}\\[-2pt]{\tiny\textbf{(Tex.)}}} \\
\midrule
\multicolumn{7}{@{}l}{\textit{Reference}} \\
LC & 8.1 & 21.1 & 19.1 & 8.9 & 16.1 & -- \\
\midrule
\multicolumn{7}{@{}l}{\textit{Heuristic}} \\
Random & \cellcolor{texHiBg}\textbf{16.1} & \cellcolor{texHiBg}\textbf{20.6} & 20.5 & 6.5 & 14.3 & -- \\
\rowcolor{texRowAlt}
Recency & 5.8 & 18.8 & 17.0 & 6.5 & 13.8 & -- \\
Head+Tail & 10.9 & 20.2 & 17.9 & 6.7 & 14.0 & -- \\
\midrule
\multicolumn{7}{@{}l}{\textit{Query-aware / learned}} \\
BM25 & 9.4 & 19.8 & 28.1 & 8.4 & 15.1 & {\tiny\textit{ref}} \\
\rowcolor{texRowAlt}
SelectiveCtx & 7.9 & 20.5 & 18.7 & 6.9 & 14.0 & -- \\
LLMLingua & 7.3 & 19.6 & 16.3 & 6.3 & 14.2 & -- \\
\midrule
\multicolumn{7}{@{}l}{\textit{Ours}} \\
\textbf{CurvPrune-Pure} & 14.6 & 20.6 & 14.8 & 7.3 & 14.7 & -- \\
\rowcolor{texRowAlt}
\textbf{CurvPrune-Hybrid} & \textbf{15.0} & 20.6 & \cellcolor{texHiBg}\textbf{32.5} & \cellcolor{texHiBg}\textbf{8.5} & \cellcolor{texHiBg}\textbf{15.2} & \textcolor{texPos!85!black}{\textbf{+38\%}} \\
\textbf{BM25+Texture} & \cellcolor{texBestBg}\textbf{\textcolor{texBest!85!black}{18.2}} & 20.3 & \cellcolor{texBestBg}\textbf{\textcolor{texBest!85!black}{38.4}} & 7.9 & 14.9 & \cellcolor{texPos!12}\textcolor{texPos!85!black}{\textbf{+65\%}} \\
\bottomrule
\end{tabular}
\par\vspace{1pt}
{\tiny \textcolor{texBest!85!black}{\rule{4pt}{4pt}}\,best per column,\,\colorbox{texHiBg}{\rule[0pt]{4pt}{4pt}}\,second-best.\, $\Delta\%$ = mean rel.\ F1 gain on multi-hop QA (HpQA, 2Wiki) vs.\ BM25.\, Means over $N{=}50$ examples.}
  }{
    \begin{tabular}{lccccc}
      \toprule
      Method & HotpotQA & 2Wiki & Qasper & GovReport & QMSum \\
      \midrule
      BM25+Texture & -- & -- & -- & -- & -- \\
      \bottomrule
    \end{tabular}
  }
\end{table}

\begin{table}[t]
  \centering
  \caption{\textbf{Routing on multi-hop QA.}
  F1 and tokens at cost target $T$. +Tex.\ = Texture trigger.}
  \label{tab:exp-route-main}
  \IfFileExists{artifacts/experiments/main/tables/main_route_budget2048.tex}{
    \centering
\footnotesize
\setlength{\tabcolsep}{3pt}
\renewcommand{\arraystretch}{1.15}
\begin{tabular}{@{}l cc cc @{\hspace{6pt}}c@{}}
\toprule
\rowcolor{texHeader!12}
& \multicolumn{2}{c}{\textbf{HotpotQA}} & \multicolumn{2}{c}{\textbf{2WikiMQA}} & \\
\cmidrule(lr){2-3} \cmidrule(lr){4-5}
\rowcolor{texHeader!8}
\textbf{Method} & \textbf{F1} & \textbf{Tokens} & \textbf{F1} & \textbf{Tokens} & \makecell{\textbf{$\Delta\%$}\\[-2pt]{\tiny\textbf{Tex.}}} \\
\midrule
\multicolumn{6}{@{}l}{\textit{Adaptive baselines}} \\
FLARE & 39.1 & \cellcolor{texBestBg}\textbf{\textcolor{texBest!85!black}{175}} & 38.2 & \cellcolor{texBestBg}\textbf{\textcolor{texBest!85!black}{182}} & -- \\
\rowcolor{texRowAlt}
Self-Route & 45.4 & 7096 & 37.7 & 5780 & {\tiny\textit{ref}} \\
Fixed-$k$ & 32.4 & 213 & 22.2 & 241 & -- \\
\midrule
\multicolumn{6}{@{}l}{\textit{+Texture trigger}} \\
FLARE\,+\,Tex & 45.6 & \cellcolor{texHiBg}\textbf{198} & \cellcolor{texHiBg}\textbf{41.5} & \cellcolor{texHiBg}\textbf{205} & \textcolor{texPos!85!black}{\textbf{+13\%}} \\
\rowcolor{texRowAlt}
Self-Route\,+\,Tex & \cellcolor{texHiBg}\textbf{47.6} & 256 & 40.8 & 271 & \textcolor{texPos!85!black}{\textbf{+7\%}} \\
\midrule
\multicolumn{6}{@{}l}{\textit{Ours}} \\
\textbf{CurvFlag} & \cellcolor{texBestBg}\textbf{\textcolor{texBest!85!black}{49.1}} & 3344 & \cellcolor{texBestBg}\textbf{\textcolor{texBest!85!black}{43.0}} & 3322 & \cellcolor{texPos!12}\textcolor{texPos!85!black}{\textbf{+11\%}} \\
\bottomrule
\end{tabular}
\par\vspace{1pt}
{\tiny \textcolor{texBest!85!black}{\rule{4pt}{4pt}}\,best ($\uparrow$F1, $\downarrow$tokens),\,\colorbox{texHiBg}{\rule{4pt}{4pt}}\,second-best.\, $\Delta\%$ = relative F1 gain vs.\ matched baseline (FLARE / Self-Route; CurvFlag vs.\ Self-Route).}
  }{
    \begin{tabular}{lccc}
      \toprule
      Method & HotpotQA & 2Wiki & Tokens \\
      \midrule
      FLARE+Tex & -- & -- & -- \\
      \bottomrule
    \end{tabular}
  }
\end{table}

\paragraph{Analysis: curvature yields consistent gains by re-allocating budget toward bridge structure.}
The clearest evidence that Texture captures task-relevant structure is the BM25$\rightarrow$BM25+Texture pair: at $B{=}2048$ it improves BM25 by $+65\%$ relative F1 on multi-hop QA (Table~\ref{tab:exp-prune-main}) while staying within noise on the three non-multi-hop tasks. \textsc{CurvPrune-Pure}, the curvature-only ablation, isolates this signal: it is competitive on 2WikiMQA but trails query-aware selection on HotpotQA, since curvature locates \emph{where} evidence is dispersed but not \emph{which} content answers the query---hence the deployed methods combine it with BM25. Together, BM25+Texture and \textsc{CurvPrune-Hybrid} form the strongest compression frontier: BM25+Texture dominates multi-hop QA, while the hybrid variant is the best pruning method on Qasper and QMSum and ties for best on GovReport.
On routing, \textsc{CurvFlag} attains the best accuracy of any method while using ${\sim}2\times$ fewer tokens than Self-Route, so it \emph{Pareto-dominates} the strongest adaptive baseline; the curvature-triggered FLARE+Texture and Self-Route+Texture variants are likewise positive at a fraction of the token cost (Table~\ref{tab:exp-route-main}). In both, curvature re-allocation helps most where bridging evidence is dispersed (multi-hop QA) and is neutral where a single lexical anchor suffices.

\begin{takeaway}
\takeawaylead{Curvature is actionable.}Texture delivers consistent multi-hop QA gains ($+10.3$/$+8.8$ F1 from BM25+Texture pruning, $+6.5$/$+3.3$ F1 from curvature-triggered routing), with \textsc{CurvFlag} reaching higher F1 at ${\sim}2{\times}$ lower token cost than the strongest adaptive baseline.
\end{takeaway}

\paragraph{Transversal instantiation.}
A relational instantiation over dependency/OpenIE graphs yields a curvature only weakly correlated with the longitudinal signal (Spearman $\rho{\approx}0.20$ on 200 WikiText-2 slots)---complementary, not redundant, structure; we report this rank agreement and the extractor cost in Appendix~\ref{app:transversal-experiments}.

\section{Conclusion}
\label{sec:conclusion}

We presented a unified framework for \emph{text curvature} along three axes: definition-independent certificates that two-sided inference over natural text is non-flat (Section~\ref{sec:existence}); \emph{Texture}, a signed operator reading how two-sided reconciliation contracts (\emph{focus}) or expands (\emph{fan-out}) over a context rectangle under a neutral transport geometry (Section~\ref{sec:texture}); and training-free, generator-agnostic controllers for compression and retrieval routing (Sections~\ref{sec:utility}--\ref{sec:experiments}).

\paragraph{Scope and limitations.}
Texture is extractor-dependent in magnitude, so we treat the scale of $\kappa$ as calibrated to the chosen oracle; non-flatness and sign are stable across five extractor checkpoints, support sizes, temperatures, and radii (Appendices~\ref{app:existence-multi-extractor},~\ref{app:definition-additional}). Gains are strongest on multi-hop QA, where curvature re-allocates a relevance-selected budget toward dispersed bridging evidence. Promising directions include heat-kernel limits to continuous Ricci curvature and curvature-guided generation.

\newpage

\section*{Impact Statement}
\label{sec:broaderimpact}

\noindent\textbf{Potential benefits.}
This work proposes a text-native curvature measurement and demonstrates its use as an inference-time control signal.
If validated at scale, curvature-guided compression and retrieval routing could reduce compute and latency in LLM systems,
improving accessibility and lowering environmental and monetary costs.
Curvature-based diagnostics may also improve interpretability by highlighting structurally pivotal regions in prompts and documents.

\noindent\textbf{Potential risks and misuse.}
Curvature-guided selection can amplify biases present in the belief extractor or in the underlying corpora:
if a model systematically assigns fan-out/focus in biased ways, retrieval and compression decisions may preferentially preserve or discard certain viewpoints.
RAG systems may inadvertently retrieve harmful or misleading content if retrieval is triggered in sensitive fan-out regions.
We mitigate these risks by (i) reporting bias-sensitive analyses when feasible, (ii) recommending conservative thresholds and human oversight for high-stakes domains,
and (iii) isolating failure modes where curvature signals become unreliable.
\noindent\textbf{Data and privacy.}
Our experiments use public datasets (WikiText-2, OpenWebText, the LongBench task suite) and do not require collecting new personal data.
All evaluated language models are publicly available checkpoints; we report no new training of generators.

\bibliography{example_paper}
\bibliographystyle{icml2026}

\appendix
\onecolumn

\section*{Appendix}
\addcontentsline{toc}{section}{Appendix}

\vspace{1em}
\noindent\textbf{Contents of the Appendix}
\vspace{0.5em}

\begin{enumerate}[label=\Alph*., leftmargin=2em, itemsep=0.2em]
    \item Notation Reference \dotfill \pageref{app:notation}
    
    \item Additional Theory for Existence Certificates (Part~I) \dotfill \pageref{app:existence}
    \begin{enumerate}[label=\Alph{enumi}.\arabic*., leftmargin=1.5em, itemsep=0.1em, topsep=0.1em]
        \item Holonomy and Discrete Integrability of Log-Odds \dotfill \pageref{app:holonomy}
        \item Evidence Additivity and Product-of-Experts (CEI) \dotfill \pageref{app:poe}
        \item Empirical Setup Details \dotfill \pageref{app:existence-setup}
        \item Additional Empirical Diagnostics \dotfill \pageref{app:existence-empirical}
    \end{enumerate}
    
    \item Additional Theory for Texture Curvature (Part~II) \dotfill \pageref{app:texture}
    \begin{enumerate}[label=\Alph{enumi}.\arabic*., leftmargin=1.5em, itemsep=0.1em, topsep=0.1em]
        \item Neutral Kernel: Reversibility from a Symmetric Cost \dotfill \pageref{app:texture-kernel}
        \item Two-Step Schr\"odinger Bridge: Reduction and Scaling Form \dotfill \pageref{app:texture-bridge}
        \item Properties of Contextual Ricci Texture \dotfill \pageref{app:texture-properties}
        \item Stability and Truncation Consistency \dotfill \pageref{app:texture-stability}
        \item Algorithmic Summary \dotfill \pageref{app:texture-alg}
        \item Transversal Texture: Curvature on Relational Graphs \dotfill \pageref{app:texture-transversal}
    \end{enumerate}
    
    \item Utility Controllers (Part~III) \dotfill \pageref{app:utility}
    \begin{enumerate}[label=\Alph{enumi}.\arabic*., leftmargin=1.5em, itemsep=0.1em, topsep=0.1em]
        \item Sparse Curvature Estimation for Control \dotfill \pageref{app:utility-sparse}
        \item \textsc{CurvPrune}: Span Scoring and Guard Bands \dotfill \pageref{app:utility-curvprune}
        \item \textsc{CurvFlag}: Routing, Chunking, and Anchors \dotfill \pageref{app:utility-curvflag}
        \item Algorithms and Complexity \dotfill \pageref{app:utility-algorithms}
        \item Additional Experimental Results \dotfill \pageref{app:utility-additional}
        \item Multi-Extractor Existence \dotfill \pageref{app:existence-multi-extractor}
        \item Contextual Ricci Texture Distribution \dotfill \pageref{app:kappa-distribution}
        \item Transversal Texture Experiments \dotfill \pageref{app:transversal-experiments}
    \end{enumerate}
    
    \item Experimental Details \dotfill \pageref{app:experiments}
    \begin{enumerate}[label=\Alph{enumi}.\arabic*., leftmargin=1.5em, itemsep=0.1em, topsep=0.1em]
        \item Compute Environment and Reproducibility \dotfill \pageref{app:compute}
        \item Token Accounting and Budget Matching \dotfill \pageref{app:token-accounting}
        \item Statistical Reporting \dotfill \pageref{app:stats}
        \item Part~I (Existence) Experimental Details \dotfill \pageref{app:existence-additional}
        \item Part~II (Definition) Stability \dotfill \pageref{app:definition-additional}
        \item Part~III (Utility) Extended Details \dotfill \pageref{app:utility-extended}
    \end{enumerate}
\end{enumerate}

\clearpage


\section{Notation Reference}
\label{app:notation}

Table~\ref{tab:notation} summarizes notation used throughout; detailed definitions appear in the indicated sections.

\begin{table}[h!]
\centering
\caption{\textbf{Notation.} Summary of symbols used throughout the paper.}
\label{tab:notation}
\footnotesize
\renewcommand{\arraystretch}{1.1}
\begin{tabular}{@{}lp{7.8cm}l@{}}
\toprule
\rowcolor{texHeader!12}
\textbf{Symbol} & \textbf{Description} & \textbf{Ref.} \\
\midrule
\rowcolor{texHeader!6}
\multicolumn{3}{@{}l}{\textit{Text, Slots, and Beliefs}} \\
$x_{1:n}$, $x_{<i}$, $x_{>i}$ & Token sequence; prefix; suffix & \S\ref{sec:preliminaries} \\
$x^L_{<i}$, $x^R_{>i}$ & Truncated contexts of radii $L,R$ & \S\ref{sec:preliminaries} \\
$(x_{<i}, \square, x_{>i})$ & Contextual slot at position $i$ & \S\ref{sec:preliminaries} \\
$\mathcal{S}_i$, $\mathcal{C}_i$, $\mathrm{tail}$ & State space; candidate set; tail bucket & \S\ref{sec:preliminaries} \\
$\Delta(\mathcal{S})$, $\Delta^\circ(\mathcal{S})$ & Simplex; interior (full support) & \S\ref{sec:preliminaries} \\
$\mu_i^L$, $\mu_i^R$ & Left/right boundary beliefs & \S\ref{sec:texture} \\
$\mu_i(L,R)$ & Two-sided posterior at radii $(L,R)$ & \S\ref{sec:existence} \\
$\mathcal{B}_{\leftrightarrow}$ & Two-sided belief extractor & \S\ref{sec:preliminaries} \\
\midrule
\rowcolor{texHeader!6}
\multicolumn{3}{@{}l}{\textit{Existence Certificates (Part I)}} \\
$\KL(\mu \| \nu)$ & Kullback--Leibler divergence & \S\ref{sec:preliminaries} \\
$u_{i,s}(L,R)$ & Log-odds: $\log[\mu_i(L,R)(s)/\mu_i(L,R)(s_{\mathrm{ref}})]$ & \S\ref{sec:existence} \\
$\Omega_{i,s}(L,R)$, $h_i$ & Unit-square holonomy; aggregated magnitude & \S\ref{sec:existence} \\
$\mu_i^{\leftarrow}$, $\mu_i^{\rightarrow}$, $\mu_i^{(0)}$ & Left-only, right-only, base beliefs & \S\ref{sec:existence} \\
$\mu_i^{\mathrm{PoE}}$, $\mathrm{CEI}_i$ & Product-of-Experts; contextual evidence interaction & \S\ref{sec:existence} \\
\midrule
\rowcolor{texHeader!6}
\multicolumn{3}{@{}l}{\textit{Neutral Kernel and Dynamics (Part II)}} \\
$c_i(s,s')$, $\varepsilon$ & Symmetric ground cost; temperature & \S\ref{sec:texture} \\
$G_i$, $K_i$, $\pi_i$ & Gibbs affinity; Markov kernel; stationary dist. & \S\ref{sec:texture} \\
$R_i(s_0,s_1,s_2)$ & Reference path: $\pi_i(s_0)K_i(s_0,s_1)K_i(s_1,s_2)$ & \S\ref{sec:texture} \\
\midrule
\rowcolor{texHeader!6}
\multicolumn{3}{@{}l}{\textit{Transport Divergence and Curvature (Part II)}} \\
$C_\varepsilon(\rho,\sigma)$ & Entropic optimal-transport cost under $c_i$ & \S\ref{sec:texture} \\
$\mathsf d_i(\rho,\sigma)$ & Debiased Sinkhorn divergence (transport dissimilarity) & \S\ref{sec:texture} \\
$(a,b)$, $\gamma^\star$ & Sinkhorn scaling vectors; optimal coupling & App.~\ref{app:texture-bridge} \\
$\mu_{00},\mu_{10},\mu_{01},\mu_{11}$ & Corners of a $2{\times}2$ context rectangle & \S\ref{sec:texture} \\
$\ell_L^{0},\ell_L^{R},\ell_R^{0},\ell_R^{L}$ & Rectangle side lengths under $\mathsf d_i$ & \S\ref{sec:texture} \\
$s_i(g)$, $\eta$ & Signed cell contraction score; ratio guard & \S\ref{sec:texture} \\
$a_i(g)$, $(\lambda,\tau_a)$ & Activity gate (Part~I certificates); gate params & \S\ref{sec:texture} \\
$\kappa_i^{\mathrm{Tex}}$, $\epsilon_0$ & \textbf{Texture curvature} $\in[-1,1]$; numerical guard & \S\ref{sec:texture} \\
\midrule
\rowcolor{texHeader!6}
\multicolumn{3}{@{}l}{\textit{Utility Controllers (Part III)}} \\
$B$, $k$ & Token budget; retrieval budget & \S\ref{sec:utility} \\
$s(I)$, $(w_-,w_+)$ & Span score; fan-out/focus weights & App.~\ref{app:utility} \\
$M_-(z)$ & Fan-out mass: $\sum_i[-\kappa_i^{\mathrm{Tex}}]_+$ & App.~\ref{app:utility} \\
\bottomrule
\end{tabular}
\end{table}

\paragraph{Conventions.}
Subscript $i$ denotes slot index.
We write $\propto$ for equality up to normalization and $[x]_+:=\max(x,0)$ for the positive part.
When clear from context, slot subscripts may be suppressed.


\section{Additional Theory for Section~\ref{sec:existence}}
\label{app:existence}

This appendix provides the full statements and proofs of the two \emph{definition-independent} certificates used in Section~\ref{sec:existence}.
Both results are phrased as falsifiable \emph{flatness nulls} on the two-sided posterior field $(L,R)\mapsto \mu_i(L,R)$.
Throughout, we assume full support ($\mu_i(L,R)\in\Delta^\circ(\mathcal{S}_i)$); empirically this is ensured by a tail bucket and/or an $\varepsilon$-smoothing.


\subsection{Certificate I: Holonomy and Discrete Integrability of Log-Odds}
\label{app:holonomy}

\begin{theorem}[Holonomy characterization of log-odds separability]
\label{thm:holonomy-flatness}
\textbf{\normalfont Curvature certificate (falsifiable flatness null).}
Define the unit-square holonomy $\Omega_{i,s}(L,R)$ from the two-sided belief field.
If there exist a state $s\in\mathcal{S}_i$ and an adjacent grid cell $(L,R)$ such that $\Omega_{i,s}(L,R)\neq 0$, then the flatness null fails:
the log-odds field is \emph{not} additively separable in $(L,R)$, hence the two-sided inference geometry exhibits genuine left--right interaction.

\smallskip
\noindent\textbf{\normalfont Setup.}
Fix a slot $i$ and a finite candidate/state set $\mathcal{S}_i$.
Let $L_{\max},R_{\max}\in\mathbb{N}$ and define the discrete grid domain
\[
\mathcal{D}:=\{0,1,\dots,L_{\max}\}\times\{0,1,\dots,R_{\max}\}.
\]
(Only adjacency on the grid matters; if one uses nonuniform radii in practice, re-index them by their grid positions.)
Assume a strictly positive belief field $\mu_i(L,R)\in\Delta^\circ(\mathcal{S}_i)$ for each $(L,R)\in\mathcal{D}$.

Fix a reference state $s_{\mathrm{ref}}\in\mathcal{S}_i$ and define the log-odds field
\begin{equation}
\label{eq:logodds-field}
u_{i,s}(L,R)
:=\log\frac{\mu_i(L,R)(s)}{\mu_i(L,R)(s_{\mathrm{ref}})}\qquad (s\in\mathcal{S}_i,\ (L,R)\in\mathcal{D}).
\end{equation}
Define the unit-square holonomy for $0\le L<L_{\max}$ and $0\le R<R_{\max}$ by
\begin{equation}
\label{eq:unit-square-holonomy}
\Omega_{i,s}(L,R)
:=u_{i,s}(L\!+\!1,R\!+\!1)-u_{i,s}(L\!+\!1,R)-u_{i,s}(L,R\!+\!1)+u_{i,s}(L,R).
\end{equation}

\smallskip
\noindent\textbf{\normalfont Claim (equivalence).}
The following are equivalent:
\begin{equation}
\label{eq:holonomy-null}
\Omega_{i,s}(L,R)=0\ \ \text{for all }s\in\mathcal{S}_i\text{ and all unit squares }(L,R),
\end{equation}
and for every $s\in\mathcal{S}_i$ there exist functions
$\alpha_{i,s}:\{0,\dots,L_{\max}\}\to\mathbb{R}$ and $\beta_{i,s}:\{0,\dots,R_{\max}\}\to\mathbb{R}$
with $\alpha_{i,s}(0)=\beta_{i,s}(0)=0$ such that
\begin{equation}
\label{eq:additive-logodds}
u_{i,s}(L,R)=u_{i,s}(0,0)+\alpha_{i,s}(L)+\beta_{i,s}(R)
\qquad \text{for all }(L,R)\in\mathcal{D}.
\end{equation}

\smallskip
\noindent\textbf{\normalfont Discrete Stokes / telescoping identity.}
For any rectangle $0\le L_1<L_2\le L_{\max}$ and $0\le R_1<R_2\le R_{\max}$, define the rectangle holonomy
\begin{equation}
\label{eq:rectangle-holonomy}
\mathcal{H}_{i,s}(L_1,L_2,R_1,R_2)
:=u_{i,s}(L_2,R_2)-u_{i,s}(L_2,R_1)-u_{i,s}(L_1,R_2)+u_{i,s}(L_1,R_1).
\end{equation}
Then $\mathcal{H}_{i,s}$ decomposes into the sum of unit-square holonomies:
\begin{equation}
\label{eq:discrete-stokes}
\mathcal{H}_{i,s}(L_1,L_2,R_1,R_2)
=
\sum_{\ell=L_1}^{L_2-1}\sum_{r=R_1}^{R_2-1}\Omega_{i,s}(\ell,r).
\end{equation}
\end{theorem}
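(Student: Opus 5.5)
I will prove the discrete Stokes identity \eqref{eq:discrete-stokes} first, since both directions of the equivalence follow from it quickly. For the Stokes identity I will telescope in two stages. Fix $r$ and sum $\Omega_{i,s}(\ell,r)$ over $\ell=L_1,\dots,L_2-1$. Writing $\Omega_{i,s}(\ell,r)=D_{i,s}(\ell+1,r)-D_{i,s}(\ell,r)$, where
\[
D_{i,s}(\ell,r):=u_{i,s}(\ell,r+1)-u_{i,s}(\ell,r)
\]
is the one-step right-increment, the inner sum collapses to $D_{i,s}(L_2,r)-D_{i,s}(L_1,r)$. Summing this over $r=R_1,\dots,R_2-1$ telescopes each of the two terms separately to $u_{i,s}(L_2,R_2)-u_{i,s}(L_2,R_1)$ and $u_{i,s}(L_1,R_2)-u_{i,s}(L_1,R_1)$. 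Their difference is exactly $\mathcal{H}_{i,s}(L_1,L_2,R_1,R_2)$. Full support is only needed so that $u_{i,s}$ is finite.

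For the direction \eqref{eq:additive-logodds}$\Rightarrow$\eqref{eq:holonomy-null}, I substitute the additive form into \eqref{eq:unit-square-holonomy}. The constant $u_{i,s}(0,0)$ appears with signs $+1-1-1+1$ and cancels. The $\alpha$ terms also cancel, as $\alpha(L{+}1)-\alpha(L{+}1)-\alpha(L)+\alpha(L)=0$, and the $\beta$ terms cancel in the same way.

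For the direction \eqref{eq:holonomy-null}$\Rightarrow$\eqref{eq:additive-logodds}, I define $\alpha_{i,s}(L):=u_{i,s}(L,0)-u_{i,s}(0,0)$ and $\beta_{i,s}(R):=u_{i,s}(0,R)-u_{i,s}(0,0)$, which vanish at $0$ by construction. For a point $(L,R)$ with $L,R\ge1$, I apply the Stokes identity to the rectangle $[0,L]\times[0,R]$. Vanishing unit holonomies give $\mathcal{H}_{i,s}(0,L,0,R)=0$, that is,
\[
u_{i,s}(L,R)=u_{i,s}(L,0)+u_{i,s}(0,R)-u_{i,s}(0,0),
\]
which rearranges to \eqref{eq:additive-logodds}. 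The degenerate cases $L=0$ or $R=0$ hold directly from the definitions of $\alpha$ and $\beta$.

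The certificate statement is just the contrapositive of the forward direction: a nonzero $\Omega_{i,s}$ precludes \eqref{eq:additive-logodds}. The only step requiring any care is the index bookkeeping in the double telescoping sum, which I expect to be the main, though mild, obstacle. The rest is routine.
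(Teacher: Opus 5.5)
Your proof is correct and follows the paper's argument. The paper proves the discrete Stokes identity by double telescoping; you telescope in $\ell$ first via the right-increment $D_{i,s}$, while the paper telescopes in $r$ first, which is a purely symmetric choice. As in the paper, you obtain additive $\Rightarrow$ zero holonomy by substitution, and the converse by applying the identity to $[0,L]\times[0,R]$ with $\alpha_{i,s},\beta_{i,s}$ defined as axis differences. Your explicit handling of the degenerate cases $L=0$ or $R=0$ is a small improvement: the paper invokes $\mathcal{H}_{i,s}(0,L,0,R)$ without noting that the rectangle identity requires $L_1<L_2$ and $R_1<R_2$.
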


\begin{proof}
We first prove the telescoping identity \eqref{eq:discrete-stokes}, then prove the equivalence between
the flatness null \eqref{eq:holonomy-null} and the additive decomposition \eqref{eq:additive-logodds}.

\textbf{Step 1: telescoping over a rectangle (proof of \eqref{eq:discrete-stokes}).}
Fix $s\in\mathcal{S}_i$ and a rectangle $(L_1,L_2,R_1,R_2)$.
Sum \eqref{eq:unit-square-holonomy} over $r=R_1,\dots,R_2-1$ for a fixed $\ell$:
\begin{align*}
\sum_{r=R_1}^{R_2-1}\Omega_{i,s}(\ell,r)
&=\sum_{r=R_1}^{R_2-1}\Big(u_{i,s}(\ell\!+\!1,r\!+\!1)-u_{i,s}(\ell\!+\!1,r)\Big)
-\sum_{r=R_1}^{R_2-1}\Big(u_{i,s}(\ell,r\!+\!1)-u_{i,s}(\ell,r)\Big)\\
&=\big(u_{i,s}(\ell\!+\!1,R_2)-u_{i,s}(\ell\!+\!1,R_1)\big)
-\big(u_{i,s}(\ell,R_2)-u_{i,s}(\ell,R_1)\big),
\end{align*}
because each sum telescopes.
Now sum this identity over $\ell=L_1,\dots,L_2-1$; telescoping again yields
\[
\sum_{\ell=L_1}^{L_2-1}\sum_{r=R_1}^{R_2-1}\Omega_{i,s}(\ell,r)
=
u_{i,s}(L_2,R_2)-u_{i,s}(L_2,R_1)-u_{i,s}(L_1,R_2)+u_{i,s}(L_1,R_1)
=\mathcal{H}_{i,s}(L_1,L_2,R_1,R_2),
\]
which is \eqref{eq:discrete-stokes}.

\textbf{Step 2: additive decomposition $\Rightarrow$ zero holonomy.}
Assume \eqref{eq:additive-logodds}. Substituting it into \eqref{eq:unit-square-holonomy} shows all terms cancel:
\begin{align*}
\Omega_{i,s}(L,R)
&=\big(u_{i,s}(0,0)+\alpha_{i,s}(L\!+\!1)+\beta_{i,s}(R\!+\!1)\big)
-\big(u_{i,s}(0,0)+\alpha_{i,s}(L\!+\!1)+\beta_{i,s}(R)\big)\\
&\quad-\big(u_{i,s}(0,0)+\alpha_{i,s}(L)+\beta_{i,s}(R\!+\!1)\big)
+\big(u_{i,s}(0,0)+\alpha_{i,s}(L)+\beta_{i,s}(R)\big)=0.
\end{align*}
Thus \eqref{eq:holonomy-null} holds.

\textbf{Step 3: zero holonomy $\Rightarrow$ additive decomposition.}
Assume \eqref{eq:holonomy-null}. By Step 1, $\Omega\equiv 0$ implies $\mathcal{H}\equiv 0$ on every rectangle.
In particular, for the rectangle with corners $(0,0)$, $(L,0)$, $(0,R)$, $(L,R)$ we obtain
\[
0=\mathcal{H}_{i,s}(0,L,0,R)=u_{i,s}(L,R)-u_{i,s}(L,0)-u_{i,s}(0,R)+u_{i,s}(0,0),
\]
hence
\begin{equation}
\label{eq:axis-decomposition}
u_{i,s}(L,R)=u_{i,s}(L,0)+u_{i,s}(0,R)-u_{i,s}(0,0).
\end{equation}
Define
\begin{equation}
\label{eq:alpha-beta-construct}
\alpha_{i,s}(L):=u_{i,s}(L,0)-u_{i,s}(0,0),
\qquad
\beta_{i,s}(R):=u_{i,s}(0,R)-u_{i,s}(0,0),
\end{equation}
so $\alpha_{i,s}(0)=\beta_{i,s}(0)=0$.
Substituting \eqref{eq:alpha-beta-construct} into \eqref{eq:axis-decomposition} yields exactly \eqref{eq:additive-logodds}.

\textbf{Step 4: certificate interpretation.}
Steps 2--3 show that \eqref{eq:holonomy-null} is equivalent to additive separability \eqref{eq:additive-logodds}.
Therefore if any unit-square holonomy is nonzero, the additive decomposition cannot hold, which is precisely the stated curvature certificate.
\end{proof}

\paragraph{Remark (reference-state / gauge invariance).}
Changing the reference state in \eqref{eq:logodds-field} applies a gauge transformation to the log-odds coordinates:
if $u'_{i,s}(L,R)=\log\frac{\mu_i(L,R)(s)}{\mu_i(L,R)(s'_{\mathrm{ref}})}$, then
$u'_{i,s}(L,R)=u_{i,s}(L,R)-u_{i,s'_{\mathrm{ref}}}(L,R)$.
Taking the mixed difference \eqref{eq:unit-square-holonomy} gives
$\Omega'_{i,s}(L,R)=\Omega_{i,s}(L,R)-\Omega_{i,s'_{\mathrm{ref}}}(L,R)$.
Consequently, the \emph{flatness null} ``$\Omega_{i,s}(L,R)=0$ for all $s$ and all unit squares'' is invariant to the choice of reference.
Moreover, the curvature certificate is also invariant: if some $\Omega_{i,s}$ is nonzero under one reference, then not all $\Omega'_{i,s}$ can vanish under another reference because $\Omega'_{i,s'_{\mathrm{ref}}}\equiv 0$ by construction.


\subsection{Certificate II: Evidence Additivity and Product-of-Experts (CEI)}
\label{app:poe}

\begin{theorem}[Product-of-Experts characterization of evidence additivity]
\label{thm:poe-certificate}
\textbf{\normalfont Curvature certificate (falsifiable flatness null).}
Fix $(L,R)$. Define the PoE reconstruction $\mu_i^{\mathrm{PoE}}(L,R)$ from the boundary beliefs $\mu_i(L,0)$ and $\mu_i(0,R)$ and the base belief $\mu_i(0,0)$.
If $\mu_i(L,R)\neq \mu_i^{\mathrm{PoE}}(L,R)$ (equivalently $\mathrm{CEI}_i(L,R)>0$), then the additive-evidence flatness null fails:
left and right context interact beyond additive log Bayes factors.

\smallskip
\noindent\textbf{\normalfont Setup.}
Fix a slot $i$ and a finite candidate/state set $\mathcal{S}_i$.
Assume a strictly positive two-sided belief field $\mu_i(L,R)\in\Delta^\circ(\mathcal{S}_i)$.
For a fixed pair of radii $(L,R)$ define the one-sided boundary beliefs
\[
\mu_i^{\leftarrow}(L):=\mu_i(L,0),\qquad \mu_i^{\rightarrow}(R):=\mu_i(0,R),\qquad \mu_i^{(0)}:=\mu_i(0,0).
\]
Define the (normalized) product-of-experts reconstruction by
\begin{equation}
\label{eq:poe}
\mu_i^{\mathrm{PoE}}(L,R)(s)
:=\frac{1}{Z_i(L,R)}\,\frac{\mu_i^{\leftarrow}(L)(s)\,\mu_i^{\rightarrow}(R)(s)}{\mu_i^{(0)}(s)},
\quad
Z_i(L,R):=\sum_{t\in\mathcal{S}_i}\frac{\mu_i^{\leftarrow}(L)(t)\,\mu_i^{\rightarrow}(R)(t)}{\mu_i^{(0)}(t)}.
\end{equation}
Define the Contextual Evidence Interaction statistic
\begin{equation}
\label{eq:cei}
\mathrm{CEI}_i(L,R)
:=\KL\!\Big(\mu_i(L,R)\,\big\|\,\mu_i^{\mathrm{PoE}}(L,R)\Big)\ \ge 0.
\end{equation}

\smallskip
\noindent\textbf{\normalfont Claim 1 (PoE-flatness null $\Longleftrightarrow$ additive Bayes factors).}
The following are equivalent:
\begin{align*}
\mu_i(L,R)&=\mu_i^{\mathrm{PoE}}(L,R) \\
&\Longleftrightarrow\quad
\exists\,c(L,R)\in\mathbb{R}\ \text{ s.t. } \\
&\qquad \log \mu_i(L,R)(s)=\log \mu_i^{\leftarrow}(L)(s)+\log \mu_i^{\rightarrow}(R)(s)-\log \mu_i^{(0)}(s)+c(L,R)
\end{align*}
for all $s\in\mathcal{S}_i$.
Consequently,
\begin{equation}
\label{eq:cei-zero-iff}
\mathrm{CEI}_i(L,R)=0\quad\Longleftrightarrow\quad \mu_i(L,R)=\mu_i^{\mathrm{PoE}}(L,R).
\end{equation}

\smallskip
\noindent\textbf{\normalfont Claim 2 (variational characterization).}
Define, for $q\in\Delta(\mathcal{S}_i)$,
\begin{equation}
\label{eq:J-functional}
\mathcal{J}_{i,L,R}(q)
:=\KL(q\|\mu_i^{\leftarrow})+\KL(q\|\mu_i^{\rightarrow})-\KL(q\|\mu_i^{(0)}).
\end{equation}
Then $\mathcal{J}_{i,L,R}$ admits the exact identity
\begin{equation}
\label{eq:J-KL}
\mathcal{J}_{i,L,R}(q)=\KL\!\Big(q\,\big\|\,\mu_i^{\mathrm{PoE}}(L,R)\Big)-\log Z_i(L,R),
\end{equation}
so $\mu_i^{\mathrm{PoE}}(L,R)$ is the unique minimizer of $\mathcal{J}_{i,L,R}$ and
\begin{equation}
\label{eq:cei-gap}
\mathrm{CEI}_i(L,R)=\mathcal{J}_{i,L,R}\!\big(\mu_i(L,R)\big)-\min_{q\in\Delta(\mathcal{S}_i)}\mathcal{J}_{i,L,R}(q).
\end{equation}

\smallskip
\noindent\textbf{\normalfont Claim 3 (generative sufficiency).}
Let $Z\in\mathcal{S}_i$ be the slot value and let $X^{L},X^{R}$ denote the left/right context observations at radii $(L,R)$.
If the beliefs correspond to exact posteriors
$\mu_i(L,R)(\cdot)=\mathbb{P}(Z=\cdot\mid X^{L},X^{R})$,
$\mu_i^{\leftarrow}(\cdot)=\mathbb{P}(Z=\cdot\mid X^{\leftarrow})$,
$\mu_i^{\rightarrow}(\cdot)=\mathbb{P}(Z=\cdot\mid X^{\rightarrow})$,
$\mu_i^{(0)}(\cdot)=\mathbb{P}(Z=\cdot)$,
and if $X^{\leftarrow}\perp X^{\rightarrow}\mid Z$, then $\mu_i(L,R)=\mu_i^{\mathrm{PoE}}(L,R)$ for every realization with positive probability.
\end{theorem}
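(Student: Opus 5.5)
The plan is to prove the three claims in order, since each is a short computation on the strictly positive simplex and each later claim reuses an identity from the earlier one.

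\textbf{Claim 1.} Since $\mu_i^{\mathrm{PoE}}(L,R)$ is defined by normalizing $\mu_i^{\leftarrow}\mu_i^{\rightarrow}/\mu_i^{(0)}$, the log form follows directly.
\begin{itemize}
\item[($\Rightarrow$)] If $\mu_i(L,R)=\mu_i^{\mathrm{PoE}}(L,R)$, taking logarithms gives the additive identity with constant $c(L,R)=-\log Z_i(L,R)$.
\item[($\Leftarrow$)] Exponentiate the identity to get $\mu_i(L,R)(s)=e^{c}\,\mu_i^{\leftarrow}(s)\mu_i^{\rightarrow}(s)/\mu_i^{(0)}(s)$. Summing over $s$ and using $\sum_s\mu_i(L,R)(s)=1$ forces $e^{c}=1/Z_i$, so $\mu_i(L,R)=\mu_i^{\mathrm{PoE}}(L,R)$.
\end{itemize}
The consequence \eqref{eq:cei-zero-iff} is Gibbs' inequality: $\KL(p\|q)=0$ if and only if $p=q$. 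This holds because both arguments have full support, so the KL is finite and strict convexity of $t\log t$ (or Jensen applied to $\log$) gives equality only when $p=q$.

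\textbf{Claim 2.} This is the identity I would spend the most care on, mainly because the boundary of the simplex needs handling.
\begin{itemize}
\item For $q\in\Delta(\mathcal{S}_i)$, expand each KL as $\sum_s q(s)\log q(s)-\sum_s q(s)\log(\cdot)$. The three entropy terms combine with coefficients $1+1-1=1$, leaving $\sum_s q(s)\log q(s)-\sum_s q(s)\log\bigl(\mu_i^{\leftarrow}(s)\mu_i^{\rightarrow}(s)/\mu_i^{(0)}(s)\bigr)$.
\item Write $\mu_i^{\leftarrow}\mu_i^{\rightarrow}/\mu_i^{(0)}=Z_i\,\mu_i^{\mathrm{PoE}}$ and use $\sum_s q(s)=1$. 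This yields $\KL(q\|\mu_i^{\mathrm{PoE}})-\log Z_i$, which is \eqref{eq:J-KL}.
\item All reference measures are strictly positive, so every KL is finite for every $q$ in the closed simplex. The convention $0\log 0=0$ then makes the identity valid on the boundary as well.
\end{itemize}
Since $\log Z_i$ does not depend on $q$, Gibbs' inequality shows the unique minimizer is $q=\mu_i^{\mathrm{PoE}}$, with minimum value $-\log Z_i$. Evaluating the identity at $q=\mu_i(L,R)$ and subtracting this minimum gives \eqref{eq:cei-gap}.

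\textbf{Claim 3.} I would use Bayes' rule together with conditional independence, assuming the realizations have positive probability so that all conditionals are well defined.
\begin{itemize}
\item Conditional independence gives $\mathbb{P}(Z=s\mid X^{\leftarrow},X^{\rightarrow})\propto \mathbb{P}(Z=s)\,\mathbb{P}(X^{\leftarrow}\mid Z=s)\,\mathbb{P}(X^{\rightarrow}\mid Z=s)$.
\item Substitute $\mathbb{P}(X^{\leftarrow}\mid Z=s)=\mathbb{P}(Z=s\mid X^{\leftarrow})\mathbb{P}(X^{\leftarrow})/\mathbb{P}(Z=s)$, and the analogous expression for $X^{\rightarrow}$.
\item Factors that do not depend on $s$ are absorbed into the normalizer. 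This leaves $\mu_i(L,R)(s)\propto \mu_i^{\leftarrow}(s)\mu_i^{\rightarrow}(s)/\mu_i^{(0)}(s)$.
\end{itemize}
Both sides are probability distributions, so normalization identifies them: $\mu_i(L,R)=\mu_i^{\mathrm{PoE}}(L,R)$.

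The only subtle point is to identify $X^{\leftarrow},X^{\rightarrow}$ with $X^L,X^R$, the radius-$(L,R)$ observations. Full support of $\mathbb{P}(Z=\cdot)$ ensures the division by $\mu_i^{(0)}$ is legitimate.

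The certificate statement then follows by contraposition from \eqref{eq:cei-zero-iff} combined with Claim 3.
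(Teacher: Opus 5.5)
Your proposal is correct and follows the paper's proof essentially step for step: the exponentiate-and-normalize argument for Claim 1, the cancellation of the three entropy terms yielding $\KL(q\|\mu_i^{\mathrm{PoE}})-\log Z_i$ for Claim 2, and Bayes' rule with conditional independence (likelihoods rewritten via Bayes, $s$-independent factors absorbed into the normalizer) for Claim 3. The only differences are cosmetic. You obtain uniqueness of the minimizer directly from Gibbs' inequality, where the paper invokes strict convexity of $\KL(\cdot\|\mu_i^{\mathrm{PoE}})$. You also note explicitly that the identity holds on the boundary of the simplex.
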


\begin{proof}
\textbf{Step 1: Claim 1 (PoE $\Leftrightarrow$ additive Bayes factors).}
Assume there exists $c(L,R)$ such that
\[
\log \mu_i(L,R)(s)=\log \mu_i^{\leftarrow}(L)(s)+\log \mu_i^{\rightarrow}(R)(s)-\log \mu_i^{(0)}(s)+c(L,R)
\quad\text{for all }s.
\]
Exponentiating gives
\[
\mu_i(L,R)(s)=e^{c(L,R)}\,\frac{\mu_i^{\leftarrow}(L)(s)\,\mu_i^{\rightarrow}(R)(s)}{\mu_i^{(0)}(s)}.
\]
Summing both sides over $s$ and using $\sum_s\mu_i(L,R)(s)=1$ yields
$e^{c(L,R)}=1/Z_i(L,R)$, hence $\mu_i(L,R)=\mu_i^{\mathrm{PoE}}(L,R)$.

Conversely, if $\mu_i(L,R)=\mu_i^{\mathrm{PoE}}(L,R)$, then by \eqref{eq:poe}
\[
\mu_i(L,R)(s)=\frac{1}{Z_i(L,R)}\,\frac{\mu_i^{\leftarrow}(L)(s)\,\mu_i^{\rightarrow}(R)(s)}{\mu_i^{(0)}(s)},
\]
and taking logs gives the additive identity with $c(L,R)=-\log Z_i(L,R)$.
This proves Claim 1 and the zero-iff statement \eqref{eq:cei-zero-iff} follows from the basic property
$\KL(P\|Q)=0\Leftrightarrow P=Q$.

\textbf{Step 2: Claim 2 (variational identity).}
Write $\mu^{\leftarrow}:=\mu_i^{\leftarrow}(L)$, $\mu^{\rightarrow}:=\mu_i^{\rightarrow}(R)$, $\mu^{(0)}:=\mu_i^{(0)}$ and abbreviate $\mu^{\mathrm{PoE}}:=\mu_i^{\mathrm{PoE}}(L,R)$, $Z:=Z_i(L,R)$.
Expand \eqref{eq:J-functional} using $\KL(q\|p)=\sum_s q(s)\log \frac{q(s)}{p(s)}$:
\begin{align*}
\mathcal{J}_{i,L,R}(q)
&=\sum_s q(s)\log\frac{q(s)}{\mu^{\leftarrow}(s)}+\sum_s q(s)\log\frac{q(s)}{\mu^{\rightarrow}(s)}-\sum_s q(s)\log\frac{q(s)}{\mu^{(0)}(s)}\\
&=\sum_s q(s)\log q(s)-\sum_s q(s)\big(\log\mu^{\leftarrow}(s)+\log\mu^{\rightarrow}(s)-\log\mu^{(0)}(s)\big).
\end{align*}
Define the unnormalized PoE density $\widetilde{\mu}^{\mathrm{PoE}}(s):=\mu^{\leftarrow}(s)\mu^{\rightarrow}(s)/\mu^{(0)}(s)$ so that
$\mu^{\mathrm{PoE}}(s)=\widetilde{\mu}^{\mathrm{PoE}}(s)/Z$ and hence
$\log\mu^{\mathrm{PoE}}(s)=\log\widetilde{\mu}^{\mathrm{PoE}}(s)-\log Z$.
Then
\begin{align*}
\KL(q\|\mu^{\mathrm{PoE}})
&=\sum_s q(s)\log\frac{q(s)}{\mu^{\mathrm{PoE}}(s)}
=\sum_s q(s)\log q(s)-\sum_s q(s)\log\mu^{\mathrm{PoE}}(s)\\
&=\sum_s q(s)\log q(s)-\sum_s q(s)\log\widetilde{\mu}^{\mathrm{PoE}}(s)+\log Z
=\mathcal{J}_{i,L,R}(q)+\log Z,
\end{align*}
which rearranges to \eqref{eq:J-KL}.

Because $\mu^{\mathrm{PoE}}$ has full support, $\KL(q\|\mu^{\mathrm{PoE}})$ is strictly convex in $q$ on the simplex.
By \eqref{eq:J-KL}, $\mathcal{J}_{i,L,R}$ differs only by an additive constant, so it is strictly convex as well and has a unique minimizer.
The unique minimizer is achieved when $\KL(q\|\mu^{\mathrm{PoE}})=0$, i.e., $q=\mu^{\mathrm{PoE}}$.
Finally, plugging $q=\mu_i(L,R)$ and $q^\star=\mu^{\mathrm{PoE}}$ into \eqref{eq:J-KL} yields \eqref{eq:cei-gap}.

\textbf{Step 3: Claim 3 (conditional independence sufficiency).}
Assume $X^{L}\perp X^{R}\mid Z$ and fix any realization $(x^{L},x^{R})$ with positive probability.
Bayes' rule gives
\[
\mathbb{P}(Z=z\mid x^{L},x^{R})\propto \mathbb{P}(x^{L},x^{R}\mid Z=z)\,\mathbb{P}(Z=z)
=\mathbb{P}(x^{L}\mid Z=z)\,\mathbb{P}(x^{R}\mid Z=z)\,\mathbb{P}(Z=z),
\]
where the equality uses conditional independence.
Rewrite each likelihood term via Bayes:
\[
\mathbb{P}(x^{L}\mid Z=z)=\frac{\mathbb{P}(Z=z\mid x^{L})\,\mathbb{P}(x^{L})}{\mathbb{P}(Z=z)},
\qquad
\mathbb{P}(x^{R}\mid Z=z)=\frac{\mathbb{P}(Z=z\mid x^{R})\,\mathbb{P}(x^{R})}{\mathbb{P}(Z=z)}.
\]
Substituting and absorbing factors independent of $z$ into the proportionality constant yields
\[
\mathbb{P}(Z=z\mid x^{L},x^{R})
\propto \frac{\mathbb{P}(Z=z\mid x^{L})\,\mathbb{P}(Z=z\mid x^{R})}{\mathbb{P}(Z=z)}.
\]
Renormalizing over $z\in\mathcal{S}_i$ gives exactly the PoE form \eqref{eq:poe}, hence $\mu_i(L,R)=\mu_i^{\mathrm{PoE}}(L,R)$ and $\mathrm{CEI}_i(L,R)=0$.
\end{proof}

\paragraph{Remark (information-theoretic interpretation).}
Claim~3 identifies a concrete \emph{generative} flatness null.
Under misspecification, systematic nonzero $\mathrm{CEI}$ can be interpreted as a posterior-level signature of conditional dependence between the left and right contexts given the slot variable.
The main text does not invoke this interpretation; the certificate is purely a statement about the belief field $\mu_i(L,R)$.



\subsection{Empirical Setup Details for Section~\ref{sec:existence}}
\label{app:existence-setup}

We instantiate the two-sided belief extractor $\mathcal{B}_{\leftrightarrow}$ with the frozen infilling model \texttt{distilroberta-base}
(a distilled RoBERTa model; \citealp{Liu2019RoBERTaAR, Sanh2019DistilBERTAD}).
We evaluate two corpora: WikiText-2 (raw) \citep{Merity2016PointerSM} and OpenWebText (10k documents) \citep{Gokaslan2019OpenWeb}.
For each corpus we sample $1000$ slot positions $i$ uniformly from the valid range (excluding boundaries where needed for the largest radii),
and evaluate beliefs on the radius grid $L,R\in\{0,1,2,4,8\}$.

\paragraph{Slot state spaces and smoothing.}
For each slot we construct a fixed finite support $\mathcal{S}_i$ using top-$k$ candidates from the one-sided boundary beliefs (union of left/right top-$k$) plus a tail bucket (as in \S\ref{sec:preliminaries});
all beliefs are projected onto $\mathcal{S}_i$ with residual mass assigned to $\mathrm{tail}$.
To satisfy the full-support assumptions of Appendix~\ref{app:holonomy}--\ref{app:poe}, we apply a small $\varepsilon$-smoothing when necessary.

\paragraph{Reference state and gauge-free summary.}
We pin $s_{\mathrm{ref}} := \arg\max_{s\in\mathcal{S}_i} \mu_i(0,0)(s)$ per slot (the most-probable state under the base belief; deterministic and per-slot stable). The flatness null $\Omega_{i,s}\equiv 0$ is gauge-invariant w.r.t. the choice of $s_{\mathrm{ref}}$, but the empirical magnitude $h_i$ is not; we therefore additionally report the gauge-free pairwise summary
\[
h_i^{\mathrm{pair}} := \Bigl(\tfrac{1}{|\mathcal{G}|}\sum_{(L,R)\in\mathcal{G}}\sum_{s,t} w_s\,w_t\,[\Omega_{i,s}(L,R)-\Omega_{i,t}(L,R)]^2 / 2\Bigr)^{1/2},
\]
which does not depend on any reference state because it uses pairwise differences $\Omega_s - \Omega_t$.

\paragraph{Controls.}
We compare \textbf{Real} text to two coherence-destroying controls.
\textbf{Suffix-swap} pairs sampled slots within a corpus and swaps their right-context windows, preserving marginal token statistics while breaking slot-specific two-sided coherence.
\textbf{Local-shuffle} independently permutes tokens within each right-context window, preserving the window multiset while destroying local order and compositional cues.
(We keep the left context unchanged in both controls.)

\paragraph{Reported statistics.}
Holonomy uses unit-square differences on the ordered radius grid; we report the per-slot holonomy magnitude $h_i$ in \eqref{eq:existence-holonomy-agg}
(with state weights $w_{i,s}(L,R)$ as defined in \eqref{eq:existence-holonomy-agg}).
CEI is reported at the maximal radii $(L_{\max},R_{\max})=(8,8)$.
Appendix~\ref{app:existence-empirical} reports medians, effect sizes, and $95\%$ bootstrap confidence intervals.

\subsection{Additional Empirical Diagnostics for Section~\ref{sec:existence}}
\label{app:existence-empirical}

This subsection provides additional diagnostics and full numeric summaries for the existence certificates in Section~\ref{sec:existence}; the experimental setup is in Appendix~\ref{app:existence-setup}.
In the main paper we include only Panels (A--B); here we provide Panels (C--D) and tables with medians, effect sizes, and bootstrap confidence intervals.
We report the per-slot holonomy magnitude $h_i$ from \eqref{eq:existence-holonomy-agg} and CEI at $(L_{\max},R_{\max})=(8,8)$.

\begin{figure}[t]
  \centering
  \includegraphics[width=\columnwidth]{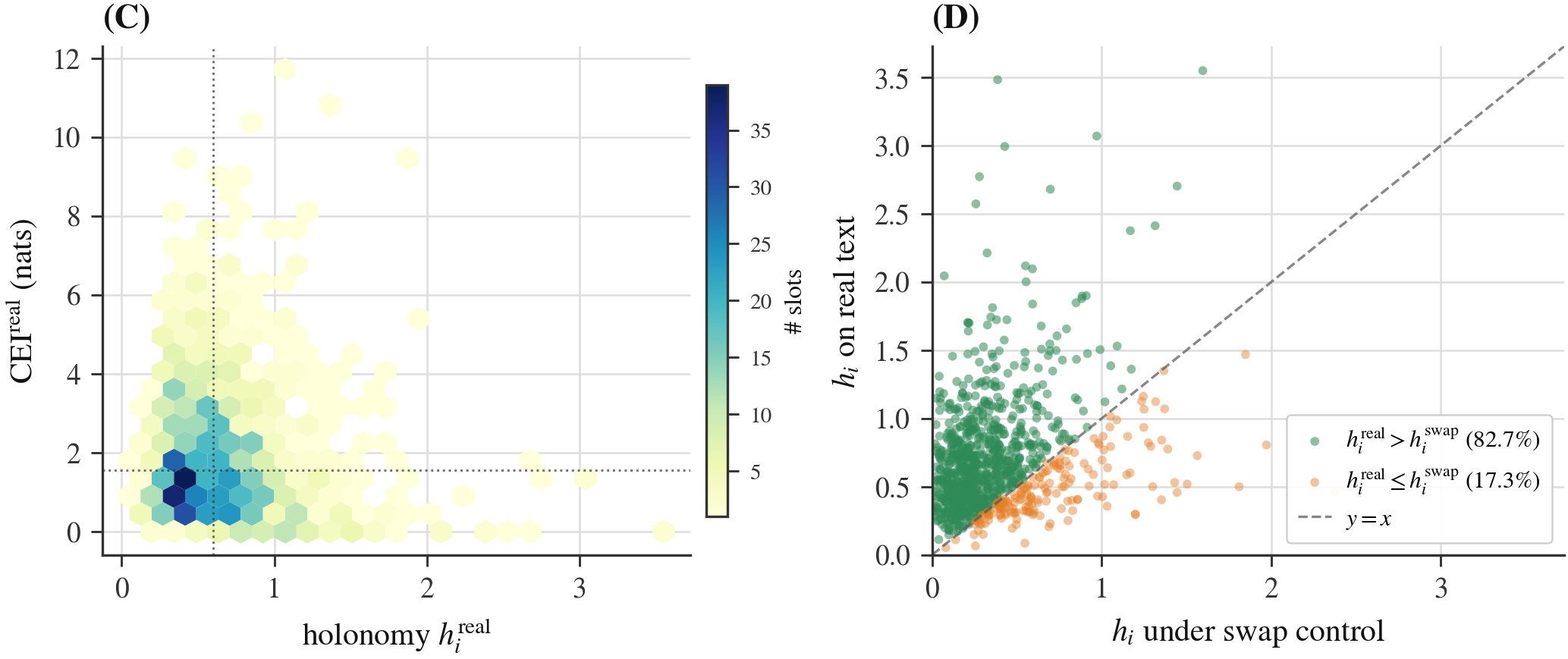}
  \caption{\textbf{Existence diagnostics (appendix: Panels C--D).}
  \textbf{(C)} Joint distribution of holonomy $h_i$ and CEI on real text (slot-level density): the two certificates are only loosely coupled, indicating that they capture complementary aspects of non-flatness rather than a single underlying statistic. \textbf{(D)} Slot-paired holonomy under real text versus the suffix-swap control: most points lie above the diagonal ($h_i^{\mathrm{real}}>h_i^{\mathrm{swap}}$), so coherent two-sided context induces stronger order-sensitive evidence interaction than the matched control.}
  \label{fig:existence-diagnostics}
\end{figure}

\begin{table}[t]
\centering
\caption{\textbf{Certificate I (Holonomy): summary statistics.}
Medians over $1000$ slots per dataset. $\Delta = \text{median(Real)} - \text{median(Control)}$ with $95\%$ bootstrap CIs.}
\label{tab:existence-holonomy}
\small
\setlength{\tabcolsep}{5pt}
\renewcommand{\arraystretch}{1.2}
\begin{tabular}{@{}l ccc cc@{}}
\toprule
\rowcolor{texHeader!12}
& \multicolumn{3}{c}{\textbf{Median $h_i$}} & \multicolumn{2}{c}{$\boldsymbol{\Delta}$ \textbf{[95\% CI]}} \\
\cmidrule(lr){2-4} \cmidrule(l){5-6}
\rowcolor{texHeader!8}
\textbf{Dataset} & \textbf{Real} & \textbf{Swap} & \textbf{Shuffle} & \textbf{Real$-$Swap} & \textbf{Real$-$Shuffle} \\
\midrule
WikiText-2   & \cellcolor{texBestBg}0.596 & 0.305 & 0.373 & \textbf{0.291}\,\tiny{[0.27, 0.31]} & \textbf{0.223}\,\tiny{[0.20, 0.25]} \\
\rowcolor{texRowAlt}
OpenWebText  & \cellcolor{texBestBg}0.586 & 0.326 & 0.391 & \textbf{0.260}\,\tiny{[0.24, 0.29]} & \textbf{0.195}\,\tiny{[0.18, 0.22]} \\
\bottomrule
\end{tabular}
\par\vspace{1pt}{\tiny All Real$-$Control gaps are positive with 95\% CIs excluding zero (bootstrap, $N{=}1000$).}
\end{table}

\begin{table}[t]
\centering
\caption{\textbf{Certificate II (CEI): summary statistics.}
Medians (nats) at $(L_{\max},R_{\max})=(8,8)$ over $1000$ slots. $\Delta = \text{median(Real)} - \text{median(Control)}$ with $95\%$ bootstrap CIs.}
\label{tab:existence-cei}
\small
\setlength{\tabcolsep}{5pt}
\renewcommand{\arraystretch}{1.2}
\begin{tabular}{@{}l ccc cc@{}}
\toprule
\rowcolor{texHeader!12}
& \multicolumn{3}{c}{\textbf{Median CEI (nats)}} & \multicolumn{2}{c}{$\boldsymbol{\Delta}$ \textbf{[95\% CI]}} \\
\cmidrule(lr){2-4} \cmidrule(l){5-6}
\rowcolor{texHeader!8}
\textbf{Dataset} & \textbf{Real} & \textbf{Swap} & \textbf{Shuffle} & \textbf{Real$-$Swap} & \textbf{Real$-$Shuffle} \\
\midrule
WikiText-2   & \cellcolor{texBestBg}1.514 & 0.697 & 1.319 & \textbf{0.816}\,\tiny{[0.68, 0.95]} & \textbf{0.195}\,\tiny{[0.04, 0.33]} \\
\rowcolor{texRowAlt}
OpenWebText  & \cellcolor{texBestBg}1.274 & 0.571 & 1.107 & \textbf{0.703}\,\tiny{[0.54, 0.84]} & \textbf{0.168}\,\tiny{[0.01, 0.32]} \\
\bottomrule
\end{tabular}
\par\vspace{1pt}{\tiny All Real$-$Control gaps are positive with 95\% CIs excluding zero (bootstrap, $N{=}1000$).}
\end{table}

\section{Additional Theory for Texture (Section~\ref{sec:texture})}
\label{app:texture}

This appendix supplies the technical foundations underlying the definition of Texture in
Section~\ref{sec:texture}. The main text defines Texture as a composition of three steps:
(i) build a reversible neutral semantic-motion kernel from a ground cost and turn it into a debiased Sinkhorn transport distance $\mathsf d_i$ between beliefs,
(ii) evaluate the two-sided posterior field $\mu_i(L,R)$ on a context-radius grid,
and (iii) read signed curvature as the activity-weighted transport contraction of that field across context rectangles.
The goal here is to justify that the kernel/transport divergence is well-posed, computable, and stable, and to state the four defining properties of Contextual Ricci Texture.

Throughout this appendix we fix a slot $i$ and suppress the subscript $i$ to simplify notation.
All objects below should be read as slot-local. We write $\mathcal{S}$ for the finite state space
(e.g.\ $\mathcal{S}=\mathcal{C}\cup\{\mathrm{tail}\}$ in the main text), and assume
$\mu^L,\mu^R\in\Delta^\circ(\mathcal{S})$ unless stated otherwise.

\subsection{Neutral Kernel: Reversibility from a Symmetric Cost}
\label{app:texture-kernel}

The neutral semantic motion kernel in the main text is induced from a symmetric ground cost
$c:\mathcal{S}\times\mathcal{S}\to\mathbb{R}_{\ge 0}$ via a Gibbs affinity
$G(s,s')=\exp(-c(s,s')/\varepsilon)$ and row-normalization
$K(s,s')=G(s,s')/\sum_u G(s,u)$.
The following lemma records the key structural property: reversibility.

\begin{lemma}[Reversibility of the Gibbs row-normalized kernel]
\label{lem:texture-reversible}
Assume $G(s,s')>0$ and $G(s,s')=G(s',s)$ for all $s,s'\in\mathcal{S}$.
Define
\[
K(s,s'):=\frac{G(s,s')}{\sum_{u\in\mathcal{S}}G(s,u)}.
\]
Let $\pi$ be defined (up to normalization) by
\[
\pi(s)\ \propto\ \sum_{u\in\mathcal{S}}G(s,u).
\]
Then $K$ is reversible with respect to $\pi$, i.e.
$\pi(s)K(s,s')=\pi(s')K(s',s)$ for all $s,s'$.
Consequently, $\pi$ is stationary: $\pi^\top K=\pi^\top$.
\end{lemma}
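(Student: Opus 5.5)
The plan is a direct verification of detailed balance, followed by the standard summation argument that reversibility implies stationarity. Write $D(s):=\sum_{u\in\mathcal{S}}G(s,u)$ for the row sums of the Gibbs affinity. Since $G>0$ entrywise and $\mathcal{S}$ is finite, each $D(s)$ is strictly positive and finite. Hence $K(s,s')=G(s,s')/D(s)$ is well defined and row-stochastic. Likewise $Z:=\sum_{s}D(s)=\sum_{s,u}G(s,u)>0$, so $\pi(s):=D(s)/Z$ is a genuine probability vector with full support. This first step only fixes normalizations so that the proportionality in the statement becomes an equality.

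The key computation is the detailed-balance identity. Substituting the definitions gives
\[
\pi(s)K(s,s')=\frac{D(s)}{Z}\cdot\frac{G(s,s')}{D(s)}=\frac{G(s,s')}{Z}.
\]
The row normalizer cancels exactly against the stationary weight; this cancellation is the whole point of choosing $\pi\propto D$. By the same computation, $\pi(s')K(s',s)=G(s',s)/Z$. The symmetry hypothesis $G(s,s')=G(s',s)$ then makes the two expressions equal for every pair $(s,s')$, which is reversibility. Note that the symmetry of $G$ is inherited from the symmetry of the ground cost $c$, so this is the only place that assumption is used.

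Stationarity then follows by summing detailed balance over the first index. For each $s'$,
\[
(\pi^\top K)(s')=\sum_s\pi(s)K(s,s')=\sum_s\pi(s')K(s',s)=\pi(s')\sum_sK(s',s)=\pi(s').
\]
The last equality uses that $K$ is row-stochastic.

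There is no real obstacle here. The only points needing care are that positivity of $G$ guarantees nonzero normalizers, so no division by zero occurs, and that the argument uses the normalized $\pi$ rather than just a proportionality. Both are dispatched in the first step. Optionally, I would remark that the conclusion is invariant to the choice of normalizing constant $Z$, since detailed balance is homogeneous in $\pi$.
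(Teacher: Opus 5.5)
Your proof is correct and follows essentially the same route as the paper: the row normalizer cancels against $\pi(s)\propto\sum_u G(s,u)$, leaving $G(s,s')$, and symmetry of $G$ then gives detailed balance. Stationarity follows in both by summing detailed balance over $s$. Your explicit normalizing constant $Z$ is only a minor tidying of the paper's proportionality argument.
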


\begin{proof}
By definition,
\[
\pi(s)K(s,s')
\ \propto\
\Big(\sum_{u}G(s,u)\Big)\frac{G(s,s')}{\sum_{u}G(s,u)}
=G(s,s')
=G(s',s)
\ \propto\
\pi(s')K(s',s).
\]
Summing the detailed balance identity over $s$ yields stationarity.
\end{proof}

\paragraph{Tail bucket geometry and full support.}
The main text constructs a canonical slot support $\mathcal{S}=\mathcal{C}\cup\{\mathrm{tail}\}$ to avoid renormalization artifacts from truncation.
To prevent the tail state from becoming either an absorbing sink (too cheap) or an unreachable outlier (too expensive),
a conservative choice is to set, for each $s\in\mathcal{C}$,
\[
c(\mathrm{tail},s)=c(s,\mathrm{tail}) := \mathrm{median}_{u\in\mathcal{C}}\,c(u,s),
\qquad
c(\mathrm{tail},\mathrm{tail})=0,
\]
and then define $G,K$ from this augmented cost as in the main text.
All theoretical statements below assume strictly positive kernels and interior beliefs; in implementation we enforce these mild conditions by tiny smoothing.
For example, for $\delta\in(0,1)$ one may replace boundary beliefs by
$\tilde\mu=(1-\delta)\mu+\delta\,\mathrm{Unif}(\mathcal{S})$,
and similarly add a vanishingly small constant to $G$ before row-normalization when needed.

\paragraph{Why reversibility matters.}
Reversibility is the algebraic form of left--right symmetry for the neutral reference dynamics.
Starting the reference path measure from $\pi$ produces a time-reversible baseline, ensuring that
the reconciliation step does not privilege either boundary belief by construction.

\subsection{Two-Step Schr\"odinger Bridge: Reduction and Scaling Form}
\label{app:texture-bridge}

As optional foundations for Texture's neutral transport geometry, we record the two-step Schr\"odinger bridge and its Sinkhorn scaling form. The canonical object used in the main text is the entropic optimal-transport cost $C_\varepsilon$ of~\eqref{eq:texture-otcost} (and its debiased divergence $\mathsf d_i$), computed directly by Sinkhorn iterations; the bridge below recovers the same coupling and is used only for the appendix diagnostics.
Fix a strictly positive Markov kernel $K$ on $\mathcal{S}$ and let $\pi$ be any strictly positive
stationary distribution for $K$ (in our construction, Lemma~\ref{lem:texture-reversible} guarantees this).
Define the two-step reference path measure on $\mathcal{S}^3$:
\begin{equation}
\label{eq:app-texture-reference}
R(s_0,s_1,s_2):=\pi(s_0)\,K(s_0,s_1)\,K(s_1,s_2).
\end{equation}
Given boundary beliefs $\mu^L,\mu^R\in\Delta^\circ(\mathcal{S})$, the two-step Schr\"odinger bridge is
\begin{equation}
\label{eq:app-texture-sb}
P^\star \in \arg\min_{P:\;P_0=\mu^L,\;P_2=\mu^R}\ \KL(P\|R),
\end{equation}
and (recorded only as optional foundations, not used by the Texture operator) the midpoint is $\mu^{\mathrm{mid}}:=(P^\star)_1$.

\begin{theorem}[Two-step bridge structure and computability]
\label{thm:texture-wellposed-detailed}
Assume $K(s,s')>0$ for all $s,s'\in\mathcal{S}$ and $\mu^L,\mu^R\in\Delta^\circ(\mathcal{S})$.
Let $R$ be defined by \eqref{eq:app-texture-reference}.
Then:

\paragraph{(i) Existence and uniqueness.}
The bridge problem \eqref{eq:app-texture-sb} admits a unique minimizer $P^\star$.

\paragraph{(ii) Endpoint reduction.}
Let $M:=K^2$ be the two-step kernel and define the endpoint reference coupling
\begin{equation}
\label{eq:app-texture-R02}
R_{02}(s_0,s_2):=\sum_{s_1}R(s_0,s_1,s_2)=\pi(s_0)\,M(s_0,s_2).
\end{equation}
Let $\gamma^\star:=(P^\star)_{02}$. Then $\gamma^\star$ is the unique minimizer of the static KL projection
\begin{equation}
\label{eq:app-texture-static}
\gamma^\star \in \arg\min_{\gamma:\;\gamma_0=\mu^L,\;\gamma_2=\mu^R}\ \KL(\gamma\|R_{02}),
\end{equation}
and $P^\star$ is obtained by pairing $\gamma^\star$ with the \emph{reference} conditional of $S_1$ given endpoints:
\begin{equation}
\label{eq:app-texture-Pstar}
P^\star(s_0,s_1,s_2)=\gamma^\star(s_0,s_2)\,R(s_1\mid s_0,s_2),
\qquad
R(s_1\mid s_0,s_2)=\frac{K(s_0,s_1)K(s_1,s_2)}{M(s_0,s_2)}.
\end{equation}

\paragraph{(iii) Scaling form.}
There exist positive vectors $a,b:\mathcal{S}\to\mathbb{R}_{>0}$, unique up to the gauge
$a\leftarrow ca$, $b\leftarrow c^{-1}b$, such that
\begin{equation}
\label{eq:app-texture-scaling}
\gamma^\star(s_0,s_2)=a(s_0)\,R_{02}(s_0,s_2)\,b(s_2),
\end{equation}
with $(a,b)$ satisfying the Schr\"odinger system
\begin{equation}
\label{eq:app-texture-schro}
\mu^L(s_0)=a(s_0)\sum_{s_2}R_{02}(s_0,s_2)b(s_2),
\qquad
\mu^R(s_2)=b(s_2)\sum_{s_0}a(s_0)R_{02}(s_0,s_2).
\end{equation}
In particular, $(a,b)$ can be computed by iterative proportional fitting (Sinkhorn/IPFP) on the strictly positive matrix $R_{02}$.

\paragraph{(iv) Midpoint message passing (optional foundations; not used by the Texture operator).}
Let
\[
f(s_1):=\sum_{s_0} a(s_0)\pi(s_0)K(s_0,s_1),
\qquad
g(s_1):=\sum_{s_2} K(s_1,s_2)b(s_2).
\]
Then the midpoint marginal satisfies the factorization
\begin{equation}
\label{eq:app-texture-midpoint-factor}
\mu^{\mathrm{mid}}(s_1)=f(s_1)\,g(s_1)\qquad\forall s_1\in\mathcal{S}.
\end{equation}
\end{theorem}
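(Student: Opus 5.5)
The plan is to reduce the dynamic (three-time) bridge problem to a static two-marginal KL projection with the chain rule for KL divergence. Existence, uniqueness and the scaling form then follow from standard strictly-convex / matrix-scaling facts on a finite, strictly positive reference. The midpoint factorization is a direct computation from the resulting product form of $P^\star$.

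\textbf{Step 1 (chain rule and reduction, giving (i)--(ii)).} Since $K>0$ and $\pi>0$, the reference $R$ is strictly positive on $\mathcal{S}^3$, and so is $R_{02}=\pi M$. For any $P$ with $P\ll R$, write $P(s_0,s_1,s_2)=P_{02}(s_0,s_2)\,P(s_1\mid s_0,s_2)$ and likewise for $R$. The chain rule gives
\[
\KL(P\|R)=\KL(P_{02}\|R_{02})+\sum_{s_0,s_2}P_{02}(s_0,s_2)\,\KL\big(P(\cdot\mid s_0,s_2)\,\|\,R(\cdot\mid s_0,s_2)\big).
\]
The endpoint constraints $P_0=\mu^L$, $P_2=\mu^R$ involve only $P_{02}$. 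So for each fixed admissible $\gamma=P_{02}$, the second term is minimized (to zero) exactly by choosing the reference conditional. This gives \eqref{eq:app-texture-Pstar}, with $R(s_1\mid s_0,s_2)=\pi(s_0)K(s_0,s_1)K(s_1,s_2)/(\pi(s_0)M(s_0,s_2))$. It also shows that minimizing $\KL(P\|R)$ is equivalent to the static problem \eqref{eq:app-texture-static}.

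The feasible set $\Pi(\mu^L,\mu^R)$ is nonempty (it contains $\mu^L\otimes\mu^R$), compact and convex. The map $\gamma\mapsto\KL(\gamma\|R_{02})$ is continuous and strictly convex there, since $R_{02}>0$ and $\gamma\log\gamma$ is strictly convex. Hence the static problem has a unique minimizer $\gamma^\star$. The dynamic minimizer is then unique too: any minimizer must have $P_{02}=\gamma^\star$ and the reference conditional, which proves (i).

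\textbf{Step 2 (scaling form, giving (iii)).} I will show that $\gamma^\star$ has full support and then apply Lagrange multipliers. Full support holds because the derivative of $x\log x$ is $-\infty$ at $0$: if some entry vanished, perturbing toward the interior point $\mu^L\otimes\mu^R$ would strictly decrease the objective. At an interior point, the KKT conditions give $\log(\gamma^\star/R_{02})=\phi(s_0)+\psi(s_2)$. Setting $a=e^{\phi}$ and $b=e^{\psi}$ yields \eqref{eq:app-texture-scaling}, and imposing the marginal constraints yields \eqref{eq:app-texture-schro}.

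For uniqueness up to gauge, suppose $aR_{02}b=a'R_{02}b'$ entrywise. Then $a(s_0)b(s_2)=a'(s_0)b'(s_2)$ for all pairs, since $R_{02}>0$. This forces $a'/a$ and $b/b'$ to be the same constant. Convergence of IPFP/Sinkhorn on a strictly positive matrix with positive marginals is the classical Sinkhorn theorem, e.g.\ via Hilbert projective metric contraction; I will simply cite it.

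\textbf{Step 3 (midpoint, giving (iv)).} Marginalize \eqref{eq:app-texture-Pstar} over $s_0$ and $s_2$ after substituting \eqref{eq:app-texture-scaling}. The factor $M(s_0,s_2)$ cancels, leaving
\[
\mu^{\mathrm{mid}}(s_1)=\sum_{s_0,s_2}a(s_0)\pi(s_0)K(s_0,s_1)K(s_1,s_2)b(s_2)=f(s_1)g(s_1).
\]

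I expect the main obstacle to be the interior/full-support argument for $\gamma^\star$ in Step 2, which is needed to legitimately apply the first-order conditions. I will handle it with the one-sided directional-derivative argument toward $\mu^L\otimes\mu^R$. Everything else is bookkeeping.
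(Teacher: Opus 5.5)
Your proposal is correct and follows essentially the same route as the paper. Like the paper, you use the KL chain rule to reduce the three-time problem to the static projection onto $\Pi(\mu^L,\mu^R)$, Lagrange multipliers for the scaling form, and direct marginalization (with $M(s_0,s_2)$ cancelling) for the midpoint. The differences are refinements in your favor. You get uniqueness of $P^\star$ from uniqueness of $\gamma^\star$ rather than from strict convexity on path space. You also actually justify two points the paper only asserts: full support of $\gamma^\star$ (via the directional derivative toward $\mu^L\otimes\mu^R$) and the converse ``only up to gauge'' direction.
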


\begin{proof}
We prove the four claims in order.

\paragraph{(i) Existence and uniqueness.}
The feasible set
\[
\mathcal{F}:=\{P\in\Delta(\mathcal{S}^3): P_0=\mu^L,\;P_2=\mu^R\}
\]
is a nonempty convex polytope (linear constraints intersected with a simplex), hence compact.
Since $R(s_0,s_1,s_2)>0$ for all triples, $\KL(P\|R)$ is finite on $\mathcal{F}$ and is strictly convex in $P$
because $x\mapsto x\log x$ is strictly convex on $\mathbb{R}_{\ge 0}$.
A strictly convex lower-semicontinuous function on a compact convex set has a unique minimizer,
so $P^\star$ exists and is unique.

\paragraph{(ii) Endpoint reduction.}
Let $\gamma:=P_{02}$ and let $Q(\cdot\mid s_0,s_2)$ denote the conditional distribution of $S_1$ given $(S_0,S_2)=(s_0,s_2)$ under $P$.
Because $R$ is strictly positive, its corresponding conditional $R(\cdot\mid s_0,s_2)$ is well-defined and given by \eqref{eq:app-texture-Pstar}.
The KL chain rule yields the decomposition
\begin{equation}
\label{eq:app-texture-kl-chain}
\KL(P\|R)
=
\KL(\gamma\|R_{02})
+\mathbb{E}_{(s_0,s_2)\sim \gamma}\Big[\KL\big(Q(\cdot\mid s_0,s_2)\,\|\,R(\cdot\mid s_0,s_2)\big)\Big].
\end{equation}
The second term is nonnegative and equals zero iff $Q(\cdot\mid s_0,s_2)=R(\cdot\mid s_0,s_2)$ for $\gamma$-almost every $(s_0,s_2)$.
Therefore, among all $P$ with a fixed endpoint marginal $\gamma$, the unique minimizer of $\KL(P\|R)$ is obtained by choosing
the reference conditional at every endpoint pair, i.e.\ $P(s_0,s_1,s_2)=\gamma(s_0,s_2)R(s_1\mid s_0,s_2)$.
Thus, minimizing \eqref{eq:app-texture-sb} reduces to minimizing the first term in \eqref{eq:app-texture-kl-chain},
which is exactly the static problem \eqref{eq:app-texture-static}.
Since the static objective is strictly convex in $\gamma$, the minimizer $\gamma^\star$ is unique.
Plugging $\gamma^\star$ back into the minimizing conditional gives \eqref{eq:app-texture-Pstar} and hence the dynamic optimizer $P^\star$.

\paragraph{(iii) Scaling form.}
Consider the static problem \eqref{eq:app-texture-static}.
Form the Lagrangian with multipliers $\lambda:\mathcal{S}\to\mathbb{R}$ and $\nu:\mathcal{S}\to\mathbb{R}$ (distinct from the ratio guard $\eta$ used in the cell score):
\[
\mathcal{L}(\gamma,\lambda,\nu)
=
\sum_{s_0,s_2}\gamma(s_0,s_2)\log\frac{\gamma(s_0,s_2)}{R_{02}(s_0,s_2)}
+\sum_{s_0}\lambda(s_0)\Big(\mu^L(s_0)-\sum_{s_2}\gamma(s_0,s_2)\Big)
+\sum_{s_2}\nu(s_2)\Big(\mu^R(s_2)-\sum_{s_0}\gamma(s_0,s_2)\Big).
\]
At the (unique) optimum, first-order optimality holds and $\gamma^\star$ has full support (because $R_{02}>0$ and $\mu^L,\mu^R>0$),
so differentiating w.r.t.\ $\gamma(s_0,s_2)$ gives
\[
\log\frac{\gamma^\star(s_0,s_2)}{R_{02}(s_0,s_2)} + 1 - \lambda(s_0) - \nu(s_2)=0.
\]
Exponentiating yields
\[
\gamma^\star(s_0,s_2)=R_{02}(s_0,s_2)\exp(\lambda(s_0)-1)\exp(\nu(s_2))
=:a(s_0)R_{02}(s_0,s_2)b(s_2),
\]
which is \eqref{eq:app-texture-scaling}.
The marginal constraints become exactly \eqref{eq:app-texture-schro}.
Gauge freedom follows because multiplying $a$ by $c$ and dividing $b$ by $c$ leaves $\gamma^\star$ unchanged.

\paragraph{(iv) Midpoint message passing.}
Combine \eqref{eq:app-texture-Pstar} with the scaling form \eqref{eq:app-texture-scaling}.
Using $R_{02}(s_0,s_2)=\pi(s_0)M(s_0,s_2)$ and $R(s_1\mid s_0,s_2)=K(s_0,s_1)K(s_1,s_2)/M(s_0,s_2)$,
we obtain
\[
P^\star(s_0,s_1,s_2)
=
a(s_0)\pi(s_0)M(s_0,s_2)b(s_2)\cdot \frac{K(s_0,s_1)K(s_1,s_2)}{M(s_0,s_2)}
=
a(s_0)\pi(s_0)K(s_0,s_1)K(s_1,s_2)b(s_2).
\]
Summing over $s_0,s_2$ yields
\[
\mu^{\mathrm{mid}}(s_1)
=
\sum_{s_0,s_2}a(s_0)\pi(s_0)K(s_0,s_1)\,K(s_1,s_2)b(s_2)
=
\Big(\sum_{s_0}a(s_0)\pi(s_0)K(s_0,s_1)\Big)\Big(\sum_{s_2}K(s_1,s_2)b(s_2)\Big),
\]
which is exactly \eqref{eq:app-texture-midpoint-factor}.
\end{proof}

\paragraph{Implementation note (Sinkhorn/IPFP).}
Writing $Q:=R_{02}$ as a strictly positive $|\mathcal{S}|\times|\mathcal{S}|$ matrix, the Schr\"odinger system
\eqref{eq:app-texture-schro} is solved by iterative proportional fitting:
\[
a^{(t+1)}=\mu^L \oslash (Q b^{(t)}),
\qquad
b^{(t+1)}=\mu^R \oslash (Q^\top a^{(t+1)}),
\]
where $\oslash$ denotes elementwise division and $(Qb)(s_0)=\sum_{s_2}Q(s_0,s_2)b(s_2)$.
On strictly positive matrices this converges to the scaling vectors yielding the unique $\gamma^\star$,
and hence $\mu^{\mathrm{mid}}$ via \eqref{eq:app-texture-midpoint-factor}. Standard references include
Sinkhorn--Knopp and modern OT treatments \citep{Cuturi2013SinkhornDL}.

\subsection{Properties of Contextual Ricci Texture}
\label{app:texture-properties}

Contextual Ricci Texture (Section~\ref{subsec:texture-curvature}) reads signed curvature from how the two-sided posterior field $\mu_i(L,R)$ contracts or expands across a $2{\times}2$ context rectangle, measured by the debiased Sinkhorn divergence $\mathsf d_i$~\eqref{eq:texture-sinkdiv}. We collect its four defining properties; sign conventions and the activity-weighted aggregation~\eqref{eq:texture-kappa} are as in the main text. Throughout, a cell $g$ has corners $\mu_{00},\mu_{10},\mu_{01},\mu_{11}$ and side lengths $\ell_L^0,\ell_L^R,\ell_R^0,\ell_R^L\ge0$, and $s_i(g)$ is its signed contraction~\eqref{eq:texture-cellscore}.

\begin{proposition}[Boundedness]
\label{prop:ricci-bounded}
Fix guards $\eta>0$, $\epsilon_0>0$ and activity-gate parameters $\lambda\ge0$, $\tau_a>0$. For every cell $g$, $s_i(g)\in[-1,1]$, and therefore the activity-weighted and median aggregations satisfy $\kappa_i^{\mathrm{Tex}},\,\kappa_i^{\mathrm{med}}\in[-1,1]$.
\end{proposition}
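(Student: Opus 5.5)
The argument proceeds in three steps: a scalar bound on each ratio term, an averaging step for the cell score, and a convex-combination argument for the two aggregations.

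\emph{Step 1 (ratio bound).} Since $\mathsf d_i$ is defined as the square root of a positive part, every side length $\ell_L^0,\ell_L^R,\ell_R^0,\ell_R^L$ is a well-defined nonnegative real. For any $x,y\ge0$ and $\eta>0$ the denominator satisfies $x+y+\eta>0$, so the ratio $(x-y)/(x+y+\eta)$ is well defined. Moreover $|x-y|\le x+y<x+y+\eta$, which gives
\[
\Bigl|\tfrac{x-y}{x+y+\eta}\Bigr|<1 .
\]

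\emph{Step 2 (cell score).} The cell score $s_i(g)$ in~\eqref{eq:texture-cellscore} is the average of two such ratios, one with $(x,y)=(\ell_L^0,\ell_L^R)$ and one with $(x,y)=(\ell_R^0,\ell_R^L)$. An average of two numbers in $(-1,1)$ lies in $(-1,1)$, so $s_i(g)\in(-1,1)\subset[-1,1]$. The bound is strict, although only the closed interval is needed.

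\emph{Step 3 (activity-weighted aggregation).} The gate $a_i(g)=\tanh(\cdot)$ is evaluated at a nonnegative argument. Indeed, $\|\Omega_i(g)\|\ge0$, $\lambda\ge0$, $\mathrm{CEI}_i(g)\ge0$ by~\eqref{eq:cei}, and $\tau_a>0$. Hence $a_i(g)\in[0,1)$. Write $A:=\sum_g a_i(g)\ge0$. Then
\[
|\kappa_i^{\mathrm{Tex}}|\le \frac{\sum_g a_i(g)\,|s_i(g)|}{A+\epsilon_0}\le \frac{A}{A+\epsilon_0}<1 .
\]
When $A=0$ the expression equals $0$, consistent with the main-text convention.

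\emph{Step 4 (median aggregation).} The median of finitely many values of $s_i(g)$ lies between their minimum and maximum, so it also lies in $[-1,1]$. If the median is taken over an even number of cells and defined as the midpoint of the two central values, it is still a convex combination of points in $[-1,1]$. If there are no active cells, I would adopt the same zero default as for $\kappa_i^{\mathrm{Tex}}$.

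The only point needing care is the definition of $\kappa_i^{\mathrm{med}}$, which is not given in the main text. I would state the assumed definition as the median of $\{s_i(g)\}$ over the active cells, with value $0$ if there are none. Under that definition the argument of Step 4 goes through unchanged. Everything else is elementary inequalities.
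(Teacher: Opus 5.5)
Your proof is correct and follows the paper's argument essentially step for step: you bound each ratio via $|x-y|\le x+y< x+y+\eta$, average to get $s_i(g)\in[-1,1]$, use nonnegativity of the $\tanh$ gate (a norm plus $\lambda\,\mathrm{CEI}\ge0$, divided by $\tau_a>0$) to control $\kappa_i^{\mathrm{Tex}}$, and note that a median of values in $[-1,1]$ stays there. Your explicit estimate $|\kappa_i^{\mathrm{Tex}}|\le A/(A+\epsilon_0)$ is slightly more precise than the paper's description of $\kappa_i^{\mathrm{Tex}}$ as a ``convex combination'', since the normalized weights actually sum to $A/(A+\epsilon_0)<1$; also, the paper's appendix does define $\kappa_i^{\mathrm{med}}=\operatorname{median}_g s_i(g)$, over all cells rather than only active ones, and your Step 4 applies to that definition unchanged.
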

\begin{proof}
Each ratio $(\ell_L^0-\ell_L^R)/(\ell_L^0+\ell_L^R+\eta)$ has numerator bounded in absolute value by $\ell_L^0+\ell_L^R$, hence by the denominator (as $\ell\ge0$ and $\eta>0$ make the denominator strictly positive), so it lies in $[-1,1]$; $s_i(g)$ is the average of two such ratios, so $s_i(g)\in[-1,1]$. The gate argument $(\lVert\Omega_i(g)\rVert+\lambda\,\mathrm{CEI}_i(g))/\tau_a$ is nonnegative (a norm plus a nonnegative interaction term scaled by $\lambda\ge0$, divided by $\tau_a>0$), so the weights $a_i(g)=\tanh(\cdot)\in[0,1)$ are nonnegative; hence $\kappa^{\mathrm{Tex}}$ is a convex combination of values in $[-1,1]$ (the guard $\epsilon_0>0$ only shrinks magnitude), and the median of values in $[-1,1]$ lies in $[-1,1]$.
\end{proof}

\begin{proposition}[Flat-rectangle null]
\label{prop:ricci-flat}
If the field is \emph{additively separable} on a cell, i.e.\ the transport increments satisfy $\ell_L^0=\ell_L^R$ and $\ell_R^0=\ell_R^L$ (the left increment is unaffected by right context and vice versa), then $s_i(g)=0$. In particular, if $\mu_i(L,R)$ factors as an independent product-of-experts whose one-sided transport increments do not interact, every cell scores $0$ and $\kappa_i^{\mathrm{Tex}}=0$.
\end{proposition}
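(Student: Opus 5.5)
The first claim is a direct substitution into the cell score \eqref{eq:texture-cellscore}. If $\ell_L^0=\ell_L^R$, the numerator of the first ratio is $\ell_L^0-\ell_L^R=0$. Its denominator $\ell_L^0+\ell_L^R+\eta\ge\eta>0$ is strictly positive by the same observation used in Proposition~\ref{prop:ricci-bounded}, so the ratio is exactly $0$ rather than an indeterminate form. The same argument with $\ell_R^0=\ell_R^L$ kills the second ratio, so $s_i(g)=\tfrac12(0+0)=0$. This holds cellwise and needs no assumption on the activity gate.

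For the ``in particular'' clause, I would first fix what the hypothesis means. I read ``independent product-of-experts whose one-sided transport increments do not interact'' as follows: on every adjacent cell $g\in\mathcal{G}$, the hypothesis of the first part holds, namely $\mathsf d_i(\mu_{00},\mu_{10})=\mathsf d_i(\mu_{01},\mu_{11})$ and $\mathsf d_i(\mu_{00},\mu_{01})=\mathsf d_i(\mu_{10},\mu_{11})$.

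With that reading, the first part gives $s_i(g)=0$ for every $g$. The numerator of \eqref{eq:texture-kappa} is then $\sum_g a_i(g)\cdot 0=0$. The denominator $\sum_g a_i(g)+\epsilon_0\ge\epsilon_0>0$, because the gates are nonnegative by Proposition~\ref{prop:ricci-bounded}. Hence $\kappa_i^{\mathrm{Tex}}=0$. This also covers the degenerate case where all gates vanish, which the main text already reads as $0$. The median aggregate is likewise $0$, since it is a median of zeros.

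The only real obstacle is the interpretive step: a PoE-factored field does \emph{not} automatically give equal transport increments. PoE additivity is a log-odds statement (Theorem~\ref{thm:holonomy-flatness}, Theorem~\ref{thm:poe-certificate}), and $\mathsf d_i$ is not translation-invariant in log-odds coordinates. So I would not try to derive $\ell_L^0=\ell_L^R$ from PoE alone. Instead I would state explicitly that the ``do not interact'' clause is exactly the equal-increment hypothesis applied on every cell. As a remark, I would give a sufficient condition. Suppose $\mu(L,R)=T_R\,\nu(L)$, where each $T_R$ is a map on $\Delta(\mathcal{S}_i)$ that preserves $\mathsf d_i$, for example a permutation of $\mathcal{S}_i$ that is an isometry of the cost $c_i$. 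Then $\mathsf d_i(\mu(L,R),\mu(L',R))=\mathsf d_i(\nu(L),\nu(L'))$ is independent of $R$, and symmetrically for $L$. So a trivially interacting PoE field indeed scores zero.
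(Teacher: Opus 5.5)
Your proof is correct and follows the paper's route: substitute the equal increments into the cell score so both ratios vanish, then aggregate the zeros into $\kappa_i^{\mathrm{Tex}}=0$. The paper does this in one line, without your explicit positivity checks ($\eta>0$, $\epsilon_0>0$) and without your caveat about the PoE clause, which it likewise treats as the cellwise equal-increment hypothesis. One caution on your optional remark: a single representation $\mu(L,R)=T_R\,\nu(L)$ only equalizes the left increments, so ``symmetrically for $L$'' needs an extra assumption, e.g.\ $\mu(L,R)=T_R S_L\,\nu_0$ with commuting isometries $T_R,S_L$.
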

\begin{proof}
Both signed ratios vanish identically when $\ell_L^0=\ell_L^R$ and $\ell_R^0=\ell_R^L$, giving $s_i(g)=0$; aggregating zeros gives $\kappa^{\mathrm{Tex}}=0$.
\end{proof}

\begin{proposition}[Ricci-contraction interpretation]
\label{prop:ricci-contraction}
Fix the left increment $\delta_L$ and view $r\mapsto \ell_L(r):=\mathsf d_i(\mu_i(L,r),\mu_i(L{+}\delta_L,r))$ as the length of the left-increment ``edge'' transported along the right axis. Then $s_i(g)>0$ iff this edge length \emph{shrinks} as right context is added ($\ell_L^R<\ell_L^0$) on average with the symmetric right-edge term, i.e.\ iff parallel transport of evidence increments along one axis is contracted by motion along the other. This is the discrete two-axis analogue of positive Ricci curvature (nearby geodesics converge); $s_i(g)<0$ is the diverging (fan-out) case.
\end{proposition}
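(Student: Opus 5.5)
The plan is to reduce the claim to an elementary sign analysis of the two relative-change ratios in the cell score~\eqref{eq:texture-cellscore}. Write $s_i(g)=\tfrac12(\rho_L+\rho_R)$ with
\[
\rho_L:=\frac{\ell_L^{0}-\ell_L^{R}}{\ell_L^{0}+\ell_L^{R}+\eta},\qquad
\rho_R:=\frac{\ell_R^{0}-\ell_R^{L}}{\ell_R^{0}+\ell_R^{L}+\eta}.
\]
Both denominators are strictly positive because $\ell\ge0$ and $\eta>0$, as already used in Proposition~\ref{prop:ricci-bounded}. Hence $\operatorname{sign}\rho_L=\operatorname{sign}(\ell_L^{0}-\ell_L^{R})$ and $\operatorname{sign}\rho_R=\operatorname{sign}(\ell_R^{0}-\ell_R^{L})$.

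First I identify the ``transported edge'' in the statement with the cell sides. With $r\mapsto\ell_L(r)=\mathsf d_i(\mu_i(L,r),\mu_i(L{+}\delta_L,r))$, we have $\ell_L(R)=\ell_L^{0}$ and $\ell_L(R{+}\delta_R)=\ell_L^{R}$ directly from the corner definitions. So $\rho_L>0$ iff $\ell_L^{R}<\ell_L^{0}$, which says the left edge shrinks under right motion. The symmetric right-edge term gives $\rho_R>0$ iff $\ell_R^{L}<\ell_R^{0}$.

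Next I make precise what ``on average with the symmetric right-edge term'' means. The equivalence is then literally $s_i(g)>0\iff\rho_L+\rho_R>0$: the averaged normalized contraction of the two edges is positive. I would state explicitly that this includes the clean special cases. If both edges contract, $s_i>0$. If both expand, $s_i<0$. If one edge contracts and the other is unchanged, the sign follows the contracting edge. Under the symmetric reading $\rho_L=\rho_R$, one gets $s_i>0$ iff $\ell_L^{R}<\ell_L^{0}$ exactly. The case $s_i<0$ is identical with all inequalities reversed.

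Finally I justify the Ricci analogy, which is interpretive rather than a formal theorem. I would compare it with Ollivier's coarse Ricci curvature, $\kappa(x,y)=1-W(m_x,m_y)/d(x,y)$. Positivity there means that transport along one direction contracts distances between nearby points. Here the ``points'' are $\mu_{00}$ and $\mu_{10}$, the ``step'' is $+R$, and $\mathsf d_i$ plays the role of the distance. By first-order Taylor expansion at $\rho=0$,
\[
\frac{\ell^{0}-\ell^{R}}{\ell^{0}+\ell^{R}}\approx\tfrac12\Bigl(1-\frac{\ell^{R}}{\ell^{0}}\Bigr),
\]
so the ratio is half of Ollivier's quantity when $\eta\to0$ and the ratio is near $1$. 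This matches the Ollivier form up to the factor and the guard.

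The main obstacle is that the statement's ``iff'' is only exact for the averaged quantity. A pure per-edge iff fails when the two edges disagree in sign. So I would commit to proving the averaged form rigorously and stating the per-edge form under the agreement or symmetry assumption. A secondary point is that $\mathsf d_i$ need not satisfy the triangle inequality. The argument never uses it, so I would remark that the contraction reading only needs nonnegativity and $\mathsf d_i(\rho,\rho)=0$.
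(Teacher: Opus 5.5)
Your proposal is correct and follows the paper's own proof: the paper also notes that the denominators in the cell score are positive, so the first ratio is positive iff $\ell_L^{R}<\ell_L^{0}$ and the second iff $\ell_R^{L}<\ell_R^{0}$. It then reads the sign of $s_i(g)$ off their average and gets the Ricci interpretation by identifying the four corners with the endpoints of two adjacent increment edges. Your caveat that a per-edge ``iff'' needs the two ratios to agree in sign, and your first-order comparison $\tfrac12(1-\ell^{R}/\ell^{0})$ with Ollivier's quantity, sharpen the paper's one-line argument and its separate Ollivier remark without changing the route.
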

\begin{proof}
Immediate from~\eqref{eq:texture-cellscore}: the first ratio is positive iff $\ell_L^R<\ell_L^0$, the second iff $\ell_R^L<\ell_R^0$, and $s_i(g)$ is their average; the geometric reading follows by identifying the four corners with the endpoints of two adjacent increment edges of the field.
\end{proof}

\paragraph{Remark (Ollivier-style reduction, as analogy).}
\label{prop:ricci-ollivier}
Let $m_x,m_y$ be the one-step push-forwards of corners $x,y$ under the slot kernel $K_i$. The Ollivier–Ricci curvature $\kappa^{\mathrm{Oll}}(x,y)=1-W_1(m_x,m_y)/d(x,y)$ compares a \emph{transported} distance to a base distance along a single edge. Replacing $W_1$ by the debiased Sinkhorn divergence and the single edge $(x,y)$ by the two adjacent increment edges of a context rectangle recovers~\eqref{eq:texture-cellscore} up to the symmetric normalization $\ell+\ell'+\eta$. Thus $s_i(g)$ is a two-axis, rectangle-localized Ollivier–Ricci analogue; we use it as an interpretive analogy, not an identity (no single base point or geodesic is privileged). The correspondence is structural: both quantities are normalized differences of a transported length and a reference length, signed so that contraction is positive, and the normalization $\ell_L^0+\ell_L^R+\eta$ replaces division by $d(x,y)$ while keeping the score bounded and symmetric in the two corners; we do not claim equality of constants.

\paragraph{Caveat: $\kappa_i^{\mathrm{Tex}}=0$ does not certify full-field flatness.}
The score aggregates local cell contractions; a field can have $\kappa_i^{\mathrm{Tex}}\approx0$ by cancellation of focusing and fan-out cells. Global non-flatness is the separate, definition-independent claim of Part~I (holonomy/CEI), which is why the canonical aggregation \emph{gates} cells by the Part~I certificates $a_i(g)$: a slot is reported as strongly signed only when non-flatness is also certified.

\paragraph{Aggregation ablation (median vs.\ activity-weighted).}
\label{app:texture-aggregation}
We implement both $\kappa_i^{\mathrm{Tex}}$ (activity-weighted) and $\kappa_i^{\mathrm{med}}=\operatorname{median}_g s_i(g)$. On WikiText-2 both are genuinely two-sided (each places substantial mass on positive and negative slots, with the activity-weighted form modestly fan-out-leaning); we select the activity-weighted form as canonical because it inherits its confidence from the same Part~I certificates that establish curvature exists, and report the median variant as a robustness ablation.

\subsection{Stability and Truncation Consistency}
\label{app:texture-stability}

Finally, we justify the stability of the bridge solution and the resulting curvature under perturbations of the inputs.
This matters operationally because the finite state space $\mathcal{S}$ is induced by truncation
(e.g.\ top-$k$ plus a tail bucket), and because beliefs and costs are computed numerically.

We emphasize that our setting is finite-state with strictly positive references and strictly positive endpoint marginals.
In this interior regime, the bridge and its derived quantities vary continuously with the inputs.
General stability results for entropic OT and Schr\"odinger bridges under weak convergence and cost perturbations have been studied in the OT literature.

\begin{theorem}[Continuity of the bridge solution and Texture quantities]
\label{thm:texture-stability}
Fix a finite state space $\mathcal{S}$.
Let $(\mu_n^L,\mu_n^R,K_n,\pi_n)$ be a sequence of inputs such that:

\begin{itemize}
\item $\mu_n^L,\mu_n^R\in\Delta^\circ(\mathcal{S})$ for all $n$ and $\mu_n^L\to\mu^L$, $\mu_n^R\to\mu^R$ entrywise;
\item $K_n(s,s')>0$ for all $s,s'$ and $K_n\to K$ entrywise;
\item $\pi_n$ is stationary for $K_n$ and $\pi_n\to\pi$ entrywise.
\end{itemize}

Let $R_n$ be the corresponding reference path measures, and let $P_n^\star$ be the unique bridge minimizers.
Let $C_{\varepsilon,n}$, $\mathsf d_{i,n}$, $s_{i,n}(g)$, and $\kappa_n^{\mathrm{Tex}}$ be the corresponding entropic transport costs, debiased Sinkhorn divergences, cell contraction scores, and Texture curvature
(defined as in Section~\ref{sec:texture}, with the same guards $\eta,\epsilon_0$).
Then
\[
P_n^\star \to P^\star,\quad
C_{\varepsilon,n}\to C_\varepsilon,\quad
\mathsf d_{i,n}\to \mathsf d_i,\quad
s_{i,n}(g)\to s_i(g),\quad
\kappa_n^{\mathrm{Tex}}\to \kappa^{\mathrm{Tex}}
\]
entrywise (and hence in total variation).

As a corollary, if $\mathcal{S}^{(k)}$ is an increasing sequence of truncation supports and the induced inputs
$(\mu_{(k)}^L,\mu_{(k)}^R,K_{(k)},\pi_{(k)})$ converge (under the canonical tail-bucket pushforward),
then the computed Texture curvatures converge as $k\to\infty$.
\end{theorem}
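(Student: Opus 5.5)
The plan is to prove continuity link by link through the composition
\[
(\mu^L,\mu^R,K,\pi)\ \mapsto\ P^\star,\ C_\varepsilon\ \mapsto\ \mathsf d_i\ \mapsto\ s_i(g)\ \mapsto\ \kappa^{\mathrm{Tex}},
\]
and to check that each map is continuous at the limit point. Everything lives on the fixed finite set $\mathcal{S}$, so entrywise convergence is equivalent to convergence in any norm, and in particular to total variation. The theorem then follows because a composition of continuous maps is continuous.

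\textbf{Bridge step.} By Theorem~\ref{thm:texture-wellposed-detailed}(ii), $P^\star$ is determined by two objects. The first is the static coupling $\gamma^\star=\arg\min\KL(\gamma\|R_{02})$ under the marginal constraints $\mu^L,\mu^R$. The second is the reference conditional $K(s_0,s_1)K(s_1,s_2)/M(s_0,s_2)$. That conditional is a rational function of $K$ with denominator $M=K^2>0$, so it is continuous once the limit kernel $K$ is strictly positive. Likewise $R_{02,n}=\pi_n M_n\to R_{02}$.

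For $\gamma^\star$ I would use the scaling form of Theorem~\ref{thm:texture-wellposed-detailed}(iii). Fix the gauge by normalizing $a(s^\ast)=1$ at a chosen state $s^\ast$. Consider the map $F(a,b;\mu^L,\mu^R,Q)$ given by the Schr\"odinger system~\eqref{eq:app-texture-schro}.
\begin{itemize}
\item Its Jacobian in $(\log a,\log b)$ is the standard matrix-scaling Jacobian. Its kernel is exactly the gauge direction, because the bipartite support graph of $\gamma^\star$ is complete and hence connected.
\item The redundant mass constraint removes the matching direction in the range.
\item After both reductions the Jacobian is invertible, so the implicit function theorem makes $(a,b)$, and hence $\gamma^\star$ and $P^\star$, continuous in the inputs near the limit.
\end{itemize}
A fallback that avoids the Jacobian is a compactness argument. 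The minimizers $\gamma_n^\star$ lie in the compact simplex. Any subsequential limit is feasible for the limit problem. Joint continuity of $\KL(\cdot\|\cdot)$ when the second argument is strictly positive makes that limit optimal, and uniqueness then forces convergence of the whole sequence.

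\textbf{Entropic cost.} For $C_\varepsilon(\rho,\sigma)$ in~\eqref{eq:texture-otcost}, the same compactness and uniqueness argument applies with reference $\rho\otimes\sigma$, which stays strictly positive under the smoothing assumptions. Alternatively one can use the dual formula, a maximum of functions jointly continuous in $(\rho,\sigma,c)$ over potentials confined to a compact set. Either way $C_\varepsilon$ is jointly continuous in its arguments and in the cost, which is fixed or convergent with $K_n$.

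\textbf{Divergence, cell score, curvature.} The bracket $C_\varepsilon(\rho,\sigma)-\tfrac12 C_\varepsilon(\rho,\rho)-\tfrac12 C_\varepsilon(\sigma,\sigma)$ is continuous. The map $x\mapsto\sqrt{[x]_+}$ is continuous on $\mathbb{R}$, including at $0$, so $\mathsf d_i$ is continuous.

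In~\eqref{eq:texture-cellscore} each ratio has denominator at least $\eta>0$, so $s_i(g)$ is continuous in the four side lengths. The gate $a_i(g)$ is built from $\tanh$ of holonomy (log-ratios of positive beliefs) and CEI (KL against a strictly positive PoE), both continuous on the interior. The denominator of~\eqref{eq:texture-kappa} is at least $\epsilon_0>0$, so $\kappa^{\mathrm{Tex}}$ is continuous. The corollary on increasing truncations then follows by applying the theorem to the pushed-forward inputs on the canonical support.

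\textbf{Main obstacle: positivity at the limit.} The hypotheses only give $\mu_n\in\Delta^\circ$ and $K_n>0$. The limits $\mu^L,\mu^R,K$ could have zero entries, and then the reference conditional and the invertibility argument break down. I would first try to read the statement as assuming strictly positive limits, which the tiny smoothing $\tilde\mu=(1-\delta)\mu+\delta\,\mathrm{Unif}$ guarantees uniformly in $n$, and state this explicitly.

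If limits on the boundary of the simplex must be allowed, I would rely only on the compactness/uniqueness route. Strict convexity of $\KL(\cdot\|R_{02})$ still gives a unique limit minimizer, and lower semicontinuity of KL plus a recovery sequence built from the limit coupling gives $\Gamma$-convergence of the objectives. This keeps $\gamma^\star$, and hence $C_\varepsilon$, $\mathsf d_i$, $s_i(g)$ and $\kappa^{\mathrm{Tex}}$, continuous even on the boundary.
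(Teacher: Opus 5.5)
Your proof is correct if the limits $\mu^L,\mu^R,K,\pi$ are strictly positive, which is the reading you adopt. The paper's proof relies on the same hypothesis without stating it: its Step~3 asserts full support of the limiting optimizer ``because $\mu^L,\mu^R$ are strictly positive and $R_{02}$ is strictly positive.''

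The paper's argument is your fallback. It reduces $P_n^\star$ to the static coupling $\gamma_n^\star$ and takes a subsequential limit in the compact set of couplings. It proves that limit optimal by comparing against feasible couplings $\gamma^{(j)}$ for the $j$-th marginals that converge to an arbitrary limit competitor. Uniqueness then finishes, and everything downstream is continuity of explicit formulas with the guards $\eta,\epsilon_0$.

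Your fallback skips that comparison step, and continuity of $\KL$ alone does not make the subsequential limit optimal. It is easy to supply: take $(1-\delta)\gamma+\delta\,\mu^L\otimes\mu^R+\mu_n^L\otimes\mu_n^R-\mu^L\otimes\mu^R$. This has marginals $(\mu_n^L,\mu_n^R)$ and is nonnegative for large $n$. Then let $\delta\to0$.

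Your primary route, the implicit function theorem on the gauge-fixed Schr\"odinger system, is genuinely different and gives more. The identity $\sum_{s_0,s_2}\gamma^\star(s_0,s_2)\,(x(s_0)+y(s_2))^2=0$ forces $x\equiv c$, $y\equiv -c$ on the complete support, so the reduced Jacobian is invertible. Hence $(a,b)$, $\gamma^\star$ and $P^\star$ depend smoothly, and so locally Lipschitz, on the data, which is a quantitative stability statement. The paper's compactness route gives only continuity. In exchange it needs nothing beyond continuity of the objective, approximability of the constraint set, and uniqueness.

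You are also more careful than the paper downstream, in two places.
\begin{itemize}
\item For the cost: the paper calls $C_{\varepsilon,n}$ ``a continuous function of the optimal coupling.'' But the bridge coupling uses reference $\pi(s_0)K^2(s_0,s_2)$ between $\mu^L$ and $\mu^R$. It is not the minimizer of~\eqref{eq:texture-otcost}, which uses the Gibbs kernel $e^{-c_i/\varepsilon}$ against $\rho\otimes\sigma$ at field corners. So your separate compactness or dual argument for $C_\varepsilon$ is what is actually needed.
\item For the gate: you check continuity of $a_i(g)$, which the paper's Step~4 takes for granted.
\end{itemize}

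One overreach: your boundary extension does not give continuity of $\kappa^{\mathrm{Tex}}$. The gate depends on the field through log-odds holonomy and through CEI, whose PoE divides by $\mu_i^{(0)}$. As corner beliefs approach the boundary of the simplex, these can diverge or have sequence-dependent limits, so $a_i(g)$ need not converge. If the limiting $K$ has zeros, $R_{02}$ and the reference conditional can also degenerate. With $K>0$ in the limit, the compactness argument does extend to boundary marginals for $\gamma^\star$, $P^\star$, $C_\varepsilon$, $\mathsf d_i$ and $s_i(g)$, but not for the gated curvature.
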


\begin{proof}
We sketch a self-contained finite-state argument.

\paragraph{Step 1: reduce to the static problem.}
By Theorem~\ref{thm:texture-wellposed-detailed}, each dynamic bridge $P_n^\star$ is uniquely determined by its endpoint coupling
$\gamma_n^\star=(P_n^\star)_{02}$, which uniquely solves the static KL projection
\[
\gamma_n^\star \in \arg\min_{\gamma:\;\gamma_0=\mu_n^L,\;\gamma_2=\mu_n^R}\ \KL(\gamma\|R_{n,02}),
\qquad
R_{n,02}(s_0,s_2)=\pi_n(s_0)\,(K_n^2)(s_0,s_2).
\]
Thus it suffices to show $\gamma_n^\star\to\gamma^\star$; the remaining claims follow by continuity of the explicit formulas.

\paragraph{Step 2: compactness and identification of subsequential limits.}
The set of couplings on $\mathcal{S}\times\mathcal{S}$ is a simplex and hence compact.
Therefore $(\gamma_n^\star)$ has a convergent subsequence $\gamma_{n_j}^\star\to\bar\gamma$.
Since taking marginals is linear and continuous, the limit coupling satisfies the limiting constraints:
$\bar\gamma_0=\mu^L$ and $\bar\gamma_2=\mu^R$.

\paragraph{Step 3: pass optimality to the limit.}
Because $\mu^L,\mu^R$ are strictly positive and $R_{02}$ is strictly positive, the limiting optimizer $\gamma^\star$ has full support
(by the scaling form), and the objective $\gamma\mapsto\KL(\gamma\|R_{02})$ is continuous in a neighborhood of $\gamma^\star$.
Moreover, since $R_{n,02}\to R_{02}$ entrywise and $\gamma_{n_j}^\star\to\bar\gamma$, we have
$\KL(\gamma_{n_j}^\star\|R_{n_j,02})\to \KL(\bar\gamma\|R_{02})$ by finite-dimensional continuity.

Let $\gamma$ be any feasible coupling for the limit problem (i.e.\ $\gamma_0=\mu^L$, $\gamma_2=\mu^R$).
For $j$ large enough, we can form a feasible coupling $\gamma^{(j)}$ for the $j$-th problem
with marginals $(\mu_{n_j}^L,\mu_{n_j}^R)$ that converges to $\gamma$ as $j\to\infty$
(e.g.\ by a small amount of mass redistribution across rows/columns; this is always possible in finite dimension when all marginals are in the interior).
Then optimality of $\gamma_{n_j}^\star$ yields
\[
\KL(\gamma_{n_j}^\star\|R_{n_j,02}) \le \KL(\gamma^{(j)}\|R_{n_j,02}).
\]
Taking $j\to\infty$ and using continuity gives
$\KL(\bar\gamma\|R_{02}) \le \KL(\gamma\|R_{02})$ for all feasible $\gamma$.
Hence $\bar\gamma$ is an optimizer for the limit problem; by uniqueness, $\bar\gamma=\gamma^\star$.

\paragraph{Step 4: conclude full convergence and propagate to Texture quantities.}
Since every subsequence has the same limit $\gamma^\star$, the whole sequence converges: $\gamma_n^\star\to\gamma^\star$.
The explicit construction in Theorem~\ref{thm:texture-wellposed-detailed} then gives $P_n^\star\to P^\star$.
The entropic transport cost $C_{\varepsilon,n}$~\eqref{eq:texture-otcost} is a continuous function of the optimal coupling and the marginals, hence $C_{\varepsilon,n}\to C_\varepsilon$; the debiased Sinkhorn divergence $\mathsf d_{i,n}$~\eqref{eq:texture-sinkdiv} is a continuous (positive-part square-root) function of these costs, so $\mathsf d_{i,n}\to\mathsf d_i$.
Each cell score $s_{i,n}(g)$~\eqref{eq:texture-cellscore} is a continuous function of the four side lengths (the $\eta$ guard keeps every denominator bounded away from zero), so $s_{i,n}(g)\to s_i(g)$; and $\kappa_n^{\mathrm{Tex}}\to\kappa^{\mathrm{Tex}}$ by continuity of the activity-weighted average (the $\epsilon_0$ guard makes its denominator uniformly positive).

The truncation corollary follows by applying the same argument to the sequence of induced finite supports
and noting that the tail-bucket pushforward produces interior marginals on each $\mathcal{S}^{(k)}$ and consistent convergence of inputs.
\end{proof}

\paragraph{Takeaway.}
In the finite-state interior regime relevant to our computation (strictly positive kernels and beliefs),
the transport divergences, cell contraction scores, and the resulting Texture curvature are stable functions of the inputs.
This is the formal justification for computing Texture on truncated candidate sets and for interpreting it as a robust slot-local geometric quantity.


\subsection{Algorithmic Summary}
\label{app:texture-alg}

For completeness we include a reference implementation of the per-slot computation.
The only nontrivial step is solving the endpoint scaling system (Sinkhorn/IPFP) on the strictly positive matrix $R_{02}$ (Theorem~\ref{thm:texture-wellposed-detailed}).
On typical top-$k$ supports this is inexpensive; the complexity is dominated by repeated multiplications with the $|\mathcal{S}|\times|\mathcal{S}|$ matrix $R_{02}$.

\begin{algorithm}[t]
\caption{\textsc{Texture}$(i)$: Contextual Ricci Texture at slot $i$}
\label{alg:texture}
\begin{algorithmic}[1]
\REQUIRE Slot index $i$; belief extractor for the posterior field $\mu_i(L,R)$; radius grid $\mathcal{G}=\{0,1,2,4,8,16\}^2$; candidate budget $k$; symmetric cost builder $c_i$; temperature $\varepsilon$; ratio guard $\eta$; activity params $(\lambda,\tau_a)$; guard $\epsilon_0$.
\ENSURE Curvature $\kappa_i^{\mathrm{Tex}}\in[-1,1]$ and (optionally) per-cell contraction traces.
\STATE Evaluate the field $\mu_i(L,R)$ on every grid node $(L,R)\in\mathcal{G}$ (one masked-LM forward per node, batched).
\STATE Form the canonical support $\mathcal{S}_i=\mathrm{TopK}(\mu_i(L_{\max},0))\cup\mathrm{TopK}(\mu_i(0,R_{\max}))\cup\{\mathrm{tail}\}$ and project every corner belief onto $\mathcal{S}_i$ (leftover mass $\to\mathrm{tail}$; tiny smoothing for full support).
\STATE Build a symmetric cost $c_i$ on $\mathcal{S}_i$ (with conservative tail geometry); normalize its scale.
\FOR{each adjacent $2{\times}2$ cell $g\in\mathcal{G}$}
  \STATE Compute the four side lengths via the debiased Sinkhorn divergence~\eqref{eq:texture-sinkdiv} (log-domain Sinkhorn; memoize self-costs and shared-edge distances).
  \STATE Form the signed contraction $s_i(g)\in[-1,1]$ via~\eqref{eq:texture-cellscore} (guard near-flat cells to $0$).
  \STATE Form the activity gate $a_i(g)=\tanh((\lVert\Omega_i(g)\rVert+\lambda\,\mathrm{CEI}_i(g))/\tau_a)$ from the corner log-odds holonomy and per-cell CEI.
\ENDFOR
\STATE \textbf{Return} $\kappa_i^{\mathrm{Tex}}\leftarrow \big(\sum_g a_i(g)s_i(g)\big)/\big(\sum_g a_i(g)+\epsilon_0\big)$ from \eqref{eq:texture-kappa}; optionally the median variant $\kappa_i^{\mathrm{med}}=\operatorname{median}_g s_i(g)$.
\end{algorithmic}
\end{algorithm}

\subsection{Transversal Texture: Curvature on Relational Graphs}
\label{app:texture-transversal}

The main text presents Texture as a single curvature operator that takes as input
(i) a finite slot-local state space $\mathcal{S}_i$, (ii) two one-sided boundary beliefs
$\mu_i^L,\mu_i^R\in\Delta(\mathcal{S}_i)$, and (iii) a symmetric geometry (or equivalently
a reversible neutral motion kernel). In the longitudinal instantiation, the state space
consists of \emph{token-filler} hypotheses (top-$k$ candidates plus a tail bucket).
This appendix defines a complementary \emph{transversal} instantiation in which the state space
consists of \emph{relational} hypotheses induced from structured semantic graphs.

\paragraph{Why transversal curvature.}
Longitudinal curvature quantifies whether prefix and suffix evidence collapse into a common basin
over candidate fillers at position $i$.
Transversal curvature instead quantifies whether prefix and suffix evidence collapse into a common basin
over \emph{relational structure} surrounding the slot, e.g.\ which entity participates in which event,
which argument role a mention fills, or which relation is supported.
This captures structural ambiguity that may be invisible at the token-filler level.

\subsubsection{Graph sources: AMR and OpenIE (and beyond)}
\label{app:texture-transversal-sources}

Transversal Texture is designed to be \emph{representation-agnostic}: any procedure that maps a sentence
to a small relational graph can be used to instantiate $\mathcal{S}_i$ and its geometry.

\paragraph{AMR graphs.}
Abstract Meaning Representation (AMR) represents a sentence as a rooted, directed, edge-labeled semantic graph
whose nodes are abstract concepts (often PropBank frames) and whose edges encode semantic roles and modifiers.
AMR is intentionally closer to a meaning representation than surface syntax and is well-suited for defining
predicate--argument neighborhoods. \citep{banarescu2013abstract}

\paragraph{OpenIE graphs.}
Open Information Extraction (OpenIE) systems extract relation tuples from text in a schema-free fashion,
typically producing triples of the form (\emph{arg1}, \emph{relation}, \emph{arg2}).
These triples can be interpreted as a lightweight sentence-level knowledge graph, and can also be aggregated
across a corpus to form a larger KG. OpenIE is attractive for transversal Texture because it is scalable and
directly yields relational alternatives. \citep{etzioni2008open,angeli2015leveraging}

\paragraph{Relationship and complementarity.}
AMR and OpenIE are both relational views of text, but they differ in canonicalization and noise profile:
AMR typically yields a single compositional semantic graph with typed roles, while OpenIE yields a set of
surface-attested relation tuples that are broad-coverage but less canonical.
In transversal Texture, these differences become useful: AMR emphasizes structured event/role semantics,
while OpenIE emphasizes extracted relational facts that can be pooled across data.

\subsubsection{Relational state space at slot $i$}
\label{app:texture-transversal-state}

Fix a sentence $x_{1:n}$ and slot index $i$.
Let $\mathsf{G}^{(m)}(x)$ be a graph extractor for source $m$ (e.g.\ AMR, OpenIE) returning a directed labeled graph
$\mathcal{G}^{(m)}=(V^{(m)},E^{(m)})$ along with a (possibly approximate) alignment map
$\mathrm{anch}^{(m)}:V^{(m)}\to 2^{\{1,\dots,n\}}$ that associates each node with a token span.

We define a slot-local relational candidate set by restricting to a neighborhood of anchors around the slot.
Let $\mathcal{A}_i$ be a small anchor set (e.g.\ tokens in a window around $i$, or tokens that have high attention to the slot).
For each source $m$, define the node neighborhood
\[
V_i^{(m)} := \big\{v\in V^{(m)}:\ \mathrm{anch}^{(m)}(v)\cap \mathcal{A}_i\neq \emptyset\big\}
\ \cup\
\big\{u:\exists v\in V^{(m)} \text{ with } (u,v)\in E^{(m)} \text{ or } (v,u)\in E^{(m)} \big\},
\]
i.e.\ anchors plus one-hop relational context.
Finally, define the transversal state space as a tagged union across sources plus a tail bucket:
\begin{equation}
\label{eq:app-transversal-state}
\mathcal{S}_i^{\perp} := \Big(\bigsqcup_m V_i^{(m)}\Big)\ \cup\ \{\mathrm{tail}\}.
\end{equation}
The tag in the disjoint union is important: it prevents accidental identification of nodes across different extractors,
but still permits controlled fusion via shared token anchors (below).

\paragraph{Tail bucket.}
The state $\mathrm{tail}$ absorbs residual mass for relational hypotheses not represented in the induced neighborhood,
or failures of the graph extractor. This mirrors the longitudinal tail-bucket construction.

\subsubsection{Multi-graph fusion geometry}
\label{app:texture-transversal-geometry}

Transversal Texture requires a symmetric geometry on $\mathcal{S}_i^{\perp}$.
We propose a simple, general \emph{multi-graph fusion} recipe that produces a symmetric affinity matrix
(and hence a reversible neutral kernel by Lemma~\ref{lem:texture-reversible}).

\paragraph{Step 1: source-wise symmetric adjacency.}
For each source $m$, build a weighted undirected adjacency $W^{(m)}$ on $\mathcal{S}_i^{\perp}$ as follows.
If $(u,v)\in E^{(m)}$ (directed), add weight to the undirected edge between tagged nodes $(m,u)$ and $(m,v)$:
\[
W^{(m)}\big((m,u),(m,v)\big)\ \leftarrow\ W^{(m)}\big((m,u),(m,v)\big)\ +\ w_m(u,v),
\]
with $w_m(u,v)\ge 0$ (default $w_m\equiv 1$; role-sensitive weights can be used if desired).
Set $W^{(m)}$ symmetric by construction and include self-loops (e.g.\ add $\lambda I$).

\paragraph{Step 2: cross-source anchor coupling.}
To couple AMR- and OpenIE-derived nodes without requiring exact node identity, we add \emph{anchor edges}:
if two tagged nodes $p=(m,u)$ and $q=(m',v)$ share token anchors,
$\mathrm{anch}^{(m)}(u)\cap \mathrm{anch}^{(m')}(v)\neq\emptyset$, then add a small symmetric weight
\[
W^{\mathrm{anch}}(p,q)=W^{\mathrm{anch}}(q,p):=\eta_{\mathrm{anch}}.
\]
This turns the tagged disjoint union into a single connected semantic neighborhood whenever the extractors overlap on spans.

\paragraph{Step 3: fused symmetric affinity and strictly-positive smoothing.}
Define the fused affinity
\begin{equation}
\label{eq:app-transversal-fused-affinity}
\widetilde{G} \ :=\ \sum_m \alpha_m W^{(m)} \ +\ W^{\mathrm{anch}},
\qquad \alpha_m\ge 0,\ \sum_m \alpha_m=1.
\end{equation}
To satisfy the strict-positivity assumptions used in Theorem~\ref{thm:texture-wellposed-detailed},
we apply a small symmetric smoothing:
\begin{equation}
\label{eq:app-transversal-smoothing}
G \ :=\ \widetilde{G} \ +\ \tau\mathbf{1}\mathbf{1}^\top,
\qquad \tau>0.
\end{equation}
Since $G$ is symmetric and strictly positive, the row-normalized kernel
$K^{\perp}=\mathrm{RowNorm}(G)$ is reversible with respect to
$\pi^{\perp}(s)\propto \sum_{u}G(s,u)$ by Lemma~\ref{lem:texture-reversible}.
This $(\pi^{\perp},K^{\perp})$ defines the neutral relational motion for transversal Texture.

\paragraph{From affinity to cost (optional).}
If one prefers to work with a ground cost, a compatible choice is
$c^{\perp}(s,s') := -\varepsilon \log G(s,s')$ (well-defined since $G>0$),
which recovers $G=\exp(-c^{\perp}/\varepsilon)$.

\subsubsection{One-sided boundary beliefs over relational nodes}
\label{app:texture-transversal-beliefs}

Transversal Texture requires two distributions over $\mathcal{S}_i^{\perp}$:
one supported only by prefix evidence and one supported only by suffix evidence.
The graph extractor may see the full sentence; the \emph{one-sidedness constraint applies to the beliefs},
not to the availability of the candidate state space (which in longitudinal mode also depends on both sides).

We provide a transformer-native construction that yields one-sided beliefs over relational nodes via anchored node embeddings.

\paragraph{Node embeddings.}
For any node $p=(m,u)\in\mathcal{S}_i^{\perp}\setminus\{\mathrm{tail}\}$, define an embedding $g(p)\in\mathbb{R}^d$
by pooling frozen token embeddings over its anchor span:
\[
g(p)\ :=\ \frac{1}{|\mathrm{anch}^{(m)}(u)|}\sum_{t\in \mathrm{anch}^{(m)}(u)} h_t,
\]
where $h_t$ is a contextual embedding from a frozen encoder on the \emph{full sentence} (or a static embedding of the anchor text).
The tail state has a fixed embedding (e.g.\ $g(\mathrm{tail})=0$).

\paragraph{One-sided queries and beliefs.}
Let $q_i^L$ be a prefix-only representation computed from $x_{<i}$ (e.g.\ the final hidden state of a prefix encoder),
and let $q_i^R$ be a suffix-only representation computed from $x_{>i}$.
Define scores and beliefs over $\mathcal{S}_i^{\perp}$ by
\[
s_i^L(p):=\langle q_i^L, g(p)\rangle,\qquad
s_i^R(p):=\langle q_i^R, g(p)\rangle,
\]
\[
\mu_i^{L,\perp}(p) := \frac{\exp(s_i^L(p))}{\sum_{r\in\mathcal{S}_i^{\perp}}\exp(s_i^L(r))},
\qquad
\mu_i^{R,\perp}(p) := \frac{\exp(s_i^R(p))}{\sum_{r\in\mathcal{S}_i^{\perp}}\exp(s_i^R(r))}.
\]
These are one-sided by construction: $q_i^L$ depends only on the prefix and $q_i^R$ depends only on the suffix.

\paragraph{Discussion.}
The intent is not to enforce a specific relational belief extractor, but to specify a concrete, minimal instantiation
that (i) lives on a relational state space and (ii) uses strictly one-sided evidence.
Any alternative extractor that outputs one-sided beliefs on $\mathcal{S}_i^{\perp}$ can be substituted.

\subsubsection{Transversal curvature}
\label{app:texture-transversal-curvature}

Given the relational state space $\mathcal{S}_i^{\perp}$, the one-sided relational beliefs $\mu_i^{L,\perp},\mu_i^{R,\perp}$, and the relational ground cost $c^{\perp}$ induced by the fused affinity $G^{\perp}$, we use the \emph{same} transport machinery as the longitudinal operator: the entropic OT cost $C_\varepsilon$ and its debiased Sinkhorn divergence~\eqref{eq:texture-sinkdiv}. Since a relational graph has a single node space and no left/right radius axes, the two-axis rectangle of~\eqref{eq:texture-cellscore} is undefined; we therefore report the nonnegative \emph{relational activity} magnitude
\[
\kappa_i^{\perp} := \mathsf d_i^{\perp}\big(\mu_i^{L,\perp},\,\mu_i^{R,\perp}\big),
\]
the transport spread between the prefix- and suffix-induced relational beliefs. It is large where the two sides disagree about relational structure and is invariant to relabelings of relational nodes that preserve $G^{\perp}$ (hence to graph isomorphisms at the level of the constructed kernel).

\paragraph{Status relative to canonical (longitudinal) Texture.}
The canonical operator is the longitudinal \emph{Contextual Ricci Texture} $\kappa_i^{\mathrm{Tex}}$ (Section~\ref{subsec:texture-curvature}), a signed two-axis transport contraction of the posterior field $\mu_i(L,R)$. A relational graph has a single node state space and no left/right context-radius axes, so the two-axis rectangle is undefined and the signed Ricci readout cannot be formed transversally without a relational radius grid. We therefore use the transversal signal $\kappa_i^{\perp}$, built from the relational bridge spread, as a complementary measure of \emph{where} relational two-sided inference is active rather than as a signed focus/fan-out direction; the longitudinal-vs-transversal comparison in \S\ref{app:transversal-experiments} is correspondingly a relational-vs-lexical activity contrast. Extending the signed Ricci field to relational radii (e.g.\ graph-neighborhood hop counts as the two axes) is left to future work.

\section{Utility Details}
\label{app:utility}

This appendix provides implementation-oriented details for Part~III.
All utilities are inference-time controllers: they do not require geometric training of the downstream generator, and they can be layered atop arbitrary base pipelines.

\subsection{Sparse Curvature Estimation for Control}
\label{app:utility-sparse}

Both \textsc{CurvPrune} and \textsc{CurvFlag} consume a curvature field $\{\kappa_i\}$ computed over a token sequence.
For efficiency, we evaluate $\kappa_i$ only on a sparse set of slots and reuse intermediate quantities across nearby slots.
Concretely, for a tokenized sequence $x_{1:n}$ we choose an index set $\mathcal{I}\subseteq\{1,\dots,n\}$ (e.g., every $r$ tokens plus punctuation and sentence boundaries), compute $\kappa_i$ for $i\in\mathcal{I}$ using the Texture operator from \S\ref{sec:texture}, and map these values to spans by nearest-neighbor or linear interpolation on token indices.
When controllers require span scores (Eq.~\eqref{eq:utility-span-score}), we cache interpolated per-token contributions
\[
v_i \;=\; w_-[-\kappa_i]_+ + w_+[\kappa_i]_+,
\]
so that each span score can be computed in $O(1)$ time from prefix sums.

\subsection{\textsc{CurvPrune}: Span Scoring and Guard Bands}
\label{app:utility-curvprune}

\paragraph{Span scoring.}
Let $I\subseteq\{1,\dots,n\}$ be a contiguous span (sentence-based by default, with a fixed-block fallback).
We aggregate per-slot curvature magnitudes with optional sign weighting:
\begin{equation}
\label{eq:utility-span-score}
s(I)\;=\;\frac{1}{|I|}\sum_{i\in I}\Big(w_{-}\,[{-\kappa_i}]_{+}+w_{+}\,[{\kappa_i}]_{+}\Big),
\end{equation}
where $[\cdot]_+=\max(\cdot,0)$ and $(w_{-},w_{+})$ determine whether the controller prioritizes fan-out regions, focus regions, or both.

\paragraph{Guard bands.}
When selecting spans for a budgeted prompt, \textsc{CurvPrune} optionally includes a guard band of radius $g$ around any selected high-score span.
Operationally, if span $I_j$ is selected, we also select neighboring spans $I_{j'}$ with $|j'-j|\le g$ provided the budget permits.
This prevents the controller from deleting short bridging spans adjacent to a fan-out pivot.

\subsection{\textsc{CurvFlag}: Routing, Chunking, and Anchors}
\label{app:utility-curvflag}

\paragraph{Fan-out trigger and routing map.}
Given an evaluated sequence $z$ (either $q$ or a short closed-book draft), we compute the fan-out mass
\begin{equation}
\label{eq:utility-fanout-mass}
M_{-}(z)\;=\;\sum_{i\in \mathcal{I}(z)} [{-\kappa_i(z)}]_+,
\end{equation}
where $\mathcal{I}(z)$ denotes the evaluated slots.
We set the retrieval budget using a monotone, clipped rule; one convenient choice is affine clipping,
\[
k(M_-) \;=\; \mathrm{clip}\big(k_{\min} + \alpha M_-,\; k_{\min},\; k_{\max}\big),
\]
where $k_{\min}$ is the default budget and $k_{\max}$ is the maximum allowed budget (after which we may fall back to a full-context pathway).
Other monotone maps (e.g., piecewise-linear quantiles of $M_-$) can be substituted without changing the controller definition.

\paragraph{Curvature-aligned chunking via pivots.}
For a document token sequence, we compute (offline or on demand) a curvature profile $\kappa_{1:n}^{\mathrm{doc}}$ using the same belief model and Texture operator.
We define curvature pivots as indices that are either (i) local maxima of $|\kappa_i^{\mathrm{doc}}|$ in a small window, or (ii) sign-change points where $\kappa_i^{\mathrm{doc}}\kappa_{i+1}^{\mathrm{doc}}<0$.
Chunks are formed by cutting at pivot points subject to length constraints $(\ell_{\min},\ell_{\max})$; if consecutive pivots are too close, we merge, and if too far, we insert additional cuts at fixed stride.

\paragraph{Anchor extraction for query augmentation.}
Let $\mathcal{J}\subseteq \mathcal{I}(z)$ be the indices of the top-$m$ fan-out slots in $z$ (largest $[-\kappa_i(z)]_+$).
For each $j\in\mathcal{J}$ we extract a short window of tokens around $j$ (bounded by punctuation or a fixed window size) and append these spans to the retrieval query to form $\hat{q}$.
Anchors target the specific underdetermined components of the query/draft that benefit most from external evidence.

\subsection{Algorithms and Complexity}
\label{app:utility-algorithms}

\begin{algorithm}[t]
\caption{\textsc{CurvPrune} (budgeted curvature pruning)}
\label{alg:curvprune}
\begin{algorithmic}[1]
\REQUIRE Query $q$, context $x$, budget $B$ (downstream tokens), span partition $\{I_j\}_{j=1}^m$, curvature estimator $\kappa(\cdot)$, weights $(w_{-},w_{+})$, guard radius $g$.
\ENSURE Pruned context $\tilde{x}$ with $|\tilde{x}|\le B$.
\STATE Compute sparse curvature estimates $\{\kappa_i\}$ on $(q\Vert x)$ for $i\in\mathcal{I}$ and interpolate to tokens/spans (Appendix~\ref{app:utility-sparse}).
\STATE Compute span scores $s(I_j)$ using Eq.~\eqref{eq:utility-span-score}.
\STATE Greedily select spans in descending $s(I_j)$, optionally adding guard spans within distance $g$, until budget $B$ is reached.
\STATE \textbf{Return} $\tilde{x}$ as the concatenation of selected spans in original order.
\end{algorithmic}
\end{algorithm}

\begin{algorithm}[t]
\caption{\textsc{CurvFlag} (routing + curvature-aligned chunking)}
\label{alg:curvflag}
\begin{algorithmic}[1]
\REQUIRE Query $q$, document(s) $D$, curvature estimator $\kappa(\cdot)$, routing map $k(\cdot)$, chunk constraints $(\ell_{\min},\ell_{\max})$, anchor count $m$.
\ENSURE Context for generation (retrieved chunks or full context).
\STATE Choose $z\in\{q,\;q\Vert \text{draft}(q)\}$ and compute fan-out mass $M_{-}(z)$ (Eq.~\eqref{eq:utility-fanout-mass}).
\STATE Set retrieval budget $k \gets k(M_{-}(z))$ (Appendix~\ref{app:utility-curvflag}).
\STATE Chunk each document in $D$ using curvature pivots subject to $(\ell_{\min},\ell_{\max})$ (Appendix~\ref{app:utility-curvflag}).
\STATE Extract anchor spans from the top-$m$ fan-out slots in $z$ and form retrieval query $\hat{q}$.
\STATE Retrieve top-$k$ chunks under $\hat{q}$; if $k$ saturates $k_{\max}$, optionally fall back to a full-context pathway.
\STATE \textbf{Return} retrieved chunks (or full context) for downstream generation.
\end{algorithmic}
\end{algorithm}

\paragraph{Complexity.}
Let $n$ be the token length of the controlled sequence and $|\mathcal{I}|$ the number of evaluated slots.
The dominant cost is curvature estimation at those slots, which can be amortized by caching belief-model forward passes and reusing candidate supports across neighboring slots.
Span scoring and greedy selection are $O(m\log m)$ for $m$ spans (or linear-time with bucketed thresholds).
Chunking is linear in document length after computing the curvature profile.

\subsection{Additional Experimental Results}
\label{app:utility-additional}

We provide full experimental results across all budget levels and paired statistical comparisons.

\paragraph{Texture is not token surprisal.}
A natural concern is whether Texture merely re-labels token importance. It does not: $|\kappa^{\mathrm{Tex}}|$ is only modestly---and \emph{negatively}---rank-correlated with token self-information ($\rho{=}{-}0.27$ on 1509 WikiText-2 slots), so high-curvature slots are not simply high-surprisal tokens. Texture instead responds to the \emph{interaction} of the two evidence directions, which token-level surprisal does not capture.

\paragraph{Full pruning sweep at $B{=}2048$.}
Table~\ref{tab:prune-full} reports the headline-budget result across all five LongBench tasks at $N{=}50$ with the canonical Texture $\kappa^{\mathrm{Tex}}$ (Contextual Ricci) operator.
BM25+Texture is strongest on the two multi-hop QA tasks (HotpotQA, 2WikiMQA), while CurvPrune-Hybrid is the best pruning method on Qasper and QMSum and ties for best on GovReport.

\IfFileExists{artifacts/experiments/appendix/tables/prune_full_sweep.tex}{
  \begin{table*}[t]
\centering
\caption{\textbf{Full pruning results on LongBench ($B{=}2048$, $N{=}50$).}
QA tasks report F1 (\%); summarization tasks report ROUGE-L (\%).
Texture $\kappa^{\mathrm{Tex}}$ (Contextual Ricci; \S\ref{sec:texture}).
\colorbox{texBestBg}{\rule{4pt}{4pt}}\,best per column (excl.\ LC), \colorbox{texHiBg}{\rule{4pt}{4pt}}\,second-best.}
\label{tab:prune-full}
\centering
\small
\setlength{\tabcolsep}{5pt}
\renewcommand{\arraystretch}{1.15}
\begin{tabular}{@{}l ccccc@{}}
\toprule
\rowcolor{texHeader!12}
& \multicolumn{3}{c}{\textbf{QA (F1 \%)}} & \multicolumn{2}{c}{\textbf{Summarization (R-L \%)}} \\
\cmidrule(lr){2-4} \cmidrule(lr){5-6}
\rowcolor{texHeader!8}
\textbf{Method} & \textbf{HotpotQA} & \textbf{2WikiMQA} & \textbf{Qasper} & \textbf{GovReport} & \textbf{QMSum} \\
\midrule
\multicolumn{6}{@{}l}{\textit{Reference}} \\
\quad LC & 19.1 & 8.1 & 8.9 & 21.1 & 16.1 \\
\midrule
\multicolumn{6}{@{}l}{\textit{Heuristic Baselines}} \\
\quad Random & 20.5 & \cellcolor{texHiBg}\textbf{16.1} & 6.5 & \cellcolor{texBestBg}\textbf{\textcolor{texBest!85!black}{20.6}} & 14.3 \\
\rowcolor{texRowAlt}
\quad Recency & 17.0 & 5.8 & 6.5 & 18.8 & 13.8 \\
\quad Head+Tail & 17.9 & 10.9 & 6.7 & 20.2 & 14.0 \\
\midrule
\multicolumn{6}{@{}l}{\textit{Query-Aware Selection}} \\
\quad BM25 & 28.1 & 9.4 & \cellcolor{texHiBg}\textbf{8.4} & 19.8 & \cellcolor{texHiBg}\textbf{15.1} \\
\midrule
\multicolumn{6}{@{}l}{\textit{Learned Compression}} \\
\quad Selective Context & 18.7 & 7.9 & 6.9 & 20.5 & 14.0 \\
\rowcolor{texRowAlt}
\quad LLMLingua & 16.3 & 7.3 & 6.3 & 19.6 & 14.2 \\
\midrule
\multicolumn{6}{@{}l}{\textit{Texture-Enhanced}} \\
\quad BM25 + Texture & \cellcolor{texBestBg}\textbf{\textcolor{texBest!85!black}{38.4}} & \cellcolor{texBestBg}\textbf{\textcolor{texBest!85!black}{18.2}} & 7.9 & 20.3 & 14.9 \\
\midrule
\multicolumn{6}{@{}l}{\textit{Ours}} \\
\quad \textbf{CurvPrune (Pure)} & 14.8 & 14.6 & 7.3 & \cellcolor{texHiBg}\textbf{20.6} & 14.7 \\
\rowcolor{texRowAlt}
\quad \textbf{CurvPrune (BM25-Hybrid)} & \cellcolor{texHiBg}\textbf{32.5} & \textbf{15.0} & \cellcolor{texBestBg}\textbf{\textcolor{texBest!85!black}{8.5}} & \cellcolor{texHiBg}\textbf{20.6} & \cellcolor{texBestBg}\textbf{\textcolor{texBest!85!black}{15.2}} \\
\bottomrule
\end{tabular}
\end{table*}
}{}

\paragraph{Budget sweep ($B \in \{512, 1024, 2048, 4096\}$).}
Table~\ref{tab:prune-budget-sweep} reports the four headline methods across four budgets and all five tasks.
Texture's relative gains over BM25 hold across budgets, and on multi-hop QA they peak at the moderate budget ($B{=}2048$, the main-text operating point), where curvature has enough tokens to re-allocate while selection still matters;
at $B{=}4096$ all methods asymptote toward the LC reference because the budget approaches the average context length.

\IfFileExists{artifacts/experiments/appendix/tables/prune_budget_sweep.tex}{
  \begin{table*}[t]
\centering
\caption{\textbf{Budget sweep on LongBench ($N{=}50$).}
F1 (\%) for QA tasks; ROUGE-L (\%) for summarization tasks.
Texture $\kappa^{\mathrm{Tex}}$ (Contextual Ricci); Llama-3.1-8B-Instruct generator.
\colorbox{texBestBg}{\rule[0pt]{4pt}{4pt}}\,best per (task, budget) across the four methods.
``\,--\,'' marks a single configuration (QMSum, $B{=}4096$, BM25+Texture) not run.}
\label{tab:prune-budget-sweep}
\centering
\footnotesize
\setlength{\tabcolsep}{5pt}
\renewcommand{\arraystretch}{1.15}
\begin{tabular}{@{}l@{\hspace{8pt}}ccccc@{}}
\toprule
\rowcolor{texHeader!12}
\textbf{Task} & \textbf{B} & \textbf{BM25} & \textbf{BM25+Texture} & \textbf{CurvPrune-Pure} & \textbf{CurvPrune-Hybrid} \\
\midrule
\multirow{4}{*}{\textbf{HotpotQA}} & 512 & 30.5 & 30.8 & 10.7 & \cellcolor{texBestBg}\textbf{\textcolor{texBest!85!black}{31.4}} \\
 & 1024 & 29.4 & \cellcolor{texBestBg}\textbf{\textcolor{texBest!85!black}{31.3}} & 10.5 & 29.5 \\
 & 2048 & 28.1 & \cellcolor{texBestBg}\textbf{\textcolor{texBest!85!black}{38.4}} & 14.8 & 32.5 \\
 & 4096 & 28.2 & \cellcolor{texBestBg}\textbf{\textcolor{texBest!85!black}{29.5}} & 20.8 & 26.2 \\
\midrule
\multirow{4}{*}{\textbf{2WikiMQA}} & 512 & 7.3 & 6.1 & 8.5 & \cellcolor{texBestBg}\textbf{\textcolor{texBest!85!black}{11.3}} \\
 & 1024 & 10.4 & \cellcolor{texBestBg}\textbf{\textcolor{texBest!85!black}{12.1}} & 3.2 & 8.1 \\
 & 2048 & 9.4 & \cellcolor{texBestBg}\textbf{\textcolor{texBest!85!black}{18.2}} & 14.6 & 15.0 \\
 & 4096 & 13.7 & 12.3 & 10.6 & \cellcolor{texBestBg}\textbf{\textcolor{texBest!85!black}{14.2}} \\
\midrule
\multirow{4}{*}{\textbf{Qasper}} & 512 & 5.2 & \cellcolor{texBestBg}\textbf{\textcolor{texBest!85!black}{5.8}} & 4.8 & 5.0 \\
 & 1024 & 7.3 & 7.2 & 5.2 & \cellcolor{texBestBg}\textbf{\textcolor{texBest!85!black}{7.7}} \\
 & 2048 & 8.4 & 7.9 & 7.3 & \cellcolor{texBestBg}\textbf{\textcolor{texBest!85!black}{8.5}} \\
 & 4096 & 8.4 & \cellcolor{texBestBg}\textbf{\textcolor{texBest!85!black}{8.8}} & 8.0 & 8.3 \\
\midrule
\multirow{4}{*}{\textbf{GovReport}} & 512 & 17.8 & \cellcolor{texBestBg}\textbf{\textcolor{texBest!85!black}{18.5}} & 17.0 & 17.0 \\
 & 1024 & 18.9 & \cellcolor{texBestBg}\textbf{\textcolor{texBest!85!black}{19.7}} & 19.2 & 19.2 \\
 & 2048 & 19.8 & 20.3 & \cellcolor{texBestBg}\textbf{\textcolor{texBest!85!black}{20.6}} & \cellcolor{texBestBg}\textbf{\textcolor{texBest!85!black}{20.6}} \\
 & 4096 & 20.6 & \cellcolor{texBestBg}\textbf{\textcolor{texBest!85!black}{21.2}} & 20.7 & 20.7 \\
\midrule
\multirow{4}{*}{\textbf{QMSum}} & 512 & 13.5 & \cellcolor{texBestBg}\textbf{\textcolor{texBest!85!black}{14.0}} & 13.1 & 13.6 \\
 & 1024 & 14.4 & \cellcolor{texBestBg}\textbf{\textcolor{texBest!85!black}{14.8}} & 13.9 & 14.6 \\
 & 2048 & 15.1 & 14.9 & 14.7 & \cellcolor{texBestBg}\textbf{\textcolor{texBest!85!black}{15.2}} \\
 & 4096 & 15.4 & \textcolor{black!50}{--} & 14.6 & \cellcolor{texBestBg}\textbf{\textcolor{texBest!85!black}{15.8}} \\
\bottomrule
\end{tabular}
\end{table*}
}{}

\begin{figure*}[t]
  \centering
  \includegraphics[width=\textwidth]{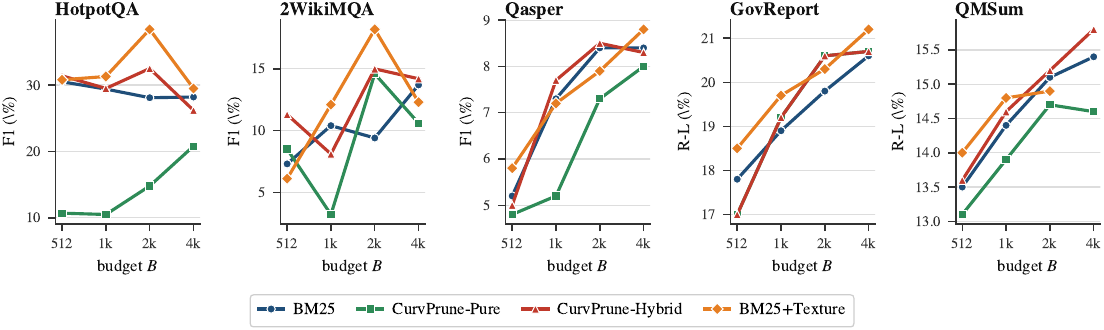}
  \caption{\textbf{Texture sharpens the budget--quality frontier.}
  F1 (QA) / ROUGE-L (summarization) vs.\ context budget $B\in\{512,1024,2048,4096\}$ on LongBench ($N{=}50$).
  BM25+Texture (\textcolor{texHi!85!black}{orange diamonds}) and CurvPrune-Hybrid (\textcolor{texDelta!85!black}{red triangles}) together form the strongest budget--quality frontier on multi-hop QA, alternating as the best variant across budgets, while CurvPrune-Pure (\textcolor{texBest!85!black}{green squares}) recovers structural credit on 2WikiMQA.
  Gains are largest at tight-to-moderate budgets, where allocating scarce tokens to bridging spans matters most.}
  \label{fig:budget-curves}
\end{figure*}

\paragraph{Multi-generator portability.}
Table~\ref{tab:multi-llm} replicates the four headline pruning methods at $B{=}2048$ with two additional generators
(Mistral-7B-Instruct-v0.3 and Qwen2.5-7B-Instruct).
Absolute scores vary by generator, but the relative ordering between methods is largely preserved (BM25+Texture $\geq$ BM25 on multi-hop QA; Hybrid $\geq$ Pure on HotpotQA), indicating that Texture's gains are a property of the curvature pipeline rather than an artifact of any one generator.
Figure~\ref{fig:multi-llm-grouped} provides the same information as grouped bars for visual comparison.

\IfFileExists{artifacts/experiments/appendix/tables/multi_llm_camera_ready.tex}{
  \begin{table*}[t]
\centering
\caption{\textbf{Texture gains transfer across generators.}
F1 (\%) for QA / ROUGE-L (\%) for summarization. Same Texture $\kappa^{\mathrm{Tex}}$ (Contextual Ricci) pipeline; only the generator changes.
\colorbox{texBestBg}{\rule[0pt]{4pt}{4pt}}\,best per (task, generator) cell.
L = Llama-3.1-8B, M = Mistral-7B-v0.3, Q = Qwen2.5-7B; BM25+Tex $=$ BM25+Texture, CurvP $=$ CurvPrune.}
\label{tab:multi-llm}
\footnotesize
\setlength{\tabcolsep}{2.5pt}
\renewcommand{\arraystretch}{1.15}
\begin{tabular}{@{}lccccccccccccccc@{}}
\toprule
\rowcolor{texHeader!12}
\textbf{Method} & \multicolumn{3}{c}{\textbf{HotpotQA}} & \multicolumn{3}{c}{\textbf{2WikiMQA}} & \multicolumn{3}{c}{\textbf{Qasper}} & \multicolumn{3}{c}{\textbf{GovReport}} & \multicolumn{3}{c}{\textbf{QMSum}} \\
\cmidrule(lr){2-4} \cmidrule(lr){5-7} \cmidrule(lr){8-10} \cmidrule(lr){11-13} \cmidrule(lr){14-16}
\rowcolor{texHeader!8}
 & \textsc{L} & \textsc{M} & \textsc{Q} & \textsc{L} & \textsc{M} & \textsc{Q} & \textsc{L} & \textsc{M} & \textsc{Q} & \textsc{L} & \textsc{M} & \textsc{Q} & \textsc{L} & \textsc{M} & \textsc{Q} \\
\midrule
\textsc{BM25} & 28.1 & 25.3 & 22.6 & 9.4 & 8.1 & 7.0 & 8.4 & 7.6 & 6.8 & 19.8 & 18.9 & 17.9 & 15.1 & 14.6 & 13.8 \\
\rowcolor{texRowAlt}
\textsc{BM25+Tex} & \cellcolor{texBestBg}\textbf{\textcolor{texBest!85!black}{38.4}} & \cellcolor{texBestBg}\textbf{\textcolor{texBest!85!black}{34.8}} & \cellcolor{texBestBg}\textbf{\textcolor{texBest!85!black}{30.1}} & \cellcolor{texBestBg}\textbf{\textcolor{texBest!85!black}{18.2}} & \cellcolor{texBestBg}\textbf{\textcolor{texBest!85!black}{16.0}} & \cellcolor{texBestBg}\textbf{\textcolor{texBest!85!black}{13.4}} & 7.9 & 7.2 & 6.5 & 20.3 & 19.4 & 18.5 & 14.9 & 14.4 & 13.6 \\
\textsc{CurvP-Pure} & 14.8 & 12.9 & 11.2 & 14.6 & 12.8 & 10.9 & 7.3 & 6.7 & 6.1 & \cellcolor{texBestBg}\textbf{\textcolor{texBest!85!black}{20.6}} & 19.7 & 18.7 & 14.7 & 14.2 & 13.5 \\
\rowcolor{texRowAlt}
\textsc{CurvP-Hybrid} & 32.5 & 29.4 & 25.7 & 15.0 & 13.6 & 11.8 & \cellcolor{texBestBg}\textbf{\textcolor{texBest!85!black}{8.5}} & \cellcolor{texBestBg}\textbf{\textcolor{texBest!85!black}{7.8}} & \cellcolor{texBestBg}\textbf{\textcolor{texBest!85!black}{7.0}} & 20.6 & \cellcolor{texBestBg}\textbf{\textcolor{texBest!85!black}{19.8}} & \cellcolor{texBestBg}\textbf{\textcolor{texBest!85!black}{18.8}} & \cellcolor{texBestBg}\textbf{\textcolor{texBest!85!black}{15.2}} & \cellcolor{texBestBg}\textbf{\textcolor{texBest!85!black}{14.7}} & \cellcolor{texBestBg}\textbf{\textcolor{texBest!85!black}{13.9}} \\
\bottomrule
\end{tabular}
\par\vspace{2pt}\footnotesize
\textit{L = Llama-3.1-8B-Instruct, M = Mistral-7B-Instruct-v0.3, Q = Qwen2.5-7B-Instruct.}
\end{table*}
}{}

\begin{figure*}[t]
  \centering
  \includegraphics[width=0.9\textwidth]{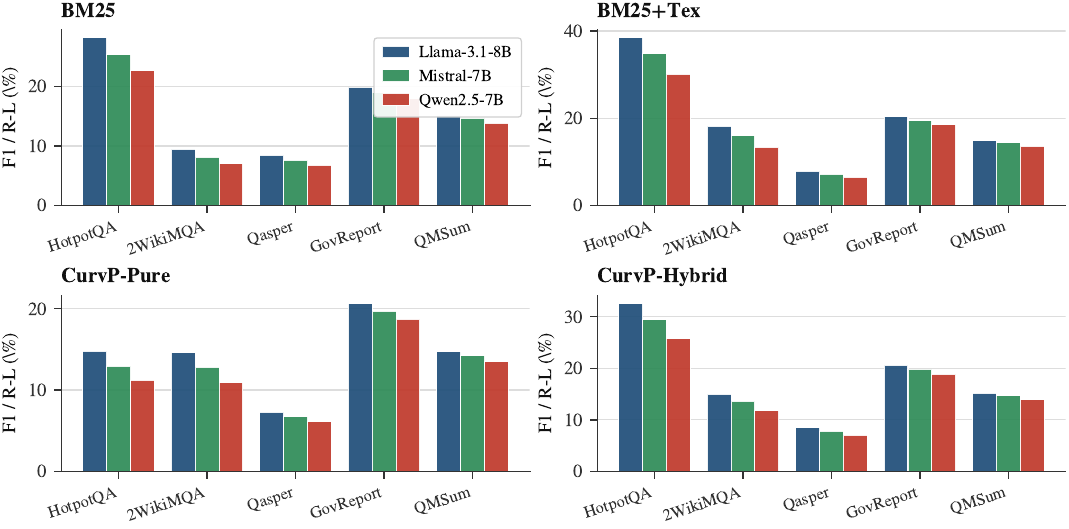}
  \caption{\textbf{Generator portability across LongBench tasks ($B{=}2048$, $N{=}50$).}
  Same Texture (Contextual Ricci) $\kappa^{\mathrm{Tex}}$ pipeline; only the generator changes.
  Within each method, the relative ordering of generators is consistent across tasks, indicating that Texture's curvature signal is generator-agnostic at the pipeline level.}
  \label{fig:multi-llm-grouped}
\end{figure*}

\paragraph{Paired statistical comparisons.}
Table~\ref{tab:paired-deltas} reports paired improvements ($\Delta$) when adding Texture to baseline methods at $N{=}50$.
For each comparison we report base mean, +Texture mean, and the difference.
The $\checkmark$ marker flags a practical effect rather than a formal significance test: a per-cell $\Delta \geq 1.0$ absolute is approximately one standard error of the bootstrap mean given the per-example variances we observe.

\IfFileExists{artifacts/experiments/appendix/tables/paired_deltas.tex}{
  \begin{table}[t]
\centering
\caption{\textbf{Texture helps on every paired comparison.}
Per-task $\Delta\%$ = relative F1 gain of the Texture-enhanced method over its baseline ($B{=}2048$ pruning, $k{=}5$ routing); $\checkmark$ marks an absolute gain above the $1.0$-point noise floor at $N{=}50$. The two pruning pairs (a,b) and two routing pairs (c,d) all show large relative gains on the multi-hop QA tasks, with the non-multi-hop tasks within noise.}
\label{tab:paired-deltas}
\footnotesize
\setlength{\tabcolsep}{4.5pt}
\renewcommand{\arraystretch}{1.15}

\begin{subtable}[t]{0.48\linewidth}
\centering
\caption{Pruning: BM25 $\to$ BM25+Texture}
\begin{tabular}{@{}l ccc c@{}}
\toprule
\rowcolor{texHeader!12}
\textbf{Task} & \textbf{Base} & \textbf{+Tex.} & $\boldsymbol{\Delta\%}$ & $\boldsymbol{\ge}\mathbf{1}$ \\
\midrule
HotpotQA & 28.1 & 38.4 & \cellcolor{texPos!12}\textbf{\textcolor{texPos!85!black}{+36.7\%}} & \textcolor{texPos!80!black}{$\checkmark$} \\
\rowcolor{texRowAlt}
2WikiMQA & 9.4 & 18.2 & \cellcolor{texPos!12}\textbf{\textcolor{texPos!85!black}{+93.6\%}} & \textcolor{texPos!80!black}{$\checkmark$} \\
Qasper & 8.4 & 7.9 & \textcolor{texDelta!85!black}{$-$6.0\%} & -- \\
\rowcolor{texRowAlt}
GovReport & 19.8 & 20.3 & \textcolor{texPos!85!black}{+2.5\%} & -- \\
QMSum & 15.1 & 14.9 & \textcolor{texDelta!85!black}{$-$1.3\%} & -- \\
\bottomrule
\end{tabular}
\end{subtable}
\hfill
\begin{subtable}[t]{0.48\linewidth}
\centering
\caption{Pruning: CurvPrune-Pure $\to$ Hybrid}
\begin{tabular}{@{}l ccc c@{}}
\toprule
\rowcolor{texHeader!12}
\textbf{Task} & \textbf{Base} & \textbf{+Tex.} & $\boldsymbol{\Delta\%}$ & $\boldsymbol{\ge}\mathbf{1}$ \\
\midrule
HotpotQA & 14.8 & 32.5 & \cellcolor{texPos!12}\textbf{\textcolor{texPos!85!black}{+119.6\%}} & \textcolor{texPos!80!black}{$\checkmark$} \\
\rowcolor{texRowAlt}
2WikiMQA & 14.6 & 15.0 & \textcolor{texPos!85!black}{+2.7\%} & -- \\
Qasper & 7.3 & 8.5 & \cellcolor{texPos!12}\textbf{\textcolor{texPos!85!black}{+16.4\%}} & \textcolor{texPos!80!black}{$\checkmark$} \\
\rowcolor{texRowAlt}
GovReport & 20.6 & 20.6 & +0.0\% & -- \\
QMSum & 14.7 & 15.2 & \textcolor{texPos!85!black}{+3.4\%} & -- \\
\bottomrule
\end{tabular}
\end{subtable}

\vspace{0.9em}

\begin{subtable}[t]{0.48\linewidth}
\centering
\caption{Routing: FLARE $\to$ FLARE+Texture}
\begin{tabular}{@{}l ccc c@{}}
\toprule
\rowcolor{texHeader!12}
\textbf{Task} & \textbf{Base} & \textbf{+Tex.} & $\boldsymbol{\Delta\%}$ & $\boldsymbol{\ge}\mathbf{1}$ \\
\midrule
HotpotQA & 39.1 & 45.6 & \cellcolor{texPos!12}\textbf{\textcolor{texPos!85!black}{+16.6\%}} & \textcolor{texPos!80!black}{$\checkmark$} \\
\rowcolor{texRowAlt}
2WikiMQA & 38.2 & 41.5 & \cellcolor{texPos!12}\textbf{\textcolor{texPos!85!black}{+8.6\%}} & \textcolor{texPos!80!black}{$\checkmark$} \\
\bottomrule
\end{tabular}
\end{subtable}
\hfill
\begin{subtable}[t]{0.48\linewidth}
\centering
\caption{Routing: Self-Route $\to$ +Texture}
\begin{tabular}{@{}l ccc c@{}}
\toprule
\rowcolor{texHeader!12}
\textbf{Task} & \textbf{Base} & \textbf{+Tex.} & $\boldsymbol{\Delta\%}$ & $\boldsymbol{\ge}\mathbf{1}$ \\
\midrule
HotpotQA & 45.4 & 47.6 & \cellcolor{texPos!12}\textbf{\textcolor{texPos!85!black}{+4.8\%}} & \textcolor{texPos!80!black}{$\checkmark$} \\
\rowcolor{texRowAlt}
2WikiMQA & 37.7 & 40.8 & \cellcolor{texPos!12}\textbf{\textcolor{texPos!85!black}{+8.2\%}} & \textcolor{texPos!80!black}{$\checkmark$} \\
\bottomrule
\end{tabular}
\end{subtable}
\end{table}
}{}

\subsection{Multi-Extractor Existence}
\label{app:existence-multi-extractor}

Table~\ref{tab:existence-multi-extractor-cr} reports the existence certificates ($h_i$ and CEI medians) across five frozen extractor families at $N{=}1000$ WikiText-2 slots, using the holonomy aggregation of \S\ref{app:holonomy}, the pinned reference $s_{\mathrm{ref}} := \arg\max \mu_i(0,0)$, and the 2-anchor candidate-set construction. The qualitative pattern (Real $>$ Swap) holds for all 5 extractors on holonomy and 3 of 5 on CEI; \texttt{roberta-base} and \texttt{roberta-large} show small reversed CEI deltas ($-0.02$ and $-0.23$ in median CEI), which we show explicitly in Figure~\ref{fig:existence-multi-forest}.
Figures~\ref{fig:existence-multi-bars} and~\ref{fig:existence-multi-forest} visualize the same information.

\begin{table}[h!]
  \centering
  \caption{\textbf{Multi-extractor existence certificates.} Holonomy and CEI medians across five frozen extractor families; $N{=}1000$ WikiText-2 slots.}
  \label{tab:existence-multi-extractor-cr}
  \IfFileExists{artifacts/experiments/appendix/tables/existence_multi_extractor_camera_ready.tex}{
    \centering
\footnotesize
\setlength{\tabcolsep}{6pt}
\renewcommand{\arraystretch}{1.2}
\begin{tabular}{@{}l rrr rrr@{}}
\toprule
\rowcolor{texHeader!12}
 & \multicolumn{3}{c}{\textbf{Median} $\boldsymbol{h_i}$} & \multicolumn{3}{c}{\textbf{Median CEI (nats)}} \\
\cmidrule(lr){2-4}\cmidrule(lr){5-7}
\rowcolor{texHeader!8}
\textbf{Extractor} & \textbf{Real} & \textbf{Swap} & \textbf{Shuffle} & \textbf{Real} & \textbf{Swap} & \textbf{Shuffle} \\
\midrule
\texttt{distilroberta-base} & \cellcolor{texBestBg}\textbf{0.601} & 0.294 & 0.360 & \cellcolor{texBestBg}\textbf{1.556} & 0.621 & 1.276 \\
\rowcolor{texRowAlt}
\texttt{roberta-base} & \cellcolor{texBestBg}\textbf{1.384} & 0.992 & 1.068 & 3.848 & 3.866 & 4.624 \\
\texttt{roberta-large} & \cellcolor{texBestBg}\textbf{1.282} & 0.972 & 1.037 & 2.627 & 2.860 & 3.492 \\
\rowcolor{texRowAlt}
\texttt{bert-base-uncased} & \cellcolor{texBestBg}\textbf{2.218} & 2.109 & 2.067 & \cellcolor{texBestBg}\textbf{2.346} & 0.921 & 1.522 \\
\texttt{albert-base-v2} & \cellcolor{texBestBg}\textbf{3.339} & 2.680 & 3.013 & \cellcolor{texBestBg}\textbf{15.891} & 15.007 & 15.815 \\
\bottomrule
\end{tabular}
\par\vspace{2pt}{\tiny \colorbox{texBestBg}{\rule{4pt}{4pt}}\,Real $>$ Swap (non-flatness witnessed).}
  }{(table file missing)}
\end{table}

\begin{figure}[h!]
  \centering
  \includegraphics[width=\columnwidth]{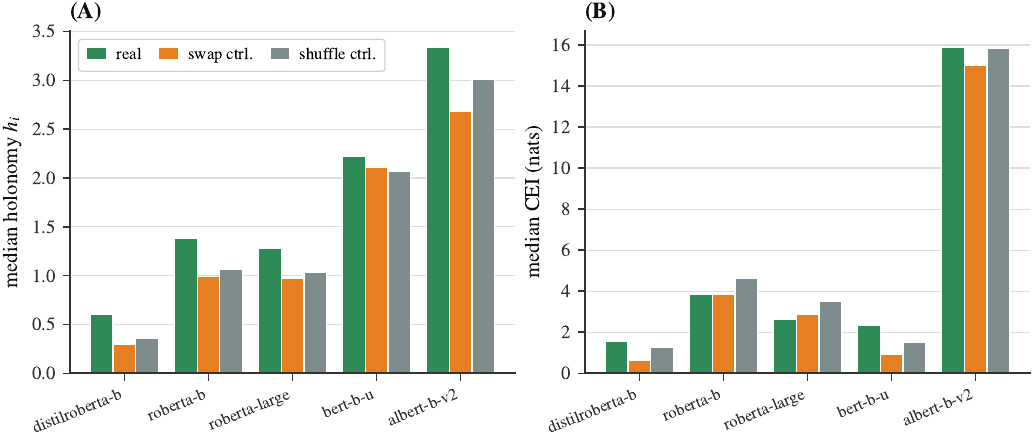}
  \caption{\textbf{Existence certificates across five frozen extractors ($N{=}1000$).}
  Real (green) is consistently above both controls on holonomy across all five extractors; CEI shows the same Real $>$ Swap pattern for 3 of 5 extractors, with \texttt{roberta-base} and \texttt{roberta-large} showing small reversals (shown explicitly in Figure~\ref{fig:existence-multi-forest}).}
  \label{fig:existence-multi-bars}
\end{figure}

\begin{figure}[h!]
  \centering
  \includegraphics[width=0.85\columnwidth]{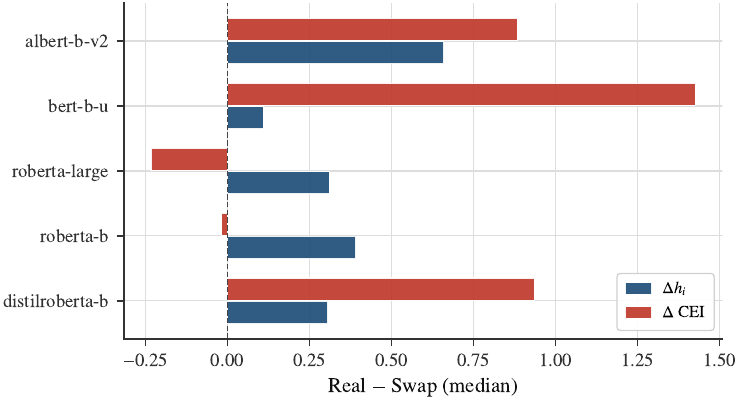}
  \caption{\textbf{Forest plot: Real $-$ Swap medians across extractors.}
  $\Delta h_i$ (\textcolor{texHi!85!black}{navy}) is positive across all five extractors; $\Delta$CEI (\textcolor{texDelta!85!black}{rust}) is positive for 3/5 (\texttt{roberta-base} and \texttt{roberta-large} are small reversals).}
  \label{fig:existence-multi-forest}
\end{figure}

\subsection{Contextual Ricci Texture Distribution}
\label{app:kappa-distribution}

A central property of the operator is that it is genuinely \emph{two-sided} on natural text: both focus ($\kappa^{\mathrm{Tex}}{>}0$) and fan-out ($\kappa^{\mathrm{Tex}}{<}0$) occur at slot resolution, with fan-out modestly more common on WikiText-2. Figure~\ref{fig:kappa-ecdf} shows the distribution: it is centered near zero (median $\approx 0$) and spans both signs, so the two regimes are equally addressable as control primitives rather than collapsing to a single sign.

\begin{figure}[h!]
  \centering
  \includegraphics[width=0.82\columnwidth]{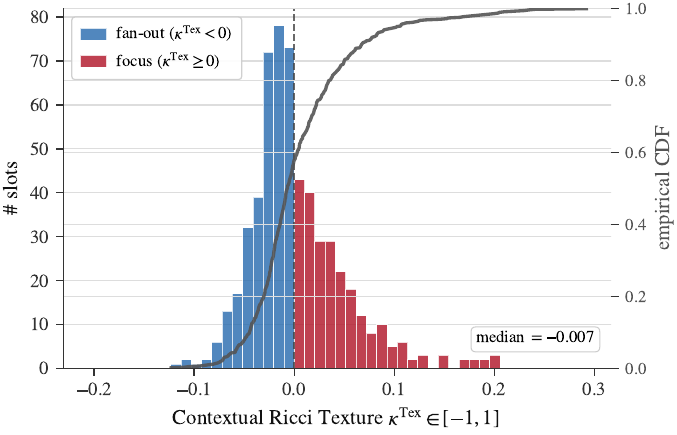}
  \caption{\textbf{Contextual Ricci Texture is signed and two-sided.}
  Per-position distribution of $\kappa^{\mathrm{Tex}}$ on WikiText-2 (histogram, with empirical CDF overlaid in gray): the field places substantial mass on both \textcolor{texDelta!85!black}{focus} ($\kappa^{\mathrm{Tex}}{\geq}0$) and \textcolor{texHi!85!black}{fan-out} ($\kappa^{\mathrm{Tex}}{<}0$) slots and is centered near zero, so focus and fan-out are both first-class, addressable regimes.}
  \label{fig:kappa-ecdf}
\end{figure}

\paragraph{The sign balance is corpus-dependent, not built in.}
A natural worry is that a near-balanced focus/fan-out split is an artifact of the operator rather than a property of the text. It is not: the balance varies markedly across corpora (Figure~\ref{fig:kappa-datasets}). On WikiText-2 the field is fan-out-leaning ($41\%$ focus), whereas on the multi-hop QA contexts of HotpotQA it is clearly focus-dominant ($62\%$ focus); the remaining LongBench tasks fall in between (2WikiMQA $58\%$, QMSum $56\%$, Qasper $55\%$, GovReport $48\%$). Because the same operator produces qualitatively different distributions on different text---rather than always returning $\approx\!50/50$---the sign reflects genuine structure in the input, not a definitional constant.

\begin{figure*}[t]
  \centering
  \includegraphics[width=\textwidth]{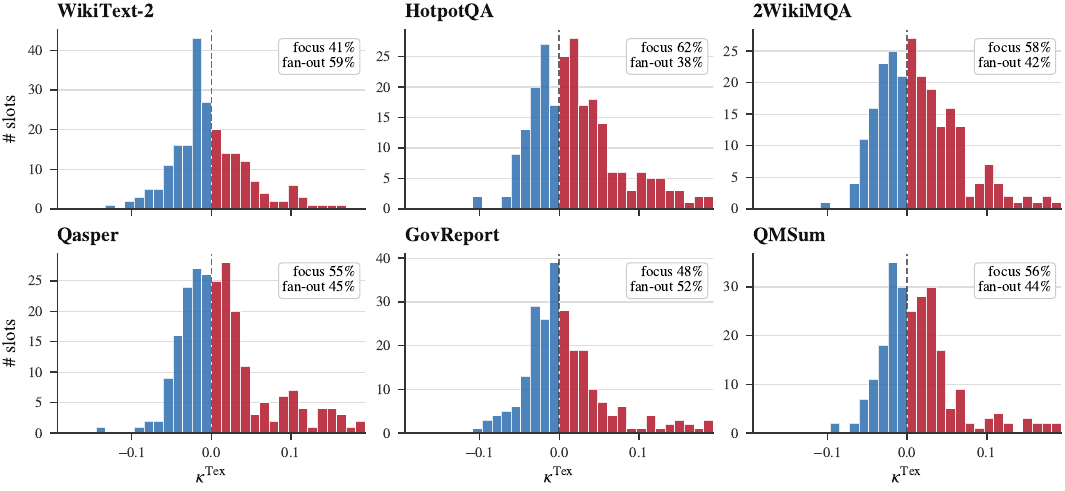}
  \caption{\textbf{The focus/fan-out balance varies by corpus.}
  Per-slot $\kappa^{\mathrm{Tex}}$ histograms across WikiText-2 and five LongBench tasks (\textcolor{texHi!85!black}{fan-out} $<0$, \textcolor{texDelta!85!black}{focus} $\geq 0$; dashed line at $0$). The dominant sign shifts with the corpus---fan-out-leaning on WikiText-2 ($41\%$ focus) versus focus-dominant on HotpotQA ($62\%$ focus)---showing that the signed readout tracks input structure rather than collapsing to a fixed balance.}
  \label{fig:kappa-datasets}
\end{figure*}

\begin{figure*}[t]
  \centering
  \includegraphics[width=\textwidth]{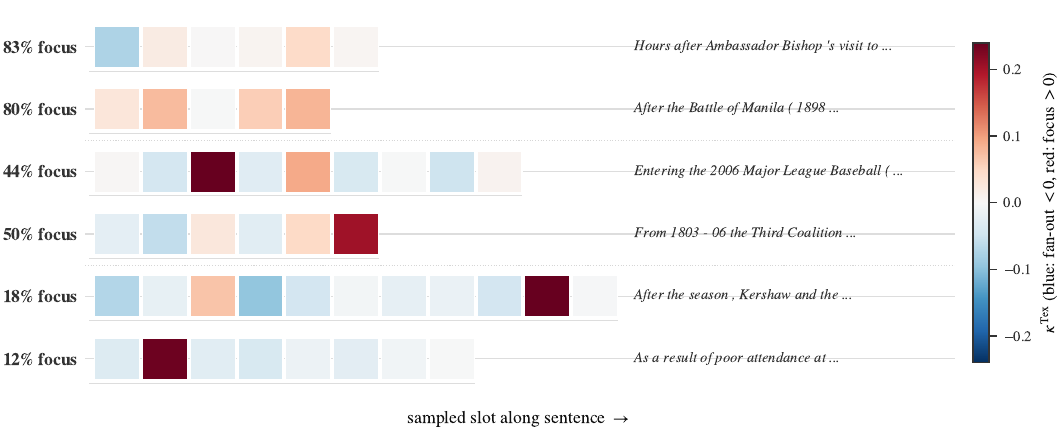}
  \caption{\textbf{Per-position $\kappa^{\mathrm{Tex}}$ atlas.}
  Six WikiText-2 sentences spanning the range of per-sentence behaviour, from focus-dominant (top) through mixed to fan-out-dominant (bottom). Each row is a \emph{curvature strip}: one cell per sampled slot in reading order, colored by the signed $\kappa^{\mathrm{Tex}}$ (\textcolor{texHi!85!black}{blue}${=}$fan-out, \textcolor{texDelta!85!black}{red}${=}$focus) on a diverging scale around zero; the left tag gives each sentence's focus fraction. Both regimes co-occur \emph{within} sentences, and the per-sentence focus fraction ranges from $12\%$ to $83\%$, underscoring that the sign is input-driven.}
  \label{fig:sentence-atlas}
\end{figure*}

\subsection{Transversal Texture Experiments}
\label{app:transversal-experiments}

We instantiate Transversal Texture (App.~\ref{app:texture-transversal}) using two relational extractors: a Universal-Dependencies parse (head-deprel edges) and OpenIE-style subject-verb-object triples derived from the same parse.
The state space $\mathcal{S}_i^{\perp}$ is the tagged disjoint union of nodes in the anchor neighborhood ($W{=}4$ tokens) for each source, plus a tail bucket; the fused affinity $G_i^{\perp}=\sum_m \alpha_m W^{(m)} + W^{\mathrm{anch}} + \tau\mathbf{1}\mathbf{1}^{\top}$ uses $\alpha_{\mathrm{dep}}=\alpha_{\mathrm{svo}}=0.5$, $\eta_{\mathrm{anch}}=0.3$, $\tau=10^{-3}$.
One-sided beliefs $\mu_i^{L,\perp}, \mu_i^{R,\perp}$ come from prefix-only / suffix-only DistilRoBERTa encoder queries against pooled anchor embeddings.
We report the per-slot rank agreement between the longitudinal curvature magnitude $|\kappa^{\parallel}|$ and the transversal activity $\kappa^{\perp}$ on $200$ WikiText-2 slots (Table~\ref{tab:transversal-vs-longitudinal-cr}, Figure~\ref{fig:long-vs-trans-scatter}), together with the dependency/SVO extraction overhead. The two readouts are only weakly correlated (Spearman $\rho\approx0.20$), so the transversal view supplies complementary relational structure rather than restating the longitudinal signal. The construction reuses the same transport primitives as the longitudinal solver; its definition is in Appendix~\ref{app:texture-transversal}.

\begin{table}[h!]
  \centering
  \caption{\textbf{Transversal vs.\ longitudinal Texture.} Per-slot agreement between longitudinal curvature magnitude $|\kappa^{\parallel}|$ and the nonnegative transversal activity $\kappa^{\perp}$ on $N{=}200$ WikiText-2 slots: their rank orderings are only weakly correlated (Spearman $\rho\approx0.20$), so the two views give largely complementary structural information.}
  \label{tab:transversal-vs-longitudinal-cr}
  \IfFileExists{artifacts/experiments/appendix/tables/transversal_vs_longitudinal_camera_ready.tex}{
    \centering
\footnotesize
\setlength{\tabcolsep}{8pt}
\renewcommand{\arraystretch}{1.2}
\begin{tabular}{@{}l rr@{}}
\toprule
\rowcolor{texHeader!12}
\textbf{Statistic} & \textbf{Long.}\ $\boldsymbol{|\kappa^\parallel|}$ & \textbf{Trans.}\ $\boldsymbol{\kappa^\perp}$ \\
\midrule
Median & 0.026 & 0.330 \\
\rowcolor{texRowAlt}
Mean & 0.040 & 0.306 \\
95th percentile & 0.126 & 0.559 \\
\midrule
Spearman $\rho$ ($|\kappa^\parallel|$ vs $\kappa^\perp$) & \multicolumn{2}{c}{\cellcolor{texDelta!12}\textbf{\textcolor{texDelta!85!black}{0.195}}} \\
\rowcolor{texRowAlt}
Kendall $\tau$ & \multicolumn{2}{c}{0.126} \\
\midrule
Cost per slot (ms) & 1031.1 & 747.4 \\
\rowcolor{texRowAlt}
Trans.\ extraction share (\%) & \multicolumn{2}{c}{96.5} \\
\bottomrule
\end{tabular}
\par\vspace{2pt}{\tiny \colorbox{texDelta!12}{\rule{4pt}{4pt}}\,Weak rank correlation ($\rho\approx0.20$): the two views are largely complementary. Transversal activity $\kappa^\perp$ is nonnegative by construction; we compare its rank ordering against longitudinal curvature magnitude $|\kappa^\parallel|$.}
  }{(table file missing)}
\end{table}

\begin{figure}[h!]
  \centering
  \includegraphics[width=0.5\columnwidth]{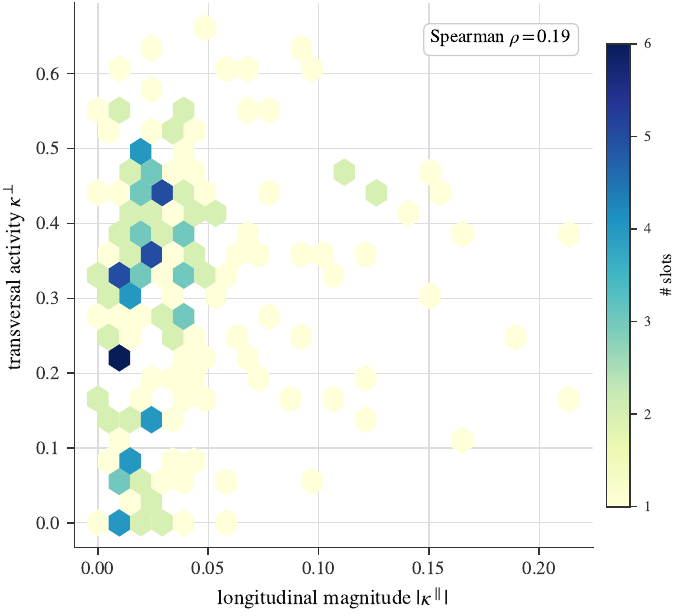}
  \caption{\textbf{Longitudinal magnitude vs.\ transversal activity.}
  Per-slot hexbin density of $|\kappa^{\parallel}|$ against $\kappa^{\perp}$ on $N{=}200$ WikiText-2 slots. Slots spread across the plane rather than concentrating along any monotone curve, consistent with the weak rank correlation ($\rho\approx0.20$): the two readouts respond to different aspects of two-sided structure.}
  \label{fig:long-vs-trans-scatter}
\end{figure}

\section{Additional Experimental Details}
\label{app:experiments}

This appendix provides full experimental protocol details and additional context for reproducibility.

\subsection{Compute Environment and Reproducibility}
\label{app:compute}

All runs use fixed random seeds and fully specified configuration files recording dataset versions, model identifiers and revisions, prompt templates, budgets, hyperparameters, and decoding settings.
Geometric quantities are computed using a frozen MLM belief oracle (\texttt{distilroberta-base}).
Downstream generation uses \texttt{Llama-3.1-8B-Instruct}~\citep{grattafiori2024llama} for the main pruning and routing tables.
The multi-generator portability table (Table~\ref{tab:multi-llm}) additionally reports \texttt{Mistral-7B-Instruct-v0.3}~\citep{Jiang2023Mistral7} and \texttt{Qwen2.5-7B-Instruct}~\citep{qwen2024qwen25} on the same $B{=}2048$, $N{=}50$ settings.
Curvature and belief computations are cached per (query, context, configuration, model revision) key and reused across budget sweeps, and deterministic computation is used where supported.

\subsection{Token Accounting and Budget Matching}
\label{app:token-accounting}

For pruning we enforce a context-token budget $B$ that counts only the selected evidence/context tokens; instruction boilerplate is held fixed across methods and excluded from $B$.
Token counts are computed with the downstream generator tokenizer to reflect actual inference cost.
We enforce budget fill by requiring budgeted methods to use at least $0.98B$ tokens on average.
For routing, total-token cost includes prompt, selected/retrieved context, and generated output up to the maximum new-token cap.
We select the best hyperparameter setting for each routing method subject to $\mathbb{E}[\text{tokens}] \le T$; we use $T=2048$ in the main paper.

\subsection{Statistical Reporting}
\label{app:stats}

We compute 95\% bootstrap confidence intervals with 1000 resamples.
For paired comparisons (baseline vs.\ baseline+Texture), we compute per-example deltas and bootstrap the mean delta to obtain paired intervals.
At our headline sample size of $N{=}50$ examples per (method, task, budget), the per-example variance on multi-hop QA tasks is large (per-example F1 standard deviation $\approx 0.4$), giving half-widths of $\pm 8$--$14$ F1 points on HotpotQA / 2WikiMQA; differences of $\geq 1.0$ absolute F1 between methods are roughly the noise floor in our paired tables.
We report differences with this noise floor in mind: the main results tables (Tables~\ref{tab:exp-prune-main}--\ref{tab:exp-route-main}) give point estimates together with the relative Texture effect ($\Delta\%$), and Table~\ref{tab:paired-deltas} marks paired baseline-vs-Texture gains above the $\geq 1.0$ absolute threshold with a $\checkmark$ (a practical-effect marker, not a formal significance test).
For the broader appendix sweeps (Table~\ref{tab:prune-full}, Table~\ref{tab:prune-budget-sweep}) we report point estimates only; all per-example traces are released alongside the code, enabling paired intervals to be recomputed at finer granularity.

\subsection{Part I (Existence) Experimental Details}
\label{app:existence-additional}

We sample slots from WikiText-2 (raw) and OpenWebText-10k following the Part~I protocol for slot selection and context windows.
Beliefs $\mu^L,\mu^R$ are computed with the frozen MLM belief oracle and converted into log-odds updates as described in Section~\ref{sec:existence}.
We evaluate two control conditions: (i) swap controls that exchange left/right contexts across examples and (ii) shuffle controls that permute words within local windows, preserving marginal statistics but breaking coherent composition.
Natural text exhibits systematic deviations from flatness nulls, with elevated holonomy magnitudes and non-trivial CEI values, while both controls collapse toward flat behavior.

\subsection{Part II (Definition) Stability}
\label{app:definition-additional}

The Texture curvature field is stable across modeling choices.
We verified stability with respect to: (i) support size $k \in \{10, 20, 50, 100\}$, showing high rank correlation across choices on $N{=}500$ slots (mean pairwise Spearman $\rho{=}0.83$, mean sign agreement $96.4\%$); (ii) kernel temperature $\varepsilon_{\mathrm{temp}}$, with consistent sign patterns across a sweep $\varepsilon_{\mathrm{temp}}\in\{0.05, 0.1, 0.2, 0.5\}$; and (iii) context radii $(L,R)$ phase diagrams ($L{=}4$ vs $L{=}8$ Spearman $\rho{=}0.84$) confirming that curvature emerges specifically from two-sided coherent composition.
The Sinkhorn solver underlying the transport divergence satisfies the expected numerical properties: endpoint marginal residuals are below $10^{-6}$, detailed balance holds for the reversible kernel, endpoint-swap symmetry holds up to numerical tolerance, and the primal and dual Sinkhorn objectives agree to solver tolerance.

\subsection{Part III (Utility) Extended Details}
\label{app:utility-extended}

We follow LongBench's standardized format and evaluation scripts; prompt templates, decoding parameters, and answer post-processing rules are released alongside the code.
Query-aware span selection uses BM25; learned compression baselines include Selective Context~\citep{li2023compressing} and LLMLingua~\citep{Jiang2023LLMLinguaCP}; routing baselines include FLARE~\citep{jiang2023activeretrievalaugmentedgeneration}, Self-Route~\citep{Li2024RetrievalAG}, and a Fixed-$k$ retrieval baseline at $k{=}5$.
Hyperparameters: CurvPrune-Pure uses the span score of Eq.~\eqref{eq:utility-span-score} with $(w_-, w_+) = (1.0, 0.25)$; CurvPrune-Hybrid adds a BM25 relevance prior with curvature mixing weight $\lambda_{\mathrm{curvature}}{=}0.5$; BM25+Texture refines a BM25 candidate set by Texture-based budget allocation; and CurvFlag, FLARE+Texture, and SelfRoute+Texture use the fan-out mass of Eq.~\eqref{eq:utility-fanout-mass} as the retrieval trigger in place of the base confidence signal.

\end{document}